\theoremstyle{plain}
\newtheorem{theorem}{Theorem}[section]
\newtheorem{proposition}[theorem]{Proposition}
\newtheorem{lemma}[theorem]{Lemma}
\newtheorem{corollary}[theorem]{Corollary}
\theoremstyle{definition}
\newtheorem{definition}[theorem]{Definition}
\theoremstyle{remark}
\newtheorem{remark}[theorem]{Remark}
\newcommand{\Ea}[1]{\E\left[#1\right]}
\newcommand{\Eb}[2]{\E_{#1}\left[#2\right]}
\renewcommand{\vec}[1]{\boldsymbol{#1}}
  \providecommand{\R}{\mathbb{R}} % Reals
  \providecommand{\N}{\mathbb{N}} % Naturals
  \def\sign{\@ifnextchar*{\@sgnargscaled}{\@ifnextchar[{\sgnargscaleas}{\@ifnextchar{\bgroup}{\@sgnarg}{\sgn} }}}
  \def\@sgnarg#1{\sgn\rbr{#1}}
  \def\@sgnargscaled#1{\sgn\rbr*{#1}}
  \def\@sgnargscaleas[#1]#2{\sgn\rbr[#1]{#2}}
  \providecommand{\0}{\bm{0}}
\providecommand{\E}{\mathbb{E}}%
  \providecommand{\cO}{\mathcal{O}}
\newcommand{\speedup}[1]{{\color{gray}(\ifdim #1 pt > 0.3pt #1\else $< #1$\fi{}$\times$)}}
\newsavebox{\@brx}
\newcommand{\llangle}[1][]{\savebox{\@brx}{\(\m@th{#1\langle}\)}%
  \mathopen{\copy\@brx\mkern2mu\kern-0.9\wd\@brx\usebox{\@brx}}}
\newcommand{\rrangle}[1][]{\savebox{\@brx}{\(\m@th{#1\rangle}\)}%
  \mathclose{\copy\@brx\mkern2mu\kern-0.9\wd\@brx\usebox{\@brx}}}
\providecommand{\abs}[1]{\left\lvert#1\right\rvert}
\providecommand{\norm}[1]{\left\lVert#1\right\rVert}
  \providecommand{\R}{\mathbb{R}} %
  \providecommand{\N}{\mathbb{N}} %
  \providecommand{\Eb}[1]{{\mathbb E}\left[#1\right] }
  \providecommand{\0}{\mathbf{0}}
  \providecommand{\cO}{\mathcal{O}}
\renewcommand{\vec}{\mathbf}
\providecommand{\mycomment}[3]{\todo[caption={},size=footnotesize,color=#1!20]{\textbf{#2: }#3}}%
\providecommand{\inlinecomment}[3]{%
  {\color{#1}#2: #3}}%
\newcommand\commenter[2]%
\newcommand\csname i#1\endcsname[1]{\inlinecomment{#2}{#1}{##1}}
\newcommand\csname #1\endcsname[1]{\mycomment{#2}{#1}{##1}}
  \definecolor{mydarkblue}{rgb}{0,0.08,0.45}
\def\max{{\rm max}}
\def\whp{{\rm w.h.p.}}
\author[1,2]{Yatin Dandi}
\author[2]{Emanuele Troiani}
\author[1]{Luca Arnaboldi}
\author[1]{Luca Pesce}
\author[2]{\\ Lenka Zdeborová}
\author[1]{Florent Krzakala}
\affil[1]{Information Learning and Physics Laboratory, \'Ecole Polytechnique F\'ed\'erale de Lausanne (EPFL)}
\affil[2]{Statistical Physics Of Computation Laboratory, \'Ecole Polytechnique F\'ed\'erale de Lausanne (EPFL)}
\title{The Benefits of  Reusing Batches for Gradient Descent in Two-Layer Networks: Breaking the Curse of Information and Leap Exponents}
\begin{document}
\date{}
\maketitle

\begin{abstract}
We investigate the training dynamics of two-layer neural networks when learning multi-index target functions.
We focus on multi-pass gradient descent (GD) that reuses the batches multiple times and show that it significantly changes the conclusion about which functions are learnable compared to single-pass gradient descent. 
In particular, multi-pass GD with finite stepsize is found to overcome the limitations of gradient flow and single-pass GD given by the information exponent \citep{BenArous2021} and leap exponent \citep{abbe2023sgd} of the target function. We show that upon re-using batches, the network achieves in just {\it two} time steps an overlap with the target subspace even for functions not satisfying the staircase property \citep{abbe2021staircase}.
We characterize the (broad) class of functions efficiently learned in finite time.
The proof of our results is based on the analysis of the Dynamical Mean-Field Theory (DMFT). 
We further provide a closed-form description of the dynamical process of the low-dimensional projections of the weights, and numerical experiments illustrating the theory.
\end{abstract}
\section{Introduction}
\label{sec:main:introduction}

Recent years have witnessed significant theoretical advancements in understanding the dynamics of training neural networks using gradient descent, to unravel the learning mechanisms of these networks, particularly how they adapt to data and identify pivotal features for predicting the target function. Significant progress has been made over the last few years in the case of two-layer networks, in large part thanks to the so-called mean-field analysis \citep{mei2018mean,chizat2018global,rotskoff2018trainability,sirignano2020mean}). Most of the theoretical efforts, in particular, focused either on {\it one-pass} optimization algorithms, where {\it each iteration involves a new fresh batch of data}, or to the limit of gradient flow in the population loss. For high-dimensional synthetic Gaussian data, and a low dimensional target function (a multi-index model), the class of functions efficiently learned by these \textit{one-pass} methods has been thoroughly analyzed in a series of recent works, and have been shown to be limited by the so-called information exponent \citep{BenArous2021} and leap exponent \citep{abbe2022merged, abbe2023sgd} of the target. These analyses have sparked many follow-up theoretical works over the last few months, see, e.g. \citep{damian2022neural,damian_2023_smoothing,dandi2023twolayer,bietti2023learning,ba2023learning, moniri2023theory, mousavihosseini2023gradientbased,zweig2023symmetric}.

However, a common practice in machine learning involves repeatedly traversing the same mini-batch of data. This paper, therefore, aims to go beyond the constraints of single-pass algorithms and to evaluate whether multiple-pass training overcomes these inherent flaws of single-pass methods. We focus on gradient descent, certainly the most straightforward procedure in this family. The theoretical framework we use to prove our main results is based on Dynamical Mean Field Theory (DMFT), which was developed in the statistical physics community \citep{Sompolinsky_88} to analyze correlated systems, and recently made rigorous in the context of high dimensional machine-learning problems in \citep{celentano2021highdimensional, gerbelot2023rigorous}.

Our findings significantly alter the prevailing narrative in the literature. We demonstrate that gradient descent {\it surpasses} the limitations imposed by the information and leap exponents, achieving a positive correlation with the target function for a much broader class than the staircase functions \citep{abbe2021staircase}, even with minimal (that is, two) repetition of data batches. We characterize the (broad) class of functions efficiently learned in finite time. Among the exceptions are {\it symmetric} functions that remain a challenge due to their extended symmetry-breaking times (a natural feature of physical dynamics \citep{bouchaud1998out}). 

With independent Gaussian datapoints as inputs, both one-pass SGD and gradient flow on population loss result in pre-activations remaining distributed as Gaussian random variables. This Gaussianity underlies the analysis of such settings,  starting from the seminal work of \cite{saad.solla_1995_line}.
In contrast, upon re-using batches, the pre-activations develop non-Gaussian components correlated with the targets.  This non-Gaussianity is a crucial aspect of the stark contrast in the learning of directions compared to the one-pass setting. While we establish our results for discrete steps with extensive batches of size $\cO(d)$ where $d$ denotes the input dimension, we expect our conclusions about the learning of new directions to also hold while performing gradient flow on the empirical loss, since such a setup would lead to the development of similar non-Gaussian components in the pre-activations, in contrast to gradient flow on the population loss where the pre-activations remain Gaussian.

Our results demonstrate that contrary to the common wisdom ``the more data the better", gradient descent on the same batch can surpass one-pass SGD on different batches, even when one-pass SGD utilizes a larger number of datapoints. More generally, we believe that our analysis provides insights into incremental learning of features in the presence of correlations between datapoints across batches, which is typical in most high dimensional datasets having a small number of ``latent" factors. Our conclusions follow from a rigorous mathematical proof rooted in DMFT, which we also use to provide an analytic description of the dynamic processes of low-dimensional weight projections. This analysis has interest on its own.
\looseness=-1

\section{Setting and main contributions}
\label{sec:main:setting}
 Let $\mathcal{D}$ be the set of data $\{\vec{z}_{\nu} \in {\mathbb R}^d, y_{\nu}\}_{\nu \in [n]}$. The input data $\vec{z}_{\nu}$ are taken as a standard i.i.d. Gaussian, while the labels are generated by a teacher, or target, function $y_\nu = f^\star(\vec z_\nu)$. We consider multi-index target function, dependent on a low-dimensional subspace of the input space:
 \looseness=-1 
\begin{align}
   y_\nu = f^\star(\vec z_\nu) = g^\star\left(\frac{W^\star \vec z_\nu}{\sqrt{d}}\right),\, \vec z_\nu \sim {\cal N}(0,{\mathbbm 1}_d)\,,
   \label{eq:main:teacher_def}
\end{align}  We assume for convenience that $W^\star = \{\vec w^\star_r\}_{r \in [k]} \in \mathbb{R}^{k \times d}$ is normalized row-wise on the sphere $\mathcal{S}_{d-1}(\sqrt{d})$, with orthogonal weights i.e $\langle \vec w^\star_l,\vec w^\star_m \rangle = 0$ for $l \neq m \in [r]$. 

The data are handled to a {\it two-layer} network (the student)~$f$ with first layer weights $W = \{\vec w_i \}_{i \in [p]} \in \mathbb{R}^{p \times d}$ ($p$ is the number of neurons in the hidden layer) and second layer weights $\vec a \in \mathbb{R}^p$
with an activation function $\sigma$, that is:
\begin{align}
    f\left( \frac{W\vec z} {\sqrt{d}}\right) = \sum_{j=1}^p a_j \sigma{\left(\frac{\langle \vec w_j, \vec z \rangle}{ \sqrt{d}}\right)}
\end{align}

Our main goal is to analyze the dynamics of gradient descent that minimizes the empirical Mean Squared Error (MSE) loss $\mathcal{L}$ at time $t \in [T]$:
\begin{align}
  {\cal R}_{\rm emprical} &=\sum_{\nu=1}^n\mathcal{L}\left(\frac{W^{(t)}\vec z_\nu} {\sqrt{d}},f^\star(\vec z_\nu)\right) \\
  &= \frac{1}{2} \sum_{\nu=1}^{n}\left[g^\star\left(\frac{W^\star \vec z_\nu}{\sqrt{d}}\right) - f\left( \frac{W^{(t)}\vec z_\nu} {\sqrt{d}}\right)\right]^2\,, \nonumber 
\end{align}
in the high-dimensional limit where $n,d \!\to\! \infty$ with $n/d \!=\! \alpha \!=\! \Theta(1)$. We use a common assumption that is amenable to rigorous theoretical guarantees: we keep the second layer weights $\vec a$ fixed at initialization. 
For convenience, we further impose the constraint of symmetric initialization common in such analyses \citep{dandi2023twolayer,damian2022neural}. Concretely, we assume that the number of neurons $p$ is even and the weights satisfy at initialization:
\begin{equation}\label{eq:init_sym}
    a_i = -a_{p-i+1}, \quad \vec{w}_i^0 = \vec{w}_{p-i+1}^0 \quad \text{for all } i \in [p/2],
\end{equation}
which ensures that the output $f\left( \frac{W^{(t)}\vec z_\nu} {\sqrt{d}}\right)$ equals $0$ at initialization.
For $i, \in [p/2]$, the weights are initialized as $a_i \sim \frac{1}{p}\mathcal{N}(0,1), \vec w_i^{(0)}\!\!\sim\!\! \mathcal{N}(0,{\mathbbm 1}_d)$
Subsequently, with $\vec{a}$ fixed, the first layer weights $W=\{\vec w_i\}_{i \in [p]}$ are learned using gradient descent, producing the following sequence of iterates up to a final time $T$:\looseness=-1
\begin{align} 
    \label{eq:GD_update}
    \vec w_i^{(\!t+1\!)}\!=\!(1\!-\!\eta\lambda)\vec w^{(\!t\!)}_i \!-\! \eta \!\sum_{\nu=1}^n\!\nabla_{\!\!\vec w^{(\!t\!)}_i}
    %_{\vec w^{(t)}_i} 
    \mathcal{L}\!\left(\!\frac{W^{(t)}\vec z_\nu} {\sqrt{d}},f^\star(\vec z_\nu)\!\!\right)
\end{align}
where $\eta\!\in\!\mathbb{R}$ is the learning rate and $\lambda \in \mathbb{R}$ is the explicit regularisation. We may refer to these steps as the \textit{representation learning} steps, in which the first layer weights learn how to adapt to the low dimensional structure identified by the teacher subspace $W^\star$.  
% \\ \lp{To include only if generalization is studied} In this manuscript we do not rigorously specify the implications of the representation learning step's efficiency — or lack thereof — when attempting to learn as well the non-linearity $g_\star$. 

Our \textbf{main contributions} in this paper are the following: 
\vspace{-0.3cm}
\begin{itemize}[noitemsep,leftmargin=1em,wide=0pt]
    \item We characterize the class of multi-index targets that can be learned efficiently by two-layer networks trained with a finite number of iterations of gradient descent in the high dimensional limit $(d \to \infty)$ with large batch sizes ($n = \alpha d, \alpha = O(1)$). We establish a strong separation between what can be learned with one-pass algorithms (that use new fresh batches at every step) and multi-pass gradient approaches that can 
    use the same batch many times (see Figs.     \ref{fig:main:single_index} and \ref{fig:main:multiple_index} for examples).
    \item We show that while both gradient flow \citep{bietti2023learning} and single-pass
    algorithms suffer from the curse of the information exponent \citep{BenArous2021}, and are limited to staircase learning \citep{abbe2023sgd}, requiring a diverging number of iterations for non-staircase functions, some of these problems become {\it trivial} when allowing reusing samples multiple times, and features can be learned in just $T=2$ iterations. This disproves, in particular, a recent conjecture by \citep{abbe2023sgd}.
    \item The simplest examples of directions that cannot be learned in a finite number of steps relate to symmetries in the target function. This includes phase retrieval \citep{maillard2020phase} or the specialization transition in committees, as discussed in the Bayes optimal approaches of single-index \citep{barbier2019optimal} and multi-index \citep{Aubin_2019} models.  
    \item The proof of our results is based on the concept of ``hidden progress", and crucially uses the rigorous Dynamical Mean Field Theory (DMFT) \citep{celentano2021highdimensional,gerbelot2023rigorous}. This has an interest on its own as it provides a sharp example of how DMFT can help to understand batch reusing to go beyond the current state-of-the-art results.
    %\citep{mignacco2021stochasticity,mignacco2020dynamical}.    
    \item Finally, we use DMFT to provide a closed-form description of the dynamics of gradient descent for two-layer nets. Keeping track of the correlations induced by re-using the same batch leads to a set of integro-differential equations.  We provide rigorous theoretical guarantees in the correlated samples regime without assuming the resampling of a fresh new batch for each iteration of the algorithm. We corroborate the theoretical claims with numerical simulations (See  \href{https://github.com/IdePHICS/benefit-reusing-batch}{\href{https://github.com/IdePHICS/benefit-reusing-batch}{https://github.com/IdePHICS/benefit-reusing-batch}.
}).  
\end{itemize}
\vspace{-0.5cm}

\paragraph{Other Related works --} A major issue in machine learning theory is figuring out how well two-layer neural networks adapt to low-dimensional structures in the data. Different results have tightly characterized the limitations of networks in which the first layer of weights $W$ is kept fixed, i.e. equivalent to kernel approaches \citep{Dietrich1999,Ghorbani2019, Ghorbani2020, bordelon20a,loureiro_learning_2021,Cui2021}. This class of learning algorithms, although amenable to theoretical analysis, is unable to learn features in the data. Therefore, one central avenue of research in this context is to understand the efficiency of the \textit{representation learning} (or feature learning) when training with gradient-based algorithms to overcome the limitations of the kernel regime. Sharp separation results between the performance of neural networks at initialization (random features) and trained with only one step of gradient descent (with a large learning rate) have been offered \citep{ba2022high, damian2022neural, dandi2023twolayer}. 

The class of features efficiently learned with multiple steps of one-pass SGD with one sample per batch is characterized by the \textit{information exponent} ($\rm{IE}$) \citep{BenArous2021} of the target function. In the context of single-index learning, denoting $\ell$  the $\rm{IE}$ of the target, the algorithm needs $T\!=\! O(d^{\ell -1})$  steps to perform weak recovery of the teacher direction, i.e., obtaining an overlap between learned weights and $\vec w^\star$ better than random guessing \citep{BenArous2021}. Recently, these results have been improved up to the Correlational Statistical Query (CSQ) lower bound of $d^{{\max(\frac{\ell}{2}},1)}$, by considering an appropriate smoothing of the loss \citep{damian_2023_smoothing}. A generalization to large batch one-pass SGD is in \citep{dandi2023twolayer}. 

Similarly, multi-index feature learning presents an unavoidable computational barrier for one-pass algorithms. \citep{abbe2021staircase} first characterizes a hierarchical picture of learning in the Boolean data case: informally, the features efficiently learned at each step of the one-pass algorithm need to be linearly connected with the previously learned features. This concept is formalized by the definition of the staircase property \citep{abbe2021staircase}. This hierarchical picture of learning is extended to large batches in the SGD and non-Boolean data in \citep{abbe2022merged,abbe2023sgd, dandi2023twolayer}. Moreover, \citep{abbe2023sgd} conjecture that re-using the batch can reduce the sample complexity of the target with leap $\ell$ only up to $O(d^{\max(\frac{\ell}{2},1)})$, corresponding to the lower bound for Correlational Statistical Query (CSQ) algorithms.

We disprove this conjecture and show that the sample complexity for a large class of functions can be reduced to $O(d)$ {\it independently} of the leap exponent $\ell$. More generally, our results show that CSQ lower bounds and the notions of staircase property and information exponent are limited to online-SGD on Gaussian/Boolean data,  and do not describe the class of functions inherently easy or hard to learn by gradient-based methods. We also show that learning non-even single-index functions does not require techniques such as spectral warm-start \citep{chen2020learning}.\looseness=-1

Dynamical Mean Field Theory has a long history in statistical physics. Early theories of dynamics in complex systems were pioneered in soft spin glass models \citep{SZipp81} and toy models of random feature deep networks \citep{Sompolinsky_88}. The DMFT approach used in this paper was first proposed as a way to study ``hard spins" in spin glass models \citep{Opper_92, Opper_94}, and was later generalized to ``soft spins" \citep{Cu02} and more realistic models in condensed matter \citep{GKKR96}. In the context of learning, DMFT was used for optimization problems \citep{mannelli2019afraid, SKUZ19, SBCKUZ20, mannelli2021just} and for analyzing the behavior and the noise of gradient-based algorithms \citep{mignacco2020dynamical, mignacco2021stochasticity, Mignacco_2022}.  From the mathematics point of view, these DMFT equations were first proven rigorously in the seminal work of \citep{arous1997symmetric} in the context of spin glasses. Important progress was achieved recently with rigorous proofs of the DMFT equations for multi-index models \citep{celentano2021highdimensional, gerbelot2023rigorous} that we use to prove our main results.\looseness=-1

%, and for detailed studies of GD  with a finite learning rate \citep{Mignacco_2022}.

% \et{Would you like to have also something mentioning we do a different DMFT from Pelevan and maybe also Saxe's work?}
% % \citep{CK93, arous1997symmetric, MKZ16, Fr07}
% \\ \et{not} 

%%% PREVIOUS %%%
% \paragraph{Sufficient statistics}
% Our study, like many other efforts \citep{ben2022high,saad.solla_1995_line}, is based on the concentration of the neurons' overlaps with the target subspace and their norms, rather than the concentration of the complete gradient vectors. This approach only requires the knowledge of the following \textit{sufficient statistics} determining the dynamics; let the pre-activations be defined as:
% \begin{align}
%     \vec \lambda = \vec W z \qquad  \text{and} \qquad \lambda^\star = W^\star \vec z
% \end{align}
% Thanks to the Gaussian nature of the data, the pre-activations are jointly Gaussian vectors $(\vec{\lambda}^{\nu}, {\vec{\lambda}^{\star}}^{\nu})\sim\mathcal{N}(\vec{0}_{p+k}, \Omega)$ with covariance: 
% \begin{equation}
% \Omega \coloneqq
% \begin{pmatrix}
% Q & M\\
% {M^{\top}} & P
% \end{pmatrix}=
% \begin{pmatrix}
% W{W}^{\top} & W{W^{\star}}^{\top}\\
% W^{\star}W^{\top} & W^{\star} W^{\star T}
% \end{pmatrix}
% \in\mathbb{R}^{(p+k)\times (p+k)}
% \end{equation}
% The theoretical analysis is based on the description of the evolution of the covariance matrix $\Omega$ as a function of time. 

\begin{figure*}[t]
\includegraphics[width=0.95\textwidth]{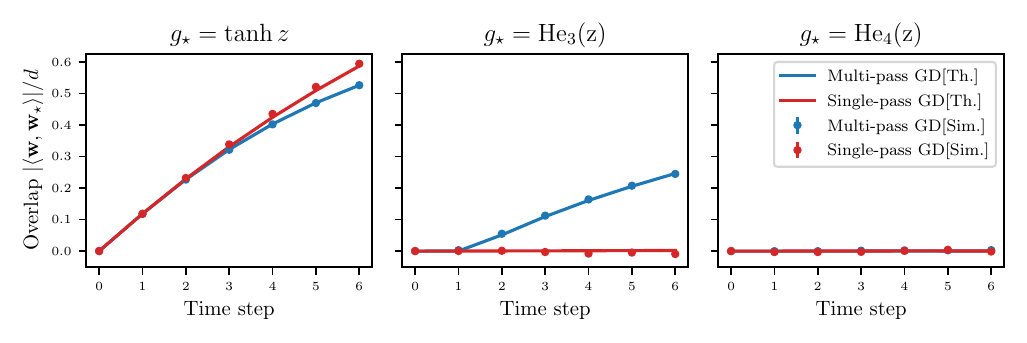}
\vspace{-2em}
    \caption{\textbf{One-pass and multi-pass GD for single-index models --} The {\it overlap} $\left|\frac{\langle \vec w^\star,  \hat{\vec{w}}\rangle }{d}\right|$ between the learned weight and the target/teacher direction, is plotted as a function of the iteration time of both single-pass (red) and multi-pass (blue) GD.  Continuous lines are given theory,  dots are simulations. 
    \textbf{Left: Easy finite-T learnable single-index target} $g^\star \!=\! \tanh$: both one-pass and multi-pass GD obtain positive correlation after a finite number of iterations as the information exponent of the target is $\ell\!=\!1$.  \textbf{Center: Multi-pass finite-T learnable single-index target}: $g^\star \!=\! \mathrm{He}_3$. Multi-pass GD achieves a non-zero correlation in just two steps, but the one-pass algorithm learns nothing. 
    % Notice that the optimal overlap in this problem is negative as $\mathrm{He}_3$ has a negative coefficient in its linear term. 
    \textbf{Right: Finite-time nonlearnable single-index targets} $ g^\star \!=\! \mathrm{He}_4$; the target function is even and thus, as stated in Thm.~\ref{thm:main:2step_learning}, breaking this symmetry is hard in finite number of steps, resulting in a vanishing correlation with the teacher direction $\vec w^\star$ for both algorithms in any finite time. (Simulation are averaged over $32$ runs,     $d=5000$, with $\sigma = \rm relu$, $n=3d$, $p=1$, $\eta = 0.1$).
    % See details in Appendix~\ref*{sec:app:hyperparams}
    }
    \label{fig:main:single_index}
\end{figure*}

\begin{figure*}[t]
    %\centering
\includegraphics[width=1.05\textwidth]{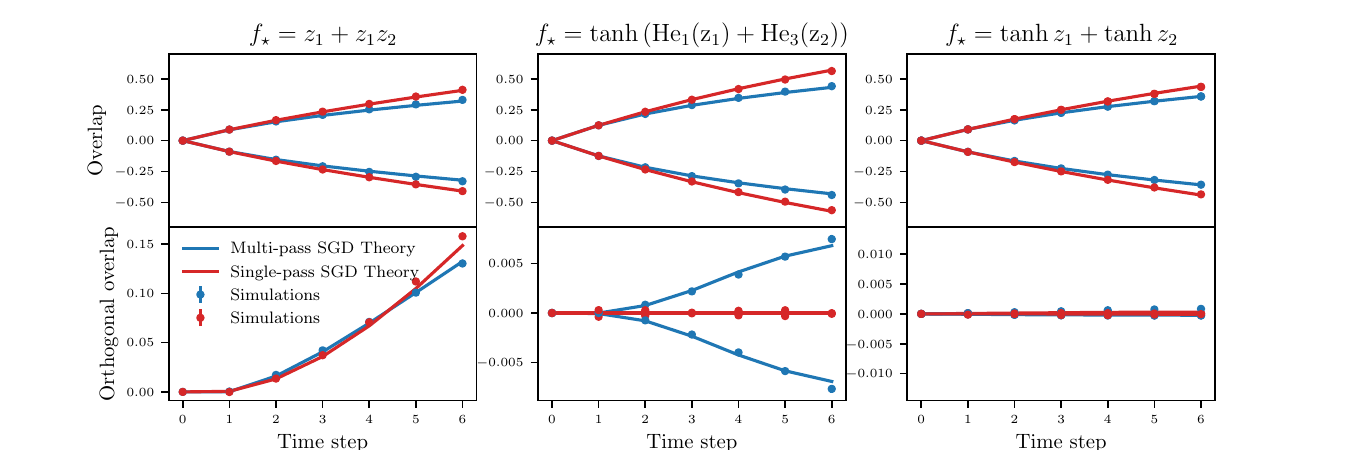}
\vspace{-2em}
    \caption{
    \textbf{One-pass and multi-pass GD for multi-index models --} 
The \textit{overlaps} between the student weights along the first direction learned, namely \(\mathbf{C}[f^\star]\), and its orthogonal, is plotted versus the number of iterations for three different classes of functions.     \textbf{Left: Easy finite-T learnable multi-index target} both the algorithms learn all the relevant directions when an "easy" function is used as a target (here (\(p=8\))).
     \textbf{Center: Multi-pass finite-T learnable multi-index target} both the algorithms learn the first Hermite direction \(\mathbf{C}[f^\star]\) but only multi-pass SGD achieve a non-null correlation in the orthogonal. This illustrates how reusing samples allows us to surpass the staircase limitation of single-pass approaches (\(p=2\)).
     \textbf{Right: Finite-time non-learnable multi-index target} neither of the two algorithm can learn \(\mathbf{C}[f^\star]^\bot\) with this target (\(p=2\)).
     (Simulation are averaged over $32$ runs,     $d\!=\!5000$, with $\sigma \!=\! \rm relu$, $n=3d$, $\eta \!=\! 0.1$).
    }
    \label{fig:main:multiple_index}
\end{figure*}

\section{Statement of the results}
\label{sec:main:theory_results}
Here, we introduce the main results covering the theoretical learning guarantees with gradient descent and contrast them with the known one-pass results. We exploit the rigorous DMFT construction to prove the first key result: two-layer networks efficiently learn a large class of multi-index targets in only $T\!=\!2$ iterations, breaking the curse of one-pass algorithms dictated by the information and leap exponents. 
% \et{The iteration is also in Celentano, Montanari (although with a different proof)}

\subsection{Finite-T Learnable and Non-learnable directions}
We first identify which target directions are hard to learn for multi-pass gradient descent. Define $U^\star$ to be the subspace spanned by the rows of the target weights $W^\star$. The ``hard" directions are the ones where any transformation of the output $f^\star(\vec{z})$ does not lead to a linear correlation along the direction. 
We now define the subspace of such directions:

\begin{definition}\label{def:two_step_hard}
    We define $P^\star$ as the subspace of directions $\vec{v}^\star\in U^\star$ such that for any polynomial $F:\R \rightarrow \R$ with coefficients in $\R$, the following condition is satisfied:
    \begin{equation}\label{eq:f_hard_gen}
        \Eb{\vec{z}}{F(f^\star(\vec{z}))\langle \vec{v}^\star, \vec{z}\rangle}=0,
    \end{equation}
   % {\color{red}Fix the brackets in that equation.}
    Similarly, we denote by $A^\star$, the subspace of directions where the above condition is satisfied for all real-valued analytic functions $F$.
    % for $F=F_{\sigma, \vec{a}}$. Here $F_{\sigma, a}$ depends on the activation function $\sigma$ and the second layer $\vec{a}$, and is precisely defined in Appendix \ref{app:sec:main_thm_proof}. {\color {blue} Si}{\color{red}LZ: Can we define this $F_{\sigma, a}$ here?} 
    %   {\color{green}FK: This should be true for any function F, maybe we shoudl say this?}
    %   \et{Is it really ANY function? For sure it's a large class of function parametrised by $\sigma, a$, but I am not sure it's really ANY functions}
\end{definition}
One part of our main result shows that directions in $A^\star$ cannot be learned even by re-using batches of size $\cO(d)$ in a finite number of gradient steps. Furthermore, under suitable conditions on $\sigma$ and $\vec{a}$ (discussed in Theorem \ref{thm:main:2step_learning} and Appendix \ref{app:sec:main_thm_proof}), we show that after two gradient steps, the first layer learns all directions in the complement $P^\star_\perp$. We are now ready to state our main result: 

\begin{theorem}
\label{thm:main:2step_learning}
 Suppose that $n/d=\alpha>0$. Let $\vec{v}^\star\in P^\star_\perp$ denote an arbitrary direction in the orthogonal complement of the subspace $P^\star$ defined in definition \ref{def:two_step_hard} with norm $\sqrt{d}$ and a fixed representation in the basis $W^\star$.
Suppose further that the activation function $\sigma$ is analytic, with polynomially bounded derivatives satisfying $\Eb{z \sim \mathcal{N}(0,1)}{\sigma'(z)} \neq 0$ and $\sigma^k(0) \neq 0  \ \forall k \in \N$. Then, for any $g^\star$ with derivatives bounded by polynomials, there exist $\eta> 0, \lambda > 0$ such that almost surely over the choice of $\vec{a}$, we have:
\begin{equation}
    \norm{\frac{\vec{W}^{(2)}\vec v^\star}{d}} = \Theta_d(1) \,,
\end{equation}
with high probability as $n,d \rightarrow \infty$. Furthermore, for large enough $p$, $\vec{W}^{(2)}$ asymptotically spans $P^\star_\perp$:
\begin{equation}
    \inf_{\vec v^\star \in P^\star_\perp}\norm{\frac{\vec{W}^{(2)}\vec v^\star}{d}} = \Theta_d(1)\,,
\end{equation}
with high probability as $n,d \rightarrow \infty$.
In other words, directions $\vec v^\star \in P^\star_\perp$ are learned in $T=2$ gradient steps.  

Suppose, however that the teacher subspace $U^\star\!=\!A^\star$, then:
\begin{equation}
    \sup_{\vec v^\star \in U^\star} \norm{\frac{\vec{W}^{(t)} \vec v^\star}{d}} = o_d(1)\,,
\end{equation}
with high probability as $n,d\! \rightarrow\! \infty$, for any finite time $t$. Thus, none of the directions are learned in any finite number of GD steps.\looseness=-1
\end{theorem}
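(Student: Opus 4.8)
The plan is to prove the two positive claims and the impossibility claim through a single mechanism — the way reusing the batch injects a label-correlated (``non-Gaussian'') component into the pre-activations — while organizing everything around the order parameters $\frac1d\vec{W}^{(t)}\vec v^\star$ for $\vec v^\star\in U^\star$, which the rigorous DMFT of \citep{celentano2021highdimensional,gerbelot2023rigorous} identifies as the exact $d\to\infty$ limit of the projected dynamics. First I would write out the update explicitly: by the symmetric initialization \eqref{eq:init_sym} the output vanishes at $t=0$, so $\vec{w}_i^{(1)}=(1-\eta\lambda)\vec{w}_i^{(0)}+\eta a_i\sum_\nu f^\star(\vec{z}_\nu)\,\sigma'(\langle \vec{w}_i^{(0)},\vec{z}_\nu\rangle/\sqrt d)\,\vec{z}_\nu/\sqrt d$. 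Projecting onto $\vec v^\star$ and using that $\vec{w}_i^{(0)}$ is asymptotically orthogonal to $U^\star$, the $t=1$ overlap concentrates on $\eta a_i\alpha\,\E[\sigma'(g)]\,\E[f^\star(\vec z)\langle \vec v^\star,\vec z\rangle/\sqrt d]$, i.e.\ the degree-one ($F=\mathrm{id}$) correlation of \cref{def:two_step_hard}; this already reaches directions with $\E[f^\star\langle\vec v^\star,\cdot\rangle]\neq0$ but is blind to higher information/leap exponents.

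The crux is feeding $\vec{W}^{(1)}$ into the second step on the \emph{same} batch. Writing $h_{i,\mu}^{(1)}=\langle \vec{w}_i^{(1)},\vec{z}_\mu\rangle/\sqrt d$ and expanding $\langle \vec z_\nu,\vec z_\mu\rangle/d$, I would separate the diagonal term $\nu=\mu$ (equal to $\norm{\vec z_\mu}^2/d\to1$) from the off-diagonal sum. The diagonal piece contributes a self-interaction $\eta a_i f^\star(\vec z_\mu)\sigma'(g_{i,\mu}^{(0)})$, so $h_{i,\mu}^{(1)}$ depends \emph{explicitly on the label} $y_\mu=f^\star(\vec z_\mu)$, while the off-diagonal sum is an asymptotically Gaussian field whose teacher-subspace drift is again governed only by degree-one correlations. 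The $t=2$ overlap along $\vec v^\star$ is then (up to the carried-over $t=1$ term) the empirical average over the reused batch of $(y_\mu-f^{(1)}_\mu)\,\sigma'(h^{(1)}_{i,\mu})\,\langle \vec v^\star,\vec z_\mu\rangle/\sqrt d$. Since $\sigma$ is analytic I would Taylor-expand $\sigma'(h^{(1)}_{i,\mu})$ in the self-interaction: the $k$-th term carries a factor $y_\mu^{k}$ and a Gaussian weight that is nonzero exactly because of the hypotheses $\E[\sigma'(g)]\neq0$ and $\sigma^{(k)}(0)\neq0$. The limiting overlap is therefore an analytic function of $\eta$ whose Taylor coefficients are precisely the correlations $\E[F(f^\star(\vec z))\langle \vec v^\star,\vec z\rangle]$ for polynomials $F$. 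By \cref{def:two_step_hard}, $\vec v^\star\in P^\star_\perp$ guarantees at least one such correlation is nonzero, so this function is not identically zero and there exist $\eta,\lambda$ with $\norm{\vec W^{(2)}\vec v^\star/d}=\Theta_d(1)$; concentration over the $n=\alpha d$ samples, supplied by the DMFT limit, upgrades this to a high-probability statement.

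For the spanning claim I would run the same computation simultaneously for all $\vec v^\star\in P^\star_\perp$: each neuron contributes, inside the fixed finite-dimensional space $P^\star_\perp$, a vector whose coordinates along an orthonormal basis are the above label-correlations weighted by the i.i.d.\ scalars $a_i$ and by independent Gaussian factors through $\vec w_i^{(0)}$. I would show the per-neuron second-moment matrix is strictly positive-definite on $P^\star_\perp$ — no direction has an identically vanishing contribution, precisely because each admits a witnessing $F$ — so for $p$ large a law-of-large-numbers bound on the smallest singular value yields $\inf_{\vec v^\star\in P^\star_\perp}\norm{\vec W^{(2)}\vec v^\star/d}=\Theta_d(1)$.

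The impossibility claim I would prove by an invariance argument (induction on $t$) that DMFT makes exact. The invariant is that, in the limit, the $U^\star$-overlaps of $\vec W^{(t)}$ vanish and each pre-activation's dependence on the data along $U^\star$ enters only through the label $y$. In the inductive step the drift of the $\vec v^\star$-overlap reduces to a convergent combination of correlations $\E[F(f^\star(\vec z))\langle \vec v^\star,\vec z\rangle]$ with $F$ analytic (the diagonal self-interaction generates polynomials in $y$, the off-diagonal field generates analytic functions of $y$); when $U^\star=A^\star$ every such correlation vanishes by \cref{def:two_step_hard}, so no overlap is ever generated and $\sup_{\vec v^\star\in U^\star}\norm{\vec W^{(t)}\vec v^\star/d}=o_d(1)$ for every finite $t$. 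I expect the main obstacle to be rigorously controlling the subleading $o_d(1)$ corrections — finite-$d$ fluctuations, the $O(1/\sqrt d)$ initial overlaps, and the non-Gaussian off-diagonal field — so that they neither erode the $\Theta_d(1)$ lower bound in the positive direction nor accumulate across the finitely many steps into a spurious $\Theta_d(1)$ overlap in the impossibility direction. This is exactly where the rigorous DMFT construction of \citep{celentano2021highdimensional,gerbelot2023rigorous} must be invoked to convert the formal ``$\approx$'' into exact limiting order-parameter equations, and the delicate point is showing those equations admit only the all-zero $U^\star$-overlap solution whenever $U^\star=A^\star$.
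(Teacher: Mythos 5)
Your overall route is the paper's route: reduce to the rigorous DMFT limit of \citep{celentano2021highdimensional,gerbelot2023rigorous}, isolate the diagonal ($\nu=\mu$) term of the Gram matrix as the label-dependent self-interaction $\eta a_i f^\star(\vec z_\mu)\sigma'(h^{(0)}_{i,\mu})$ in $h^{(1)}$, prove the positive claims by a genericity-via-analyticity argument on the limiting overlap, and prove the impossibility claim by induction showing $\boldsymbol h^{(t)}$ is an analytic function of $\boldsymbol h^\star$ and independent Gaussians, so every overlap increment has the form $\mathbb{E}_{\vec z}[F(f^\star(\vec z))\langle \vec v^\star,\vec z\rangle]$ with $F$ analytic and vanishes when $U^\star=A^\star$ (this last part matches Appendix \ref{app:sec:main_thm_proof} almost verbatim, with your ``invariant'' making explicit the paper's implicit requirement that $M^{(t)}=0$ be propagated). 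Your spanning argument (per-neuron second-moment matrix positive definite on $P^\star_\perp$ plus a smallest-singular-value bound over $p$ neurons) is a legitimate variant of the paper's, which instead notes the rows are independent across non-paired neurons and absolutely continuous on $\mathbb{R}^r$, giving full row rank almost surely for $p$ large.

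There is, however, a genuine gap in your key genericity step. You claim the limiting overlap is analytic in $\eta$ with Taylor coefficients ``precisely'' the polynomial correlations $\mathbb{E}[F(f^\star(\vec z))\langle\vec v^\star,\vec z\rangle]$, nonzero ``exactly because'' $\mathbb{E}[\sigma'(g)]\neq 0$ and $\sigma^{(k)}(0)\neq 0$. Computed honestly, the $k$-th coefficient of the $\eta$-expansion of $\phi$ is proportional to
\begin{equation*}
a_j^k\,\mathbb{E}\bigl[(g^\star(\vec h^\star))^{k+1}\langle \vec h^\star,\vec u^\star\rangle\bigr]\,\mathbb{E}\bigl[\sigma'(h^{(0)})\bigr]^{k}\,\mathbb{E}\bigl[\sigma^{(k+1)}(a_j\xi)\bigr],
\end{equation*}
and the last factor is a Gaussian smearing of $\sigma^{(k+1)}$ at noise scale $a_j$, \emph{not} $\sigma^{(k+1)}(0)$; for a fixed $a_j$ it can vanish even when $\sigma^{(k+1)}(0)\neq 0$ (Hermite-type cancellations), so ``there exist $\eta,\lambda$'' does not follow as written. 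This is exactly why the paper fixes $\eta,\lambda$ first (normalizing the noise covariance via Lemma \ref{lem:cov}) and instead expands $\phi(a_j)$ in the second-layer weight around $a_j=0$, where all arguments of the derivatives of $\sigma$ collapse to $0$, yielding $D^k_{a_j}\phi(0)\propto \mathbb{E}[(g^\star)^{k+1}\langle\vec h^\star,\vec u^\star\rangle]\,\nu_1(\sigma)^k\,\sigma^{(k+1)}(0)$ so the hypotheses apply verbatim; almost-sure nonvanishing over the absolutely continuous law of $\vec a$ then delivers the theorem's ``almost surely over the choice of $\vec a$'' clause — a clause your single-direction argument never actually invokes. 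Your version is repairable (note $a_j\mapsto\mathbb{E}[\sigma^{(k+1)}(a_j\xi)]$ is analytic and equals $\sigma^{(k+1)}(0)\neq0$ at $a_j=0$, hence is a.s.\ nonzero, then choose $\eta$), but the repair is precisely the paper's $a_j$-analyticity argument. Two smaller omissions: you never dispose of the residual term $-f(\boldsymbol h^{(1)})\sigma'(\boldsymbol h^{(1)}_j)$ in the second-step gradient (the paper absorbs it almost surely as a distinct analytic function of $\vec a$), and the case $M^{(1)}\neq 0$, where $\boldsymbol\omega^{(1)}$ becomes correlated with $\boldsymbol h^\star$ and the condition \eqref{eq:phi} acquires an extra staircase term, is only gestured at in your proposal but is treated separately in Appendix \ref{sec:app:staircase}.
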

The proof is based on the analysis of the DMFT equations discussed
in Sec. \ref{sec:main:asymptotics}, is given in App.~\ref{sec:app:proofs}, and we provide an informal heuristic derivation in sec. \ref{sec:hidden_prog}.
While the above negative result requires all directions in $U^\star$  to be in $A^\star$ and thus in $P^\star$, in App \ref{sec:app:staircase}, we discuss the more general setup where learning of certain directions in $P^\star_\perp$ can affect the learning of directions in $U^\star$ in subsequent timesteps.

When the expectation in Equation \ref{eq:f_hard_gen} is non-zero for $F$ being the identity mapping, i.e. $F = \mathrm{id}$, $\vec{v}^\star$ is in-fact learned in the first gradient step \citep{ba2022high,dandi2023twolayer} or through online SGD \citep{ben2022high,abbe2023sgd}. We discuss this further in Section \ref{sec:com_multi_vs_once}.

Our analysis reveals that the effect of re-using batches is to implicitly transform the output in the subsequent steps, allowing a larger set of directions to be learned. However, for directions in $A^\star$, such transformations are still insufficient.

\subsection{Characterization of hard directions through symmetries}

While Definition \ref{def:two_step_hard} characterizes the subspace of hard directions 
$A^\star$, it requires checking that the equality in Equation \ref{eq:f_hard_gen} holds for any real analytic transformation $F$.
We now show that a sufficient condition for $\vec{v}^\star \in A^\star$ is for $f^\star$ to possess certain symmetries along  $\vec{v}^\star$. This leads us to identify subspaces of hard directions, contained in $A^\star$, linked to symmetries w.r.t certain transformations. We characterize such subspaces below. The simplest such symmetry is defined through reflection along $\vec{v}^\star$:

\begin{definition}\label{def:symmetric_subspace}
   For any direction $\vec{v}^\star \neq 0 \in \R^d$, let $R_{\vec{v}^\star}$ denote the reflection operator along $\vec{v}^\star$, i.e. $R_{\vec{v}^\star}=\vec{I}-2\frac{1}{\norm{\vec v^\star}^2}\vec{v^\star}\vec{v^\star}^T$. We say that a direction $\vec{v}^\star \in U^\star$ is \text{even-symmetric} w.r.t $f^\star$ if for any $\vec{z} \in \R^d$:
    \begin{equation}
        f^\star(R_{\vec{v}^\star}\vec{z})= f^\star(\vec{z})
    \end{equation}
    We denote by $E^\star$ the subspace spanned by all \text{even-symmetric} directions in $U^\star$.
\end{definition} 
It is straightforward to see that any $\vec{v}^\star \in E^\star$ leads to Equation \ref{eq:f_hard_gen} being satisfied for any transformation $F$, since $\vec{v}^\star$ remains even w.r.t the function $F\left(f^\star (\cdot)\right)$. Therefore, $E^\star \subseteq A^\star$
However, the set of non-learnable directions can be larger due to the presence of additional symmetries. We now define such a larger subspace of hard directions arising due to a symmetry w.r.t reflections along $\vec{v}^\star$ coupled with orthogonal transformations along the orthogonal subspace:

\begin{definition}\label{def:ortho_symmetric_subspace}
   For any direction $\vec{v}^\star \neq 0$ in $U^\star$, let $R_{\vec{v}^\star}$ be as defined in Definition \ref{def:symmetric_subspace}. Let $O_{\perp}$ be a matrix in the orthogonal group on the $d-1$ dimensional subspace $\{\vec v^\star\}_\perp$ i.e the orthogonal complement of the linear subspace spanned by $\vec{v}^\star$.
        We say that a direction $\vec{v}^\star$ is \text{orthogonally-even-symmetric} w.r.t $f^\star$, if there exists an $O_{\perp} \in O(\{v^\star\}_\perp)$, such that for any $\vec{z} \in \R^d$:
    \begin{equation}
        f^\star(O_{\perp}R_{\vec{v}^\star} \vec{z})= f^\star(\vec{z})
    \end{equation}
    We denote by $OE^\star$ the subspace spanned by all \text{orthogonally-even-symmetric} directions in $U^\star$.
\end{definition} 

By setting $O_{\perp}$ as the identity mapping in the above definition, we recover the condition for $\vec{v}^\star \in E^\star$. Therefore, we have that $E^\star \subseteq OE^\star$.
While $OE^\star$ is the largest set of directions we've identified as being hard, the true set of hard directions may be larger still and is given by $A^\star$ in Definition \ref{def:two_step_hard}. We show in Appendix \ref{app:sec:proof:OE} that the directions in $OE^\star$ are indeed hard as per Definition \ref{def:two_step_hard}:
\begin{proposition}\label{prop:OE}
Let the subspaces $A^\star$ and $OE^\star$  be as defined in Defs. \ref{def:two_step_hard} and \ref{def:ortho_symmetric_subspace} respectively. Then, $OE^\star \subseteq A^\star$ i.e. all directions in $E^\star, OE^\star$ are hard as per Thm. \ref{thm:main:2step_learning}.
\end{proposition}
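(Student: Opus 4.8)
The plan is to reduce the inclusion to a single change-of-variables argument built on the orthogonal invariance of the standard Gaussian measure. First I would observe that $A^\star$ is itself a linear subspace: for each fixed analytic $F$, the constraint $\Eb{\vec{z}}{F(f^\star(\vec{z}))\langle \vec{v}^\star, \vec{z}\rangle}=0$ is linear in $\vec{v}^\star$, so the set of directions satisfying it for all $F$ is an intersection of hyperplanes, hence a subspace. Consequently, since $OE^\star$ is by definition the span of the orthogonally-even-symmetric directions, it suffices to show that every individual orthogonally-even-symmetric direction lies in $A^\star$; the inclusion $OE^\star \subseteq A^\star$ then follows by closure under linear combinations. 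The same reduction, specialized to $O_\perp = \vec I$, re-derives $E^\star \subseteq A^\star$.

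Next I would fix an orthogonally-even-symmetric direction $\vec v^\star \in U^\star$ with witnessing $O_\perp \in O(\{\vec v^\star\}_\perp)$, and set $M := O_\perp R_{\vec v^\star}$, where $O_\perp$ is regarded as extended to $\R^d$ by acting as the identity on $\Span(\vec v^\star)$. As a product of two orthogonal maps, $M \in O(d)$, so the law of $\vec z \sim \mathcal N(0,\vec I_d)$ is invariant under $\vec z \mapsto M\vec z$. Applying this change of variables and using the defining symmetry $f^\star(M\vec z) = f^\star(\vec z)$ gives, for any analytic $F$,
\[
\Eb{\vec{z}}{F(f^\star(\vec{z}))\langle \vec{v}^\star, \vec{z}\rangle} = \Eb{\vec{z}}{F(f^\star(M\vec{z}))\langle \vec{v}^\star, M\vec{z}\rangle} = \Eb{\vec{z}}{F(f^\star(\vec{z}))\langle M^T\vec{v}^\star, \vec{z}\rangle}.
\]

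The crux is then the elementary computation $M^T \vec v^\star = -\vec v^\star$. Since $O_\perp$ fixes $\Span(\vec v^\star)$ pointwise, so does $O_\perp^T$, whence $M^T \vec v^\star = R_{\vec v^\star}^T O_\perp^T \vec v^\star = R_{\vec v^\star}\vec v^\star = -\vec v^\star$, using that $R_{\vec v^\star}$ is symmetric with $R_{\vec v^\star}\vec v^\star = -\vec v^\star$. Substituting this back shows the expectation equals its own negative and therefore vanishes; as $F$ was arbitrary, $\vec v^\star \in A^\star$. I do not anticipate a serious obstacle here: the only points requiring care are (i) justifying that $O_\perp$, and hence $O_\perp^T$, fixes $\vec v^\star$ — which is exactly the content of $O_\perp$ acting only on $\{\vec v^\star\}_\perp$ — and (ii) the integrability needed to apply the change of variables, which follows from the standing polynomial-growth assumptions on $\sigma$ and $g^\star$ (hence on $f^\star$) together with the analyticity of $F$. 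The genuinely conceptual step is recognizing that coupling the reflection with an arbitrary orthogonal rotation of the complement does not spoil the odd-symmetry of the $\langle \vec v^\star, \cdot\rangle$ factor, which is what makes $OE^\star$ — and not merely $E^\star$ — land inside $A^\star$.
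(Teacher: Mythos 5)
Your proof is correct and follows essentially the same route as the paper's: a change of variables under the orthogonal map $O_\perp R_{\vec v^\star}$, Gaussian invariance, the sign flip $\langle \vec v^\star, O_\perp R_{\vec v^\star}\vec z\rangle = -\langle \vec v^\star, \vec z\rangle$, and the observation that the symmetry of $f^\star$ is inherited by $F(f^\star(\cdot))$, forcing the expectation to equal its own negative. Your explicit reduction via the linearity of the defining constraints (so that it suffices to treat the spanning orthogonally-even-symmetric directions) is a small point the paper leaves implicit, but it does not change the substance of the argument.
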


App.\ref{sec:app:even_sym} gives several examples  where $P^\star_\perp = E_\perp^\star = U^\star$, such as single-index targets with odd Hermite activations, staircase functions, etc.
Interestingly, we show that there exist functions $f^\star$ where the set $OE^\star$ is strictly larger than $E^\star$. Consequently, for such functions, $E^\star$ is strictly contained in  $A^\star,P^\star$. We discuss such target functions in Appendix \ref{sec:app:hard_sym}.
 For example, we show in Appendix \ref{sec:app:hard_sym}, that for the target function $f^\star(\vec{z})= z_1z_2z_3$, the direction $\vec{v}^\star=\vec{e}_1+\vec{e}_2+\vec{e}_3$ does not lie in $E^\star$ but lies in $OE^\star$ and thus in $A^\star,P^\star$.
 \vspace{-0.2cm}

\subsection{Comparison between one-pass and multi-pass GD}\label{sec:com_multi_vs_once}

%{\bf Maybe this could be later? We can give examples for all three caterories?}

Our results are particularly interesting in the context of a recent line of work on the limitations of one-pass algorithms. 
%The characterization of functions that are hard to learn for one-pass GD was established in a considerable number of works 
\citep{BenArous2021,abbe2021staircase,abbe2022merged,abbe2023sgd, dandi2023twolayer, bietti2023learning,zweig2023symmetric}. We can demonstrate, in particular, a sharp separation performance between one-pass and multiple-pass protocols.

\paragraph{Learning single-index targets --} 
First, we consider single index targets.
%In this class of functions, 
Targets that are hard to learn 
%the settings which are hard to learn 
for one-pass algorithms starting from uninformed initialization in high dimension are characterized by the \textit{Information Exponent} ($\rm{IE}$). Informally, the $\rm{IE}$ is equivalent to the first non-zero coefficient in the Hermite expansion of the target activation. 
\begin{definition}[Information Exponent]
\label{def:main:information_exponent}
\citep{BenArous2021}
Let $\mathrm{He}_j$ be the $j-$th Hermite polynomial. Using the definition for the target of eq.~\eqref{eq:main:teacher_def},  reading in the single-index case as $f^\star = g^\star(\langle \vec w^\star, \vec z \rangle)$,  the $\rm IE$ is defined as:
\begin{align}
{\rm{IE}} = \min \{j\in\mathbb{N}: \mathbb{E}_{\xi \sim \mathcal{N}(0,1)} \left[g^{\star}(\xi) \mathrm{He}_j(\xi)  \right]\neq 0\} 
\end{align}
\end{definition}
Higher $\rm IEs$ are associated to harder problems for one-pass training protocols. Indeed, \citep{BenArous2021} provably show that one-pass SGD, with one sample per batch, weakly recovers the teacher direction $\vec w^\star$ only upon iterating the training schedule for $T(\rm IE)$ time iterations:
% LP: greedy solution using \hspace because I would like to avoid as many external packages as possible (desk-reject iclr) and the stack overflow solution suggest them. I will fix this
\begin{align}
\label{eq:main:time_complexity_IE}
    T(\rm IE) = \begin{cases}
        &\cO(d^{\rm IE -1} ) \qquad \hspace{0.3em}\text{if $\rm{IE}>2$} \\
        &\cO(d \log{d} ) \qquad \text{if $\rm{IE}=2$} \\
        &\cO(d) \qquad \hspace{2.2em}\text{if $\rm{IE}=1$}
    \end{cases}
\end{align}
Recently, the time complexity has been improved up to the Correlational Statistical Query (CSQ) lower bound of $d^{\max{(\frac{\rm{IE}}{2}},1)}$, by considering an appropriate smoothing of the loss \citep{damian_2023_smoothing}.
Definition~\ref{def:main:information_exponent} has been extended to larger batch sizes in \citep{abbe2022merged,dandi2023twolayer}, without changing the overall picture; more precisely, even with $n = o(d^{\rm{IE}})$ fresh samples per batch, one-pass training procedures are still not able to weakly recover the signal in finite iteration time. The case $\rm IE=1$ corresponds to the expectation in Equation \ref{eq:f_hard_gen} being non-zero for $F=\mathrm{id}$. However, since Definition \ref{def:two_step_hard} allows for general transformations $F$ to the output $f^\star$, $\vec{w}^\star$ may not be in $P^\star, A^\star$ even when $\rm{IE} > 1$. The presence of general transformations $F$ in definition \ref{def:two_step_hard} allows our algorithm to bypass CSQ bounds, which are restricted to $F=\mathrm{id}$. Such general transformations are however permitted under the framework of Statistical Query (SQ) algorithms \citep{kearns1998efficient}. We thus expect gradient descent with  $\cO(d)$ sample complexity to inherit the hardness results established for the class of SQ algorithms \citep{diakonikolas2020algorithms, goel2020superpolynomial, chen2021efficiently, chen2022hardness}.
We emphasize however that unlike explicit SQ algorithms, our analysis shows that gradient descent performs such transformations implicitly, allowing it to reach the optimal complexity of SQ algorithms for certain class of target functions.
\looseness=-1

We illustrate the sharp contrast between one-pass and multiple-pass protocols with the examples depicted in Figure~\ref{fig:main:single_index}, which shows the scalar product (called \textit{overlap}) between the learned weights and the teacher direction as a function of the time steps and compares simulation (dots) with theoretical predictions (continuous lines). There are $3$ cases:
\vspace{-0.3cm}
\begin{itemize}[noitemsep,leftmargin=1em,wide=0pt]
    \item \textbf{Easy finite-$T$ learnable single-index targets $\left(\rm{IE}\!=\!1\right)$:} The left panel of Fig.~\ref{fig:main:single_index} show the learning curve for a problem with $\rm{IE} \!=\! 1$. {\it Both} single-pass and multiple-pass GD correlates with the target in finite time. The non-symmetric subspace coincides with the teacher one $E_\perp^\star = U^\star$ (Def.~\ref{def:symmetric_subspace}).
    \item \textbf{Multi-pass finite-$T$ learnable single-index targets $\left(\rm IE\!>\!1, \rm {non-even \,\,f^\star} \right)$: }  Fig.~\ref{fig:main:single_index} (center) depicts the learning curve for a non-even target function, with $\rm{IE} = 3$. Here, one-pass GD is not able to achieve any significant correlation with the teacher $\vec w^\star$ (and it would require a number of iterations $T = \cO(d)$ to achieve weak recovery - see eq.~\eqref{eq:main:time_complexity_IE}). However, multiple-pass GD performs weak recovery in only $T=2$ steps. As before, the non-symmetric subspace corresponds to the teacher one $E_\perp^\star = U^\star$ (Def.~\ref{def:symmetric_subspace}).
    \item \textbf{Finite-$T$ non-learnable single-index targets $\left(\rm{IE}\!>\!1, \, \, even \,\, f^\star\right)$:} Fig.~\ref{fig:main:single_index} (right) considers an even problem, with $\rm IE = 4$. Neither of the training procedures achieve weak recovery  in finite time. The computational hardness of this problem is associated with the presence of symmetry in the teacher function that requires time to break. Indeed, following Definition~\ref{def:symmetric_subspace}, the even-symmetric subspace $E^\star$ is equivalent to the teacher subspace $U^\star$. These results agree with the emergence of \textit{computational barriers } in symmetric single-index problems like the phase retrieval one \citep{maillard2020phase}. In fact, for such problems, regardless of the number of iterations, learnability requires $\alpha$ to be larger than critical values even for the most efficient known algorithms (see \citep{barbier2019optimal}, Sec. 3.1).
    % \item  \dots If the non-linearity $g_\star(\cdot)$ has a non-zero first Hermite coefficient, one-pass algorithms also learn to learn $\vec \theta_\star$ efficiently in finite time. Crucially, some non-symmetric functions are hard to learn for one-pass protocols; one simple example is $g(\cdot) = \mathrm{He}_3(\cdot)$ (information exponent equal to $3$) one-pass SGD will not recover the teacher direction $\theta_\star$ from $O(d)$ samples in a finite number of steps. It will only weakly recover it after processing $n_{\rm{tot}} = O(d^2)$ samples. 
    \end{itemize}

\vspace{-0.5cm}

\paragraph{Learning multi-index targets -- } The hardness of 
multi-index targets learning has been the subject of numerous recent studies for single-pass algorithms \citep{abbe2021staircase,abbe2022merged,abbe2023sgd,bietti2023learning, zweig2023symmetric, dandi2023twolayer}. The class of multi-index targets efficiently learned by one-pass algorithms has been provably associated with the \textit{Leap Complexity} ($\rm LC$) of the target to be learned, which generalizes the information exponent: 
\begin{remark}
    To enhance the clarity of the presentation, we limit the definition of the $\rm LC$ to an informal one. We refer to Section~$\bf B.2$ (\textit{isoLeap}) in \citep{abbe2023sgd} and Definition~$\bf 3$ in \citep{dandi2023twolayer} (\textit{leap index}) for details.  
\end{remark}
Informally, the learning dynamics of one-pass routines follow this behavior: initially, the network learns in the first step the first Hermite coefficient of the target $f^\star$. For every time $t\!\in\! [T]$ of the one-pass schedule, the network is bound to learn in finite time only features that are \textit{linearly connected} to the previously learned directions; functions possessing only such linearly connected features are leap $1$ functions ($\rm LC =1$), e.g. $f^\star(\vec z) \!=\! z_1 \!+\! z_1z_2$. Similarly, functions that are quadratically connected to the learned features are leap $2$  ($\rm LC \!=\! 2$), e.g. $f^\star (\vec z) \!=\! z_1 + z_1z_2z_3$.
 Higher $\rm{LC}$ target functions correspond to harder learning problems for one-pass algorithms:  one-pass SGD, with one sample per batch, weakly recovers the teacher subspace $U^\star$ by iterating the training protocol for $T(\rm{LC})$ time steps, where the $\rm{LC}$ substitutes the $\rm{IE}$ in eq.~\eqref{eq:main:time_complexity_IE} \citep{abbe2023sgd}.
% has been recently characterized in different works to be \textit{staircase functions}. Informally, the networks learn in the first step  
% \begin{definition}[Leap complexity] 
% \label{def:main:leap_exponent}
% \citep{abbe2023sgd}    
% \end{definition}

We illustrate the behavior of one-pass and multiple-pass algorithms when learning multi-index functions in Fig.~\ref{fig:main:multiple_index}. Using different two-index teachers ($k=2$), it shows the scalar product between the learned weights and two reference vectors: a) the first Hermite coefficient of the target $f^\star$, called in the following $\vec{C}_1[f^\star]$; b) the vector in the teacher subspace $U^\star$ orthogonal to $\vec{C}_1[f^\star]$, referred as $\vec{C}_1[f^\star]^\perp$. The figure exemplifies the correlations metrics as a function of time, labeled as \textit{overlap} (resp. \textit{orthogonal overlap}) in the upper (resp. lower) section. There are, again, 3 cases:
\begin{itemize}[noitemsep,leftmargin=1em,wide=0pt]
    \item  \textbf{Finite-$T$ learnable multi-index targets:} Fig.~\ref{fig:main:multiple_index} (left)  depicts a target with $\rm LC \!=\!1$.  The teacher subspace $U^\star$ spanned by the standard basis vectors $\{\vec e_1, \vec e_2\}$ is learned by both one-pass and multi-pass GD in finite time. At $T\!=\!1$, $\vec e_1\!=\!\vec{C}_1[f^\star]$ is learned; this enables the recovery of the direction $\vec e_2$ at $T\!=\!2$ as the target is linear in $z_2$ once $e_1$ has been learned. This hierarchical picture of learning is called \textit{staircase} mechanism. Using Def.~\ref{def:symmetric_subspace} notations, the non-symmetric teacher subspace $E^\star_\perp$ is equivalent to the full teacher subspace $U^\star$.
    % At each time step of the training schedule, the network learns a new feature linearly connected with previously learned ones.
    \item \textbf{Multi-pass finite-$T$ learnable multi-index targets:}
    The central panel in Fig.~\ref{fig:main:multiple_index} illustrates a teacher with $\rm LC =3$. Both algorithms are successful in weakly recovering the direction $\vec C_1[f^\star]$ in the first step. However, as the training continues, one-pass GD never recovers the full teacher subspace in finite time (exemplified by the zero orthogonal overlap in the lower panel). Conversely, multi-pass GD is able to perform weak recovery of the full teacher subspace $U^\star$ by achieving a non-vanishing correlation with $\vec C_1[f^\star]^\perp$ (non-zero orthogonal overlap in the lower section) in just $T=2$ steps. Again, the non-symmetric subspace $E_\perp^\star = U^\star$  is equivalent to the full teacher subspace $U^\star$ (Def.~\ref{def:symmetric_subspace}).
    \item \textbf{Finite-$T$ non-learnable multi-index targets:} The right panel of Fig.~\ref{fig:main:multiple_index} considers a committee machine teacher with symmetric activation, i.e. $f^\star(\vec z) = \sum_{r=1}^2 \sigma^\star \left(\langle \vec z, \vec e_r\rangle \right)$, here $\rm LC = 2$. 
    Both protocols, in this case, are only able to learn a single-index approximation of the target function in finite time, achieving non-zero correlation only with $\vec C_1[f^\star] \propto \vec e_1 + \vec e_2$  throughout the dynamics. The computational hardness of this problem is associated with the presence of a neuron exchange symmetry. Indeed,  using Def.~\ref{def:symmetric_subspace} notations, we observe that the even-symmetric subspace  $E^\star = \{\frac{1}{\sqrt{2}} \left(\vec e_2 - \vec e_1 \right)\}$ is a non-empty subspace of the teacher one $U^\star$. Therefore, as for one-pass routines, multiple-pass ones are bound to learn only $\vec{v}^\star =  \left(\vec e_1 + \vec e_2 \right)/\sqrt{2}$ in finite time steps. 
    Such difficulties have been described in the analysis of the \textit{specialization}  transition in the information-theoretic/Bayes optimal case of symmetric committees \citep{Aubin_2019}. As for single index models, breaking the symmetry requires $\alpha$ to be large enough and, even in this case, the best-known algorithms require a diverging number of iterations (see \citep{Aubin_2019}, Sec. 3).
\end{itemize}
\vspace{-0.4cm}

\subsection{From weak recovery to generalization}
While Th.~\ref{thm:main:2step_learning} provides conditions for the {\it weak recovery} (a finite overlap with directions in $U^\star$), once this is done, it becomes straightforward to learn the function up to any desired accuracy with only $\cO(d)$ additional samples.  Indeed, strong generalization guarantees can be proven by utilizing existing results either for subsequent training with online SGD \citep{BenArous2021} (to use their terminology, once you escape {\it mediocrity}, the ballistic phase is {\it easy}) or training of the second layer using an independent batch of $\cO(d)$ samples as in \citep{damian2022neural,abbe2023sgd}. See App.\ref{sec:app:gen_error} for such generalization sample-complexity results.\looseness=-1
\vspace{-0.2cm}

\section{Main proof ideas}

\subsection{Learning by hidden progress: heuristic argument}\label{sec:hidden_prog}
While we give a rigorous proof of Thm. ~\ref{thm:main:2step_learning} in App.~\ref{sec:app:proofs}, we provide now an informal description of the \textit{hidden progress}  in the first step of gradient descent that allows subsequent development of overlaps in the second step, that is at the root of the difference between single and multi-pass algorithms.
%The re-using of batch requires tracking both the overlaps of the weights with the target subspace as well as the distribution of the pre-activations of the neurons for different samples in the batch. The characterization of the distribution of the pre-activations is lacking in one-pass protocols (as a new fresh sample is assumed to be drawn at each step), and it is the origin of the intrinsic limitations of these algorithms.
For simplicity, we focus on the case of a single hidden neuron ($p\!=\!1$). 
We denote  $h^{(t)}_\mu\!=\!\langle \vec w^{(t)}, \vec z_\mu \rangle / \sqrt{d}$ the pre-activation for the $\nu_{th}$ training point along the neuron  with $\mu \in [n]$, and $\vec{v}^\star$ a vector in the span of $\vec{W}^\star$ with $\norm{\vec{v}^\star}\!=\!\sqrt{d}$.

%We denote by $\vec{h}^{\nu}_*=W^* \vec{z}^\nu /\sqrt{d} \in \R^p$
%the components of $z^\nu$  along the teacher subspace. Similarly, let $h^{\nu}_{\vec{v}^*}$ be the component of $\vec{z}^\nu$ along $\vec{v}^*$ given by $\langle\vec{v}^*, \vec{z}^\nu \rangle /\sqrt{d}$.
%We denote by $\vec{h}^{\nu}_*=\frac{1}{\sqrt{d}} W^* \vec{z}^\nu \in \R^p$ the components of $z^\nu$  along the teacher subspace. 

From the gradient update in Eq.~(\ref{eq:GD_update}), 
the update lies in the span of the training inputs $\{ \vec{z}_\nu\}_{\nu=1}^n$, with the gradient of the $\nu_{th}$ training example given by $\vec{g}_\nu=a\mathcal{L}'\left(h^{(t)}_\nu,f^\star(\vec z_\nu)\right)\sigma'(h^{(t)}_\nu)\vec z_\nu/\sqrt{d}$. For squared loss, assuming that $f\left( \frac{W^{(t)}\vec z_\nu} {\sqrt{d}}\right) \approx 0$, the gradient reads:
\begin{equation}\label{eq:update_one_example}
    \vec{g}^{(t)}_\nu \approx -a f^\star(\vec z_\nu)\sigma'(h^{(t)}_\nu)\vec z_\nu / \sqrt{d}\,.
\end{equation}
At initialization $h^{(0)}_\nu=\langle \vec w^{(0)}, \vec z_\nu \rangle / \sqrt{d}$ and the projections along the teacher subspace (which we denote $\vec{h}^\star_{\nu}=\frac{1}{\sqrt{d}} W^\star \vec{z}_\nu \in \R^k$) are approximately independent since $\vec w^{(0)}$ is approximately orthogonal to the teacher subspace as well as to the inputs  $\{ \vec{z}_\nu\}_{\nu=1}^n$. The projection of the gradient along the teacher subspace is given by:
\begin{align}
   W^\star\left(\sum_{\nu=1}^n \frac{\vec{g}^{(t)}_\nu}{\sqrt{d}} \right) \!\approx\! \!-\!\sum_{\nu=1}^n a f^\star(\vec z_\nu)\sigma'(h^{(t)}_\nu) \frac{\vec{h}^\star_{\nu}}{d} \,.
\end{align}
We do expect that, due to concentration, the component of the full-batch gradient update along the teacher subspace lies along the direction given by:
\begin{equation}\label{eq:update_first_step}
    \Ea{f^\star(\vec z_\nu)\sigma'(h^{(0)}_\nu)\vec{h}^\star_{\nu}}\!\approx \!\Ea{\sigma'(h^{(0)}_\nu)}\!\Ea{f^\star(\vec z_\nu)\vec{h}^\star_{\nu}},
\end{equation}
where we used the approximate independence of $h^{(0)}_\nu$ and  $\vec{h}^\star_{\nu}$ to factorize the expectation.
Thus, the neuron parameters $\vec w^{(1)}$ at the first step are correlated with the teacher subspace only along the direction $\Ea{f^\star(\vec z_\nu)\vec{h}^\star_{\nu}}$. 

If $\Ea{f^\star(\vec z_\nu)\vec{h}^\star_{\nu}}=0$, the parameters remain orthogonal to the teacher subspace. This is true whenever the $\rm LC$ of the target function is larger than $1$. To make progress, it is thus necessary for the pre-activations $h^{(t)}_\nu=\langle \vec w^{(t)}, \vec z_\nu \rangle / \sqrt{d}$ to become correlated with the teacher pre-activation $\vec{h}^\star_{\nu}$. This can happen in two different ways:
%In such a scenario, looking at the overlaps $W^*\vec{w}/\sqrt{d}$ \et{we never mentioned overlaps before} one would think that the weights have made no progress in the learning. However, the pre-activations $h^{(t)}_\nu=\langle \vec w^{(t)}, \vec z^\nu \rangle / \sqrt{d}$ can become correlated with the teacher pre-activation $\vec{h}^{\nu}_*$ using two mechanisms:
%\begin{itemize}[leftmargin=1em,wide=0pt]
    %\item[(1)] 
   
    (i) By $\vec{w}^{(t)}$ directly gaining components along the teacher subspace $W^*$. Under online SGD, the data is used only once for the gradient updates, so only this mechanism is possible. It allows the directions learned by $\vec{w}^{(t)}$ at any step to depend on the directions already learned by $\vec{w}^{(t)}$. This underlies the ``staircase" phenomenon in online SGD \citep{abbe2021staircase,abbe2022merged,abbe2023sgd} as well as the notion of information exponent when applied to a single direction \citep{BenArous2021}.
    
    %\item[(2)] 
    (ii) By $\vec{w}^{(t)}$ gaining components along $\vec{z}_\nu$. Recall that the target is defined as $y_\nu=f^{\star}(\vec{z}_\nu)\,,$ and thus $\vec{w}^{(t)}$ can correlate with $\vec{h}^\star_{\nu}$. This is what happens when using gradient descent with multi-pass in our setting. This implies that even when $\vec w^{(1)}$ does not learn a direction $\vec{v}_*$, the pre-activation $h^{(t)}_\nu$ can develop a dependence on $\vec{h}^\star_{\nu}$ through the component of the gradient update along $\vec{z}_\nu$. 
%\end{itemize}
 %To understand the emergence of such a dependence, we note that while each example $\vec{z}_\nu$ contributions only one out of $n$ terms in the gradient update (Equation \ref{eq:GD_update}), when considering the component of the update along the direction $\vec{z}_\nu$, the effect of this term is of the same order as the combined effect of the remaining $n-1$ terms. And since this term is given by $\eta \frac{a}{d}f_\star(\vec z^\nu)\sigma'(h_t^\nu)\vec z^\nu$, it clearly induces a dependence on $f_\star(\vec z^\nu)$.

Let us see how this phenomenon, which we call \textit{hidden progress},  happens in practice.  From (\ref{eq:GD_update}), the update in the pre-activation $h^{(1)}_\nu$ due to the first gradient step reads:
\begin{align}
    h^{(1)}_\nu\!=\!(1&-\eta \lambda) h^{(0)}_\nu  -\eta  \frac{a}{d} \sum_{{\nu'}=1}^{n}\mathcal{L}'(h^{(0)}_{\nu'},f^\star(\vec z_{\nu'}))\sigma'(h^{(0)}_{\nu'})\langle \vec z_{\nu'}, \vec z_\nu \rangle\,. 
\end{align}
In this sum there is one term of magnitude $\cO\left({\norm{\vec z_\nu}^2}/{d}\right)=O(1)$ corresponding to $\nu'\!=\!\nu$, and $d-1$ random terms of  order $\cO\left({\langle \vec z^\nu, \vec z^\nu \rangle}{/d}\right)=O\left(1/{\sqrt{d}}\right)$. This second group of terms contributes to an effective ``noise" of order $O(1)$. The first term however, since $\|{\vec z}_{\nu} \|_2^2 \approx d$, depends on $f^\star(\vec z_\nu)$ (and thus on all components of $\vec{h}^\star_{\nu}$):
\begin{equation}\label{eq:hid_prog_term}
    \mathcal{L}'(h^{(0)}_\nu,f^\star(\vec z_\nu))\sigma'({h^{(0)}_\nu})\,.
\end{equation}
Due to this dependence between $h^{(1)}_\nu$ and $f^\star(\vec{z}_{\nu})$, in the subsequent steps i.e. $T\!=\!2$, the term $\sigma'(h^{(1)}_\nu)$ in the update (\ref{eq:update_one_example}) can now influence the direction of the gradient along the teacher subspace, leading to $\vec{w}^{(2)}$ gaining correlations with new directions in $W^\star$.  It can be seen as follow: let $m_{\vec{v}_\star}^{(t)} =\langle \vec w^{(t)}, \vec{v}^\star \rangle / \sqrt{d}$, it follows from the GD updates that 
\begin{align}
    m_{\vec{v}^\star}^{(2)} \!=\! (1&-\eta\lambda)m_{\vec{v}^\star}^{(1)} -\eta  \frac{a_j}{d} \sum_{\nu=1}^{n}\mathcal{L}'(h^{(1)}_\nu,f^\star(\vec z_\nu))\sigma'(h^{(1)}_\nu)h^{\vec{v^\star}}_{\nu}
\end{align}
Now, suppose that $\vec{v^\star}$ is not learned in the first step.
%Then from equation \ref{eq:update_first_step}, we have $\Ea{f_\star(\vec z^\nu)h^{\nu}_{\vec{v^*}}}=0$.
However, due to the hidden progress, $h^{(1)}_\nu$ is now dependent on $h^{\vec{v}^\star}_{\nu} = \langle\vec{v}^\star, \vec{z}_\nu \rangle/\sqrt{d}$, thus allowing the new expectation of the projection of the update along $\vec{v^\star}$ given by $\Ea{f^\star(\vec z_\nu)\sigma'(h^{(1)}_\nu)h^{\vec{v^\star}}_{\nu}}$ to be non-zero. This explains how the dependence of the pre-activations $h^{(t)}_\nu$ on $f^\star(\vec z^\nu)$ can allow learning of new directions even when the weights have not gained components along the teacher subspace.\looseness=-1

%However, a priori, 

This learning mechanism, however, fails when the target function is symmetric along $\vec{v^\star}$. Indeed, for such a direction, $h^{(1)}_\nu$ retains an {\it even} dependence on $h^{\vec{v}^\star}_{\nu}$, which implies that the expectation of the term 
$\Ea{f^\star(\vec z_\nu)\sigma'(h^{(t)}_\nu)h^{\vec{v^\star}}_{\nu}}$
remains $0$ for all time steps $t \in [T]$, with $T =\cO(1)$. Such directions are therefore not learned with a finite number of time-steps and batch-size $n=O(d)$ even upon re-using the batches.\looseness=-1
The rigorous control of all these quantities is a difficult task a priori. One cannot, in particular, express the above sum as an expectation w.r.t independent samples $\vec z_\nu$ since the weights $W^{(1)}$ now depend on all the samples. Fortunately, this is precisely the difficulty solved by the DMFT equations through an effective stochastic process on the pre-activations that are decoupled across training examples. The rigorous analysis is detailed in App.~\ref{sec:app:proofs}. The main lines of the DMFT equations are in Sec.~\ref{sec:main:asymptotics}.\looseness=-1

Finally, note that while our proof uses the Gaussian data assumption, the heuristic argument hints that this is not crucial. Additionally, in any {\it real} dataset samples are very correlated, and thus a given sample (or a very similar one) may appear many times. In this case, even {\it single-pass} algorithms will behave as predicted by our approach. We thus believe it describes a more realistic scenario than the pure single pass theories with fresh {\it i.i.d.} data.

\subsection{Characterization of the dynamics}

\label{sec:main:asymptotics}
Re-using batches at each gradient step requires keeping track of the pre-activations of the parameters. Since  the number of pre-activations and the dimensions of the parameters grows with $d$, we need a low-dimensional effective dynamics characterizing the quantities of interests such as the overlaps between the student and target parameters. DMFT provides such an effective dynamics through a set of coupled stochastic processes $\boldsymbol \theta^{(t)} \in \R^p$ and $\vec h^{(t)} \in \R^p$ representing the joint-distributions of the student, teacher parameters  $W^{(t)}, W^\star$. and the student, teacher pre-activations respectively.\looseness=-1

We derive the equations and prove their applicability to our setting using existing results in \citep{celentano2021highdimensional, gerbelot2023rigorous}. Asymptotically, for $d\!\to\!\infty$ with $n\!=\!\alpha d$, the joint distribution of the student and teacher pre-activations (for each sample), $\vec h^{(t)}_\nu \!=\! W^{(t)} \vec z_\nu / \sqrt{d} \!\in\! \mathbb{R}^p$ and  $\vec h^\star_\nu \!=\! W^\star \vec z_\nu / \sqrt{d} \!\in\! \mathbb{R}^k$ converge in distribution to samples from the stochastic process $\vec h^{(t)}$ and the standard normal variable $\vec h^\star$. Similarly, the joint distribution of each component of the student and teacher weights $W^{(t)}_i \in \mathbb{R}^p,W^\star_i \in \mathbb{R}^k$ with $i \in [d]$ converge in distribution to samples from the stochastic process $\boldsymbol \theta^{(t)}$ and the standard normal variable $\boldsymbol\theta^\star$.
\begin{align}
\label{eq:def_processes_main}
&\boldsymbol{\theta}^{(t+1)} = \left(1- \eta\lambda -\eta\Lambda^{(t)}\right)\boldsymbol{\theta}^{(t)} 
    + \,\eta \sum_{\tau=0}^{t-1}  R_\mathcal{L}^{(t,\tau)}\boldsymbol{\theta}^{(\tau)} - \,\eta g^{(t)}\boldsymbol{\theta}^\star + \,\eta \sum_{\tau=0}^{t-1}  \tilde R_\mathcal{L}^{(t,\tau)}\boldsymbol{\theta}^\star + \,\eta \boldsymbol{u}^{(t)}\\ \label{eq:def_processes_main_2}
    &\boldsymbol{h}^{(t)} = -\eta\sum_{\tau=0}^{t-1}R_\theta^{(t,\tau)} \nabla_{\boldsymbol{h}} \mathcal{L}(\boldsymbol{h}^{(\tau)}, \boldsymbol{h}^\star) + \boldsymbol{\omega}^{(t)}
\end{align}
Notice that the formula above is the high dimensional equivalent of the gradient descent update \eqref{eq:GD_update}. Here $\vec u^{(t)}$ and $(\boldsymbol \omega^{(t)}, \boldsymbol \theta^\star)$ are zero mean Gaussian Process with covariances $C_\mathcal{L}^{(t,\tau)}$ and $\Omega^{(t,\tau)}$ respectively, with
\begin{align} %\label{eq:def_C_main}
    C_\mathcal{L}^{(t, \tau)} \!\!= 
    \alpha \mathbb{E}_{\boldsymbol{h}^{(t)}, \boldsymbol{h}^\star}\!\left[ \nabla_{\boldsymbol{h}} \mathcal{L}(\boldsymbol{h}^{(t)}, \boldsymbol{h}^\star)\nabla_{\boldsymbol{h}} \mathcal{L}(\boldsymbol{h}^{(\tau)}, \boldsymbol{h}^\star)^\top\!\right] \nonumber \\
    \Omega^{(t,\tau)} \!\!= \!\! \begin{bmatrix}
        C_\theta^{(t,\tau)} & M^{(t)} \\
        M^{(\tau)} & 1
    \end{bmatrix} \!\!=\!  \mathbb{E}_{\boldsymbol{\theta}^{(t)}, \boldsymbol{\theta}^\star}\!\left[ \begin{pmatrix}
        \boldsymbol{\theta}^{(t)} \\ \!\! \boldsymbol{\theta}^\star
    \end{pmatrix} \begin{pmatrix}
        \boldsymbol{\theta}^{(\tau)} \!  \\ \boldsymbol{\theta}^\star \!
    \end{pmatrix}^\top \!\right] \nonumber
\end{align}
the matrix $\Lambda^{(t)}$ can be viewed as an ``effective regularization" on the parameters. $\Lambda^{(t)}$ and the projected gradient $g^{(t)}$ converge in probability to:
\begin{align} \label{eq:lambda_def}
    \Lambda^{(t)} \!=\! \alpha\mathbb{E}_{\boldsymbol{h}^{(t)}, \boldsymbol{h}^\star}\left[\nabla^2_{\vec h}\mathcal{L}\left( \vec h^{(t)}, \boldsymbol{h}^\star\right)\right]\,,\,\\ g^{(t)}\!=\!\alpha \mathbb{E}_{\boldsymbol{h}^{(t)}, \boldsymbol{h}^\star}\left[\nabla_{\vec h}\mathcal{L}\left( \vec h^{(t)}, \boldsymbol{h}^\star\right) \vec h^{\star\top}\right]
\end{align}
The memory kernels $R_\mathcal{L}^{(t,\tau)}$, $\tilde R_\mathcal{L}^{(t,\tau)}$, $R_\theta^{(t,\tau)}$ are defined as: 
\begin{eqnarray} \label{eq:R_def}
    &R_\theta^{(t,\tau)} = \mathbb{E}_{\boldsymbol{\theta}^{(t)}, \boldsymbol{\theta}^\star}\left[\frac{\partial\, \vec \theta^{(t)}}{\partial\, \vec u^{(\tau)}} \right]\,, \nonumber  \\
    &R_\mathcal{L}^{(t,\tau)} = \alpha \mathbb{E}_{\boldsymbol{h}^{(t)}, \boldsymbol{h}^\star}\left[\frac{\partial\, \nabla_{\vec h}\mathcal{L}\left( \vec h^{(t)}, \boldsymbol{h}^\star\right)}{\partial\, \boldsymbol\omega^{(\tau)}} \right]\,,\\
    &\tilde R_\mathcal{L}^{(t,\tau)} = \alpha \mathbb{E}_{\boldsymbol{h}^{(t)}, \boldsymbol{h}^\star}\left[\frac{\partial\, \nabla_{\vec h}\mathcal{L}\left( \vec h^{(t)}, \boldsymbol{h}^\star\right)}{\partial\, \left(\boldsymbol\theta^\star\right)^{(\tau)}} \right]\,,\quad
\end{eqnarray}
and $R_\mathcal{L}^{(t,t)} = \tilde R_\mathcal{L}^{(t,t)} = 0$, $R_\theta^{(t,t)}=1$.
Finally, the low dimensional projections of the weights $M^{(t)}$ will obey
\begin{equation} \label{eq:M_update}
    M^{(t+1)} = (1-\eta\lambda)M^{(t)} -\eta g^{(t)} \, .
\end{equation}

Notice that these definitions are well-posed because of the causal structure of the gradient descent upgrades, and by extension of \eqref{eq:def_processes_main}: the distribution of $(\boldsymbol \theta^{(t+1)}, \vec h^{(t+1)})$ is completely determined by $\{ (\boldsymbol\theta^{(\tau)}, \vec h^{(\tau)})\}_{\tau \in [t]}$ and the auxiliary quantities in eqs.~(\ref{eq:lambda_def},~\ref{eq:R_def}). Iterating backwards we reach the initial condition $\boldsymbol \theta^{(0)}$, which is a simple function of the data distribution and the initial conditions of the weights. For additional details we refer to App. \ref{sec:app:proofs}. Notice that it is also possible to write this set of equations as a function of a single stochastic process on $\vec h$, as in App. \ref{sec:app:hyperparams}.\looseness=-1

\paragraph{Sketch of proof of the hidden progress ---} Finally, we explain how the DMFT equations relate to the phenomenon in Sec. \ref{sec:hidden_prog} and allow us to prove Th. \ref{thm:main:2step_learning}.
The term $\nabla_{\vec h^{(1)}}\mathcal{L}\left( \vec h^{(1)}\right)$ in (\ref{eq:def_processes_main_2}) precisely corresponds to the contribution to pre-activation of a point $\vec{z}^\nu$ (App. \ref{sec:app:proof_hidden_prog})
from the gradient at the same point $\vec{z}^\nu$. As we discussed in Section \ref{sec:hidden_prog}, this term induces a dependence between $\vec h^{(1)}_\nu$ and $\vec h^\star_\nu$ even when the overlaps $M^{(t)}$ are $0$. At time $T=1$, the response term simplifies to $R_\theta^{(1,0)}=\mathbf{I_d}$ and the pre-activations can be expressed as the random variable $\nabla_{\vec h^{(t)}}\mathcal{L}\left( \vec h^{(t)}\right)$ with added Gaussian noise.
Analogous to section \ref{sec:hidden_prog}, we denote by $M_{\vec{v}^*}^{(t)}$ the limiting value of the overlaps $\frac{1}{d}W^{(t)}\vec{v}^*$ for some $\vec{v}^* = (\vec{u}^*)^\top W^*$ with $\vec{u}^* \in \R^p$.
Propagating the equations over the first two steps, and using Equation \eqref{eq:M_update}, we show that $M_{\vec{v}^*}^{(2)}$ can be expressed as an expectation w.r.t the pre-activations $\vec{h^\star}$ of a function dependent on the target $g^\star$, the second layer $\vec{a}$, and the activation function $\sigma$:
\begin{equation}\label{eq:overlap_}
    M_{\vec{v}^*}^{(2)} \!=\! (1 \!-\! \eta\lambda) M_{\vec{v}^*}^{(1)} + \eta \alpha \Eb{\vec{h^\star}}{F_{\sigma, a}(g^\star(\vec{h^\star}))\vec{h^\star}^\top}\vec{u}^\star\,.
\end{equation} 
The function $F_{\sigma, a}$ is described in App.\ref{app:sec:main_thm_proof},  Eq.(\ref{eq:F_def}).  Finally, we show an equivalence between the condition $\Eb{\vec{h^\star}}{F_{\sigma, a}(g^\star(\vec{h^\star}))\vec{h^\star}^\top}\vec{u}^\star = 0$ to the condition $\Eb{\vec{z}}{F(f^\star(\vec{z}))\langle \vec{v}^\star, \vec{z}\rangle}=0$ for general $F$ in definition \ref{def:two_step_hard}.

\paragraph{General multi-pass schemes ---}
\label{sec:main:extensions}
While Theorem \ref{thm:main:2step_learning} considers finite number of updates with the same batch of data for each step, it can be naturally generalized to other setups involving multiple-passes over a finite-number of mini-batches of size $\mathcal{O}(d)$. For instance, one can cycle over distinct minibatches with each cycle constituting one epoch or pass through the dataset. Theorem \ref{thm:main:2step_learning} remains valid under such a setup with the onset of weak-recovery shifting to the start of the second epoch instead of the second gradient step. We provide a sketch of this extension in Appendix \ref{sec:app:repeat}. On the other hand, if the minibatches are sampled with replacement from the dataset, the weak recovery still starts at the second gradient step. We illustrate this in Fig.\ref{fig:overlaps_batch} (in appendix). Furthermore, we empirically observe that the phenomenon holds even when considering the limit of mini-batch size $1$ (Figure \ref{fig:overlaps_minibatch} Appendix). Proving this,  however, 
remains out of the reach of the present technique.
\section{Conclusions}
\label{sec:main:conclusion}
Our study analyzes the training dynamics of two-layer neural networks for learning multi-index target functions, distinctively focusing on {\it multi-pass} gradient descent which involves reusing batches multiple times. We find that this enables gradient descent to exceed the constraints imposed by information and leap exponents. 

Gradient descent is found to achieve a positive correlation with the target function across a broader class than previously anticipated, with only two data batch repetitions. Our analysis further demonstrates that the limitations associated with information and leap exponents, staircase learning, and CSQ lower bounds are restricted to online/single pass SGD and do not describe the class of functions inherently easy or hard to learn by gradient-based methods for neural networks. 

Our conclusions follow from rigorous mathematical proofs derived from Dynamical Mean Field Theory, through which we also offer an analytical description of the dynamic processes of low-dimensional weight projections—a noteworthy insight. Additionally, we provide a closed-form depiction of these dynamical processes and illustrate our theoretical findings with numerical experiments.

\section{Acknowledgements}
We thank Cedric Gerbelot, Bruno Loureiro and Ludovic Stephan for insightful discussions. We also acknowledge funding from the Swiss National Science Foundation grant SNFS OperaGOST  (grant number $200390$), and SMArtNet (grant number $212049$).

\bibliographystyle{abbrvnat}
\bibliography{biblio}

\begin{thebibliography}{57}
\providecommand{\natexlab}[1]{#1}
\providecommand{\url}[1]{\texttt{#1}}
\expandafter\ifx\csname urlstyle\endcsname\relax
  \providecommand{\doi}[1]{doi: #1}\else
  \providecommand{\doi}{doi: \begingroup \urlstyle{rm}\Url}\fi

\bibitem[Abbe et~al.(2021)Abbe, Boix-Adsera, Brennan, Bresler, and Nagaraj]{abbe2021staircase}
E.~Abbe, E.~Boix-Adsera, M.~S. Brennan, G.~Bresler, and D.~Nagaraj.
\newblock The staircase property: How hierarchical structure can guide deep learning.
\newblock \emph{Advances in Neural Information Processing Systems}, 34:\penalty0 26989--27002, 2021.

\bibitem[Abbe et~al.(2022)Abbe, Boix-Adsera, and Misiakiewicz]{abbe2022merged}
E.~Abbe, E.~Boix-Adsera, and T.~Misiakiewicz.
\newblock The merged-staircase property: a necessary and nearly sufficient condition for sgd learning of sparse functions on two-layer neural networks.
\newblock In \emph{Conference on Learning Theory}, pages 4782--4887. PMLR, 2022.

\bibitem[Abbe et~al.(2023)Abbe, Boix-Adsera, and Misiakiewicz]{abbe2023sgd}
E.~Abbe, E.~Boix-Adsera, and T.~Misiakiewicz.
\newblock Sgd learning on neural networks: leap complexity and saddle-to-saddle dynamics, 2023.

\bibitem[Agoritsas et~al.(2018)Agoritsas, Biroli, Urbani, and Zamponi]{ABUZ18}
E.~Agoritsas, G.~Biroli, P.~Urbani, and F.~Zamponi.
\newblock Out-of-equilibrium dynamical mean-field equations for the perceptron model.
\newblock \emph{Journal of Physics A: Mathematical and Theoretical}, 51\penalty0 (8):\penalty0 085002, 2018.

\bibitem[Andrews(2004)]{andrews2004special}
G.~E. Andrews.
\newblock \emph{Special functions}.
\newblock Cambridge University Press, 2004.

\bibitem[Aubin et~al.(2019)Aubin, Maillard, Barbier, Krzakala, Macris, and Zdeborová]{Aubin_2019}
B.~Aubin, A.~Maillard, J.~Barbier, F.~Krzakala, N.~Macris, and L.~Zdeborová.
\newblock The committee machine: computational to statistical gaps in learning a two-layers neural network.
\newblock \emph{Journal of Statistical Mechanics: Theory and Experiment}, 2019\penalty0 (12):\penalty0 124023, Dec. 2019.
\newblock ISSN 1742-5468.
\newblock \doi{10.1088/1742-5468/ab43d2}.
\newblock URL \url{http://dx.doi.org/10.1088/1742-5468/ab43d2}.

\bibitem[Ba et~al.(2022)Ba, Erdogdu, Suzuki, Wang, Wu, and Yang]{ba2022high}
J.~Ba, M.~A. Erdogdu, T.~Suzuki, Z.~Wang, D.~Wu, and G.~Yang.
\newblock High-dimensional asymptotics of feature learning: How one gradient step improves the representation.
\newblock In S.~Koyejo, S.~Mohamed, A.~Agarwal, D.~Belgrave, K.~Cho, and A.~Oh, editors, \emph{Advances in Neural Information Processing Systems}, volume~35, pages 37932--37946. Curran Associates, Inc., 2022.

\bibitem[Ba et~al.(2023)Ba, Erdogdu, Suzuki, Wang, and Wu]{ba2023learning}
J.~Ba, M.~A. Erdogdu, T.~Suzuki, Z.~Wang, and D.~Wu.
\newblock Learning in the presence of low-dimensional structure: a spiked random matrix perspective.
\newblock In \emph{Neurips 2023}, 2023.

\bibitem[Barbier et~al.(2019)Barbier, Krzakala, Macris, Miolane, and Zdeborov{\'a}]{barbier2019optimal}
J.~Barbier, F.~Krzakala, N.~Macris, L.~Miolane, and L.~Zdeborov{\'a}.
\newblock Optimal errors and phase transitions in high-dimensional generalized linear models.
\newblock \emph{Proceedings of the National Academy of Sciences}, 116\penalty0 (12):\penalty0 5451--5460, 2019.

\bibitem[Bayati and Montanari(2011)]{bayati2011dynamics}
M.~Bayati and A.~Montanari.
\newblock The dynamics of message passing on dense graphs, with applications to compressed sensing.
\newblock \emph{IEEE Transactions on Information Theory}, 57\penalty0 (2):\penalty0 764--785, 2011.

\bibitem[{Ben Arous} et~al.(1997){Ben Arous}, Guionnet, et~al.]{arous1997symmetric}
G.~{Ben Arous}, A.~Guionnet, et~al.
\newblock Symmetric langevin spin glass dynamics.
\newblock \emph{The Annals of Probability}, 25\penalty0 (3):\penalty0 1367--1422, 1997.

\bibitem[{Ben Arous} et~al.(2021){Ben Arous}, Gheissari, and Jagannath]{BenArous2021}
G.~{Ben Arous}, R.~Gheissari, and A.~Jagannath.
\newblock Online stochastic gradient descent on non-convex losses from high-dimensional inference.
\newblock \emph{Journal of Machine Learning Research}, 22\penalty0 (106):\penalty0 1--51, 2021.

\bibitem[Ben~Arous et~al.(2022)Ben~Arous, Gheissari, and Jagannath]{ben2022high}
G.~Ben~Arous, R.~Gheissari, and A.~Jagannath.
\newblock High-dimensional limit theorems for sgd: Effective dynamics and critical scaling.
\newblock \emph{Advances in Neural Information Processing Systems}, 35:\penalty0 25349--25362, 2022.

\bibitem[Bietti et~al.(2023)Bietti, Bruna, and Pillaud-Vivien]{bietti2023learning}
A.~Bietti, J.~Bruna, and L.~Pillaud-Vivien.
\newblock On learning gaussian multi-index models with gradient flow.
\newblock \emph{arXiv preprint arXiv:2310.19793}, 2023.

\bibitem[Bolthausen(2014)]{bolthausen2014iterative}
E.~Bolthausen.
\newblock An iterative construction of solutions of the tap equations for the sherrington--kirkpatrick model.
\newblock \emph{Communications in Mathematical Physics}, 325\penalty0 (1):\penalty0 333--366, 2014.

\bibitem[Bordelon et~al.(2020)Bordelon, Canatar, and Pehlevan]{bordelon20a}
B.~Bordelon, A.~Canatar, and C.~Pehlevan.
\newblock Spectrum dependent learning curves in kernel regression and wide neural networks.
\newblock In H.~D. III and A.~Singh, editors, \emph{Proceedings of the 37th International Conference on Machine Learning}, volume 119 of \emph{Proceedings of Machine Learning Research}, pages 1024--1034. PMLR, 13--18 Jul 2020.

\bibitem[Bouchaud et~al.(1998)Bouchaud, Cugliandolo, Kurchan, and M{\'e}zard]{bouchaud1998out}
J.-P. Bouchaud, L.~F. Cugliandolo, J.~Kurchan, and M.~M{\'e}zard.
\newblock Out of equilibrium dynamics in spin-glasses and other glassy systems.
\newblock \emph{Spin glasses and random fields}, 12:\penalty0 161, 1998.

\bibitem[Celentano et~al.(2021)Celentano, Cheng, and Montanari]{celentano2021highdimensional}
M.~Celentano, C.~Cheng, and A.~Montanari.
\newblock The high-dimensional asymptotics of first order methods with random data.
\newblock \emph{arXiv:2112.07572}, 2021.

\bibitem[Chen and Meka(2020)]{chen2020learning}
S.~Chen and R.~Meka.
\newblock Learning polynomials in few relevant dimensions.
\newblock In \emph{Conference on Learning Theory}, pages 1161--1227. PMLR, 2020.

\bibitem[Chen et~al.(2021)Chen, Klivans, and Meka]{chen2021efficiently}
S.~Chen, A.~Klivans, and R.~Meka.
\newblock Efficiently learning one hidden layer relu networks from queries.
\newblock \emph{Advances in Neural Information Processing Systems}, 34:\penalty0 24087--24098, 2021.

\bibitem[Chen et~al.(2022)Chen, Gollakota, Klivans, and Meka]{chen2022hardness}
S.~Chen, A.~Gollakota, A.~Klivans, and R.~Meka.
\newblock Hardness of noise-free learning for two-hidden-layer neural networks.
\newblock \emph{Advances in Neural Information Processing Systems}, 35:\penalty0 10709--10724, 2022.

\bibitem[Chizat and Bach(2018)]{chizat2018global}
L.~Chizat and F.~Bach.
\newblock On the global convergence of gradient descent for over-parameterized models using optimal transport.
\newblock \emph{Advances in neural information processing systems}, 31, 2018.

\bibitem[Cugliandolo(2003)]{Cu02}
L.~F. Cugliandolo.
\newblock Dynamics of glassy systems.
\newblock In \emph{Slow Relaxations and nonequilibrium dynamics in condensed matter}. Springer, 2003.

\bibitem[Cui et~al.(2021)Cui, Loureiro, Krzakala, and Zdeborov\'{a}]{Cui2021}
H.~Cui, B.~Loureiro, F.~Krzakala, and L.~Zdeborov\'{a}.
\newblock Generalization error rates in kernel regression: The crossover from the noiseless to noisy regime.
\newblock In M.~Ranzato, A.~Beygelzimer, Y.~Dauphin, P.~Liang, and J.~W. Vaughan, editors, \emph{Advances in Neural Information Processing Systems}, volume~34, pages 10131--10143. Curran Associates, Inc., 2021.

\bibitem[Damian et~al.(2022)Damian, Lee, and Soltanolkotabi]{damian2022neural}
A.~Damian, J.~Lee, and M.~Soltanolkotabi.
\newblock Neural networks can learn representations with gradient descent.
\newblock In P.-L. Loh and M.~Raginsky, editors, \emph{Proceedings of Thirty Fifth Conference on Learning Theory}, volume 178 of \emph{Proceedings of Machine Learning Research}, pages 5413--5452. PMLR, 02--05 Jul 2022.

\bibitem[Damian et~al.(2023)Damian, Nichani, Ge, and Lee]{damian_2023_smoothing}
A.~Damian, E.~Nichani, R.~Ge, and J.~D. Lee.
\newblock Smoothing the {Landscape} {Boosts} the {Signal} for {SGD}: {Optimal} {Sample} {Complexity} for {Learning} {Single} {Index} {Models}.
\newblock Technical report, Princeton, May 2023.
\newblock arXiv:2305.10633 [cs, math, stat] type: article.

\bibitem[Dandi et~al.(2023)Dandi, Krzakala, Loureiro, Pesce, and Stephan]{dandi2023twolayer}
Y.~Dandi, F.~Krzakala, B.~Loureiro, L.~Pesce, and L.~Stephan.
\newblock How two-layer neural networks learn, one (giant) step at a time, 2023.

\bibitem[Diakonikolas et~al.(2020)Diakonikolas, Kane, Kontonis, and Zarifis]{diakonikolas2020algorithms}
I.~Diakonikolas, D.~M. Kane, V.~Kontonis, and N.~Zarifis.
\newblock Algorithms and sq lower bounds for pac learning one-hidden-layer relu networks.
\newblock In \emph{Conference on Learning Theory}, pages 1514--1539. PMLR, 2020.

\bibitem[Dietrich et~al.(1999)Dietrich, Opper, and Sompolinsky]{Dietrich1999}
R.~Dietrich, M.~Opper, and H.~Sompolinsky.
\newblock Statistical mechanics of support vector networks.
\newblock \emph{Phys. Rev. Lett.}, 82:\penalty0 2975--2978, Apr 1999.
\newblock \doi{10.1103/PhysRevLett.82.2975}.

\bibitem[Eissfeller and Opper(1992)]{Opper_92}
H.~Eissfeller and M.~Opper.
\newblock New method for studying the dynamics of disordered spin systems without finite-size effects.
\newblock \emph{Physical review letters}, 68\penalty0 (13):\penalty0 2094, 1992.

\bibitem[Eissfeller and Opper(1994)]{Opper_94}
H.~Eissfeller and M.~Opper.
\newblock Mean-field {M}onte {C}arlo approach to the {S}herrington-{K}irkpatrick model with asymmetric couplings.
\newblock \emph{Physical Review E}, 50\penalty0 (2):\penalty0 709, 1994.

\bibitem[Georges et~al.(1996)Georges, Kotliar, Krauth, and Rozenberg]{GKKR96}
A.~Georges, G.~Kotliar, W.~Krauth, and M.~J. Rozenberg.
\newblock Dynamical mean-field theory of strongly correlated fermion systems and the limit of infinite dimensions.
\newblock \emph{Reviews of Modern Physics}, 68\penalty0 (1):\penalty0 13, 1996.

\bibitem[Gerbelot et~al.(2023)Gerbelot, Troiani, Mignacco, Krzakala, and Zdeborova]{gerbelot2023rigorous}
C.~Gerbelot, E.~Troiani, F.~Mignacco, F.~Krzakala, and L.~Zdeborova.
\newblock Rigorous dynamical mean field theory for stochastic gradient descent methods, 2023.

\bibitem[Ghorbani et~al.(2019)Ghorbani, Mei, Misiakiewicz, and Montanari]{Ghorbani2019}
B.~Ghorbani, S.~Mei, T.~Misiakiewicz, and A.~Montanari.
\newblock Limitations of lazy training of two-layers neural network.
\newblock In H.~Wallach, H.~Larochelle, A.~Beygelzimer, F.~d\textquotesingle Alch\'{e}-Buc, E.~Fox, and R.~Garnett, editors, \emph{Advances in Neural Information Processing Systems}, volume~32. Curran Associates, Inc., 2019.

\bibitem[Ghorbani et~al.(2020)Ghorbani, Mei, Misiakiewicz, and Montanari]{Ghorbani2020}
B.~Ghorbani, S.~Mei, T.~Misiakiewicz, and A.~Montanari.
\newblock When do neural networks outperform kernel methods?
\newblock In H.~Larochelle, M.~Ranzato, R.~Hadsell, M.~Balcan, and H.~Lin, editors, \emph{Advances in Neural Information Processing Systems}, volume~33, pages 14820--14830. Curran Associates, Inc., 2020.

\bibitem[Goel et~al.(2020)Goel, Gollakota, Jin, Karmalkar, and Klivans]{goel2020superpolynomial}
S.~Goel, A.~Gollakota, Z.~Jin, S.~Karmalkar, and A.~Klivans.
\newblock Superpolynomial lower bounds for learning one-layer neural networks using gradient descent.
\newblock In \emph{International Conference on Machine Learning}, pages 3587--3596. PMLR, 2020.

\bibitem[Kearns(1998)]{kearns1998efficient}
M.~Kearns.
\newblock Efficient noise-tolerant learning from statistical queries.
\newblock \emph{Journal of the ACM (JACM)}, 45\penalty0 (6):\penalty0 983--1006, 1998.

\bibitem[Loureiro et~al.(2021)Loureiro, Gerbelot, Cui, Goldt, Krzakala, Mezard, and Zdeborov\'{a}]{loureiro_learning_2021}
B.~Loureiro, C.~Gerbelot, H.~Cui, S.~Goldt, F.~Krzakala, M.~Mezard, and L.~Zdeborov\'{a}.
\newblock Learning curves of generic features maps for realistic datasets with a teacher-student model.
\newblock In M.~Ranzato, A.~Beygelzimer, Y.~Dauphin, P.~Liang, and J.~W. Vaughan, editors, \emph{Advances in Neural Information Processing Systems}, volume~34, pages 18137--18151. Curran Associates, Inc., 2021.

\bibitem[Maillard et~al.(2020)Maillard, Loureiro, Krzakala, and Zdeborová]{maillard2020phase}
A.~Maillard, B.~Loureiro, F.~Krzakala, and L.~Zdeborová.
\newblock Phase retrieval in high dimensions: Statistical and computational phase transitions, 2020.

\bibitem[Mannelli and Urbani(2021)]{mannelli2021just}
S.~S. Mannelli and P.~Urbani.
\newblock Just a momentum: Analytical study of momentum-based acceleration methods in paradigmatic high-dimensional non-convex problems.
\newblock \emph{NeurIPS}, 2021.

\bibitem[Mannelli et~al.(2019{\natexlab{a}})Mannelli, Biroli, Cammarota, Krzakala, and Zdeborov{\'a}]{mannelli2019afraid}
S.~S. Mannelli, G.~Biroli, C.~Cammarota, F.~Krzakala, and L.~Zdeborov{\'a}.
\newblock Who is afraid of big bad minima? analysis of gradient-flow in spiked matrix-tensor models.
\newblock In \emph{Advances in Neural Information Processing Systems}, pages 8676--8686, 2019{\natexlab{a}}.

\bibitem[Mannelli et~al.(2019{\natexlab{b}})Mannelli, Krzakala, Urbani, and Zdeborova]{SKUZ19}
S.~S. Mannelli, F.~Krzakala, P.~Urbani, and L.~Zdeborova.
\newblock Passed \& spurious: Descent algorithms and local minima in spiked matrix-tensor models.
\newblock In \emph{international conference on machine learning}, pages 4333--4342, 2019{\natexlab{b}}.

\bibitem[Mannelli et~al.(2020)Mannelli, Biroli, Cammarota, Krzakala, Urbani, and Zdeborov{\'a}]{SBCKUZ20}
S.~S. Mannelli, G.~Biroli, C.~Cammarota, F.~Krzakala, P.~Urbani, and L.~Zdeborov{\'a}.
\newblock Marvels and pitfalls of the langevin algorithm in noisy high-dimensional inference.
\newblock \emph{Physical Review X}, 10\penalty0 (1):\penalty0 011057, 2020.

\bibitem[Mei et~al.(2018)Mei, Montanari, and Nguyen]{mei2018mean}
S.~Mei, A.~Montanari, and P.-M. Nguyen.
\newblock A mean field view of the landscape of two-layer neural networks.
\newblock \emph{Proceedings of the National Academy of Sciences}, 115\penalty0 (33):\penalty0 E7665--E7671, 2018.

\bibitem[Mignacco and Urbani(2022)]{Mignacco_2022}
F.~Mignacco and P.~Urbani.
\newblock The effective noise of stochastic gradient descent.
\newblock \emph{Journal of Statistical Mechanics: Theory and Experiment}, 2022\penalty0 (8):\penalty0 083405, aug 2022.
\newblock \doi{10.1088/1742-5468/ac841d}.
\newblock URL \url{https://doi.org/10.1088/1742-5468/ac841d}.

\bibitem[Mignacco et~al.(2020)Mignacco, Krzakala, Urbani, and Zdeborov{\'a}]{mignacco2020dynamical}
F.~Mignacco, F.~Krzakala, P.~Urbani, and L.~Zdeborov{\'a}.
\newblock Dynamical mean-field theory for stochastic gradient descent in gaussian mixture classification.
\newblock \emph{Advances in Neural Information Processing Systems}, 33:\penalty0 9540--9550, 2020.

\bibitem[Mignacco et~al.(2021)Mignacco, Urbani, and Zdeborov{\'a}]{mignacco2021stochasticity}
F.~Mignacco, P.~Urbani, and L.~Zdeborov{\'a}.
\newblock Stochasticity helps to navigate rough landscapes: comparing gradient-descent-based algorithms in the phase retrieval problem.
\newblock \emph{Machine Learning: Science and Technology}, 2\penalty0 (3):\penalty0 035029, 2021.

\bibitem[Moniri et~al.(2023)Moniri, Lee, Hassani, and Dobriban]{moniri2023theory}
B.~Moniri, D.~Lee, H.~Hassani, and E.~Dobriban.
\newblock A theory of non-linear feature learning with one gradient step in two-layer neural networks, 2023.

\bibitem[Montanari and Saeed(2022)]{Montanari2022}
A.~Montanari and B.~N. Saeed.
\newblock Universality of empirical risk minimization.
\newblock In P.-L. Loh and M.~Raginsky, editors, \emph{Proceedings of Thirty Fifth Conference on Learning Theory}, volume 178 of \emph{Proceedings of Machine Learning Research}, pages 4310--4312. PMLR, 02--05 Jul 2022.

\bibitem[Mousavi-Hosseini et~al.(2023)Mousavi-Hosseini, Wu, Suzuki, and Erdogdu]{mousavihosseini2023gradientbased}
A.~Mousavi-Hosseini, D.~Wu, T.~Suzuki, and M.~A. Erdogdu.
\newblock Gradient-based feature learning under structured data, 2023.

\bibitem[Rotskoff and Vanden-Eijnden(2022)]{rotskoff2018trainability}
G.~Rotskoff and E.~Vanden-Eijnden.
\newblock Trainability and accuracy of artificial neural networks: An interacting particle system approach.
\newblock \emph{Communications on Pure and Applied Mathematics}, 75\penalty0 (9):\penalty0 1889--1935, 2022.
\newblock \doi{https://doi.org/10.1002/cpa.22074}.

\bibitem[Roy et~al.(2019)Roy, Biroli, Bunin, and Cammarota]{Roy_2019}
F.~Roy, G.~Biroli, G.~Bunin, and C.~Cammarota.
\newblock Numerical implementation of dynamical mean field theory for disordered systems: application to the lotka–volterra model of ecosystems.
\newblock \emph{Journal of Physics A: Mathematical and Theoretical}, 52\penalty0 (48):\penalty0 484001, Nov. 2019.
\newblock ISSN 1751-8121.
\newblock \doi{10.1088/1751-8121/ab1f32}.
\newblock URL \url{http://dx.doi.org/10.1088/1751-8121/ab1f32}.

\bibitem[Saad and Solla(1995)]{saad.solla_1995_line}
D.~Saad and S.~A. Solla.
\newblock On-line learning in soft committee machines.
\newblock \emph{Physical Review E}, 52\penalty0 (4):\penalty0 4225--4243, Oct. 1995.
\newblock \doi{10.1103/PhysRevE.52.4225}.

\bibitem[Sirignano and Spiliopoulos(2020)]{sirignano2020mean}
J.~Sirignano and K.~Spiliopoulos.
\newblock Mean field analysis of neural networks: A central limit theorem.
\newblock \emph{Stochastic Processes and their Applications}, 130\penalty0 (3):\penalty0 1820--1852, 2020.

\bibitem[Sompolinsky and Zippelius(1981)]{SZipp81}
H.~Sompolinsky and A.~Zippelius.
\newblock Dynamic theory of the spin-glass phase.
\newblock \emph{Phys. Rev. Lett.}, 47:\penalty0 359--362, Aug 1981.

\bibitem[Sompolinsky et~al.(1988)Sompolinsky, Crisanti, and Sommers]{Sompolinsky_88}
H.~Sompolinsky, A.~Crisanti, and H.~J. Sommers.
\newblock Chaos in random neural networks.
\newblock \emph{Phys. Rev. Lett.}, 61:\penalty0 259--262, Jul 1988.

\bibitem[Zweig and Bruna(2023)]{zweig2023symmetric}
A.~Zweig and J.~Bruna.
\newblock Symmetric single index learning, 2023.

\end{thebibliography}

\newpage

\appendix
\onecolumn
\section{Mathematical Proofs}\label{sec:app:proofs}

\subsection{Notations}
We use the asymptotic notation $f_1(d)=\Theta_d(f_1(d))$ to denote $c \abs{f_1(d)} \leq \abs{f_2(d)} \leq C \abs{f_1(d)}$ for some constants $c, C > 0$ and large enough $d$. Similarly, $f_1(d)=o_d(f_1(d))$ denotes
$f_1(d) \leq c(f_1(d))$ for any constant $c > 0$ and large enough $d$.
We use $\xrightarrow[P]{d,n \rightarrow \infty}, \xrightarrow[D]{d,n \rightarrow \infty}$ to denote convergence in probability and convergence in distribution respectively as $d,n \rightarrow \infty$ with $n/d=\alpha > 0$. We denote subspaces and linear operators, matrices on  $\R^d$ through uppercase letters $A,B,C,\cdots$. For any subspace $A \in \R^d$, we denote by $A_\perp$, its orthogonal complement, i.e the subspace of vectors orthogonal to all $\vec{v} \in A$.
\subsection{DMFT and iterative conditioning}\label{sec:iterative}
% \et{FIX THE TYPOS}
Unlike online SGD, the preactivations after multiple steps no longer remain Gaussian since the weights become dependent on the data. This prevents marginalizing over the orthogonal components over the preactivations and relating the learning of new directions to the Hermite decomposition of the target function.
Our proof circumvents these issues by utilizing a simpler effective process that decouples the pre-activations for different samples. The effective process is obtained using a rigorous version of the Dynamical Mean Field Theory derived in \citep{Montanari2022} and \citep{gerbelot2023rigorous}.

The derivation of Dynamical Mean Field Theory in the above works has the following essential elements:

\begin{itemize}[noitemsep,leftmargin=1em,wide=0pt]
    \item[1.] Iterative conditioning: The proof in  \citep{gerbelot2023rigorous,Montanari2022} for obtaining the DMFT equations relies on the observation that the gradient descent algorithm in Equation \eqref{eq:act_update} for a finite-number of iterations can be described completely through projections of the inputs design matrix $\vec{Z} \in \R^{n \times d}$ along a finite number of vectors in $\R^n, \R^d$. The iterative conditioning technique \citep{bolthausen2014iterative,bayati2011dynamics} then involves replacing the components of $\vec{Z} \in \R^{n \times d}$ along directions orthogonal to these projections by independent Gaussian random variables. This leads to a non-Markovian structure in the effective processes for the activations, parameters. 
     \item[2.] The concentration of finite-dimensional order parameters such as overlaps of the neuron parameters with the teacher neurons/subspace as well as expectations w.r.t the empirical measure of the pre-activations and parameters. 
\end{itemize}

Using the above elements, DMFT provides a low-dimensional effective dynamics characterizing the limiting joint empirical measure of the student parameters, as well as the pre-activations. 
We illustrate the proof for the activations after the first gradient step, illustrating the relationship with the ``hidden progress" described in section \ref{sec:hidden_prog}

Let $\vec{H}^{(t)} = \frac{1}{\sqrt{d}}\vec{Z}(\vec{W}^{(t)})^\top$ denote the $n \times p$ matrix of pre-activations at time $t$. Similarly, let $\vec{H}^* \in \R^{n\times k}$ denote the matrix of input activations in the target function. We denote by $\nabla_{\vec{H}} \mathcal{L}(\vec{H}^{*},\vec{H}^{(t)}) \in \R^{n \times p}$, the matrix derivative of $\mathcal{L}$ w.r.t the corresponding entries of the pre-activations matrix $\vec{H}^{(t)}$.
After each gradient update, $\vec{W}^{(t)}$ and the preactivations $\vec{H}^{(t)}$ gain a dependence on $\vec{Z}$.
 The Iterative conditioning technique works around this dependence by 
 conditioning on the sigma algebra generated by $\vec{H}^{(t)},\vec{H}^*$ and $\vec{W}^{(t)}$ instead of on $\vec{Z}$.
 Since $\vec{Z}$ interacts with $\vec{W}^{(t)}$ and 
$\vec{H}^{(t)}$ only through projections (along right with $\vec{W}^{(t)}$ and left with $\nabla_{\vec{H}} \mathcal{L}(\vec{H}^{*},\vec{H}^{(t)})$ respectively), the conditioning allows the components of $\vec{Z}$ orthogonal to $\nabla_H \mathcal{L}(\vec{H}^{(*)},\vec{H}^{(t)})$ and $\vec{W}^{(t)}$ to be replaced by independent Gaussian entries.  

For the first-gradient step, we only require conditioning on $\vec{H}^{(0)}, \vec{H}^{*}, \vec{W}^{0}$.

From (\eqref{eq:GD_update}), we obtain the following update for $\vec{H}^{(1)}$:
\begin{equation}\label{eq:act_update}
    \vec{H}^{(1)}= \vec{H}^{(0)} + \frac{\eta}{d}\vec{Z}(\vec{Z})^\top  \nabla_{\vec{H}} \mathcal{L}(\vec{H}^*, \vec{H}^{(0)})\vec{a}^\top,
\end{equation}
Let $\vec{\bar{W}}^{(0)} = \begin{pmatrix}
    \vec{W}^{(0)}\\
    \vec{W}^{\star}
\end{pmatrix} \in \R^{p+k \times d}$
By the equivalence of projection and conditioning for Gaussian random variables, we have that the following inequality holds in distribution:
\begin{equation}
\vec{Z}\vert_{\vec{H}^{0},\vec{H}^*,\vec{\bar{W}}^{(0)}} \overset{d}{=} \vec{Z}P_w^\top + \tilde{\vec{Z}}(P^{\perp}_w)^\top,
\end{equation}
where $\tilde{\vec{Z}}$ is independent of $\vec{Z}$ and $P_w$ denote the projection operator along $\vec{\bar{W}}^{(0)},\vec{\bar{W}}^{\star}$, defined as:
\begin{align*}
    P_w = \vec{\bar{W}}^{(0)} (\vec{\bar{W}}^{(0)}(\vec{\bar{W}}^{(0)})^\top)^{-1}(\vec{\bar{W}}^{(0)})^\top. 
\end{align*}

Substituting in Equation (\eqref{eq:act_update}), we obtain:
\begin{equation}\label{eq:act_iterative}
    \vec{H}^{(1)} \overset{d}{=} \vec{H}^{(0)} +
    \eta \frac{1}{d}\tilde{\vec{Z}}(P^{\perp}_w)^\top P^{\perp}_w(\tilde{\vec{Z}})^\top \nabla_{\vec{H}} \mathcal{L}(\vec{H}^*, \vec{H}^{(0)})\vec{a}^\top + \eta\frac{1}{d}\vec{H_0}(\vec{\bar{W}}^{(0)}(\vec{\bar{W}}^{(0)})^\top)^{-1}(\vec{H_0})^\top \nabla_{\vec{H}} \mathcal{L}(\vec{H}^*, \vec{H}^{(0)})\vec{a}^\top
\end{equation}
Since the projection, $P_w$ is along a low-dimensional subspace of dimension at most $p$, we have $P^{\perp}_w \approx \mathbf{I}_d$. 
One can therefore show that $\frac{1}{\sqrt{d}}\tilde{\vec{Z}}(P^{\perp}_w)^\top P^{\perp}_w(\tilde{\vec{Z}})^\top \vec{u}$ converges in probability to $\tilde{\vec{Z}} \tilde{\vec{Z}}^\top \vec{u}$. for any deterministic $\vec{u} \in \R^d$ with $\norm{u}= \cO(\sqrt{d})$. Applying it to the vector $\vec{u} = \mathcal{L}(\vec{H}^*, \vec{H}^{(0)})$, conditioned on $\vec{H}^*, \vec{H}^{(0)}$, we obtain that:
\begin{equation}
 \frac{1}{\sqrt{d}}\norm{\frac{1}{d}\tilde{\vec{Z}}(P^{\perp}_w)^\top P^{\perp}_w(\tilde{\vec{Z}})^\top \nabla_{\vec{H}} \mathcal{L}(\vec{H}^*, \vec{H}^{(0)}) - \frac{1}{d}\tilde{\vec{Z}}(\tilde{\vec{Z}})^\top \nabla_{\vec{H}}\mathcal{L}(\vec{H}^*, \vec{H}^{(0)})}_F    \xrightarrow[P]{n,d \rightarrow \infty} 0. 
\end{equation}
Now, the diagonal entries of $\frac{1}{d}\tilde{\vec{Z}}(\tilde{\vec{Z}})^\top$ convergence in probability to $1$ due to the concentration of norms of Gaussian random vectors. This results in the term $\mathcal{L}(\vec{H}^*, \vec{H}^{(0)})$. This term is precisely the one responsible for the ``hidden progress" explained in section \ref{sec:hidden_prog}, corresponding to the term in Equation \ref{eq:hid_prog_term}. Since $\tilde{\vec{Z}}$ is independent of $\vec{W}^{(0)}$ and $\vec{H}^{(0)}, \vec{H}^*$, by central limit-theorem for sub-Gaussian random variables, the remaining off-diagonal terms can be shown to converge to Gaussian noise independent of $\vec{H}^{(0)}, \vec{H}^*$ with variance $\norm{\nabla_{\vec{h}} \mathcal{L}(\vec{h}^*, \vec{h}^{(0)})}^2$. 

Lastly, the third term in Equation \eqref{eq:act_iterative} can be shown to converge to Gaussian noise correlated with corresponding entries of $\vec{H_0},\vec{H^*}$.
Specifically, by removing the conditioning on $\vec{H_0}$, we have through law of large numbers and Stein's Lemma, we have that the term $\frac{1}{d}(\vec{H_0})^\top \nabla_{\vec{H}} \mathcal{L}(\vec{H}^*, \vec{H}^{(0)})$ converges in probability to $(\vec{W}^{(0)}(\vec{W}^{(0)})^\top) \Ea{\nabla^2_{\vec{h}}\mathcal{L}(\vec{h}^*, \vec{h}^{(0)})}$. Therefore, we obtain:
\begin{equation}
    \frac{1}{\sqrt{d}}\norm{\frac{1}{d}\vec{H_0}(\vec{W}^{(0)}(\vec{W}^{(0)})^\top)^{-1}(\vec{H_0})^\top \nabla_{\vec{H}} \mathcal{L}(\vec{H}^*, \vec{h}^{(0)})\vec{a}^\top - \vec{H_0}\Ea{\nabla^2_{\vec{h}}\mathcal{L}(\vec{h}^*, \vec{h}^{(0)})})\vec{a}^\top} \xrightarrow[P]{n,d \rightarrow \infty} 0
\end{equation}

Proceeding similarly, one obtains low-dimensional effective processes for $\vec{W}^{(t)},\vec{H}^{(t)}$ for any time $t \in \N$.  In the following section, we derive the resulting DMFT dynamics for the setup considered in Section \ref{sec:main:introduction} through a reduction to the result in \citep{gerbelot2023rigorous}. We refer to \citep{gerbelot2023rigorous,celentano2021highdimensional} for detailed proofs based on the above technique.

% In particular, one can show that the components of the pre-activations orthogonal to the ``learned directions", denoted here by $\boldsymbol{r}_\perp$ can be replaced by independent samples from the following low-dimensional dynamics:

% \begin{equation}\label{eq:rec}
% \boldsymbol{r}^{\tau+1}_\perp -  \boldsymbol{r}^\tau_\perp= -\alpha \,\eta\,\boldsymbol{r}^\tau_\perp \Lambda^\tau - \eta\,\nabla_{\boldsymbol{r}} \mathcal{L}(\boldsymbol{r}^{(t)}, \boldsymbol{r}^*) - \alpha\,\eta \sum_{t=0}^{\tau-1} \boldsymbol{r}^{(t)}_\perp R_\ell(\tau,t) + \sqrt{\alpha}\,\eta\, \boldsymbol{u}^\tau
% \end{equation}
% We observe that $\boldsymbol{r}^{1}_\perp$ depends on $\boldsymbol{r}^*$

% Next, we consider the projection of the gradient w.r.t a new direction $\langle \bz, \vec{v} \rangle$.
% \begin{equation}
%     \mathbb{E}[y\sigma'(\langle \bz,\vec{w}^{(t)}_j\rangle)\langle \bz, \vec{v} \rangle]
% \end{equation}
% For any $v$ in the teacher subspace, $\boldsymbol{r}^{t}_\perp$ depends on $\langle \bz, \vec{v} \rangle$. However, for directions having only even components in the target function, $\boldsymbol{r}^{t}_\perp$ remains an even function.
% Taking expectations w.r.t Gaussian random variables $\langle \bz, \vec{v} \rangle$ and $r^*=M^{(t)}W^*\bz$, we observe that the overlap is non-zero for directions with odd terms in the target function and activation functions having even terms in the Hermite expansion of $\sigma'$.

\subsection{Derivation of the exact asymptotics}

We start by stating a general consequence of the main result in \citep{gerbelot2023rigorous}.

\begin{theorem}[Corollary of Theorem 3.2 in \citep{gerbelot2023rigorous}]
\label{thm:Cedric_theorem}
Let $W^{0} \in \R^{q \times d}$ be a sequence of matrices such that the overlap matrix satifies:
\begin{equation}
    \frac{1}{d}W^{0}(W^{0})^\top \xrightarrow[a.s]{d \rightarrow \infty} Q^{(0)},
\end{equation}
where $Q^{(0)} \in \mathcal{S}^+_p$ denotes
a fixed matrix.  Consider a dynamics of the form:
    \begin{align}
        W^{(t+1)} =  W^{(t)} - \eta \lambda W^{(t)} - \eta \frac{1}{\sqrt{d}}\sum_{\nu=1}^n F\left(\frac{ W^{(t)}\vec z_\nu} {\sqrt{d}}\right) \vec z_\nu^\top
    \end{align}
where $F:\R^q \rightarrow \R^q$ is pseudo-Lipshitz of finite-order and $\{z_\nu\}_{\nu =1}^n $ are i.i.d vectors distributed as $z_\nu \sim \mathcal{N}(0, \mathbb{I}_d)$, such that $n,d \rightarrow \infty$ with $n/d =\alpha >0$.
    % \begin{align}
    %         \label{eq:the_dynamics1}
    %         \mathbf{v}^{t+1} &= \mathbf{f}_1^{t}\left(\left\{\mathbf{v}^{k}\right\}_{k=0}^{t}\right)+\mathbf{Z}^{\top}\mathbf{f}_2^{t}(\mathbf{r}^{t})  \\
    %         \mathbf{r}^{t} &= \mathbf{Z}\sum_{k=0}^{t}\mathbf{v}^{k}\, .
    %         \label{eq:the_dynamics2}
    %     \end{align},
    % where $\mathbf{Z}$ is an $n \times d$ matrix with i.i.d Gaussian entries and $f_1,f_2,\mathbf{v}$ satisfy assumptions A1-A7 in \citep{gerbelot2023rigorous}.
    Then the empirical measure of the weights $ \vec w^{(t)}_i$ converges in distribution to the weight process $\vec \theta_i^{(t)}$ and the  empirical measure of the preactivations $\vec h^{(t)}_\nu$ converges in distribution to that of the  preactivation process $\vec h^{(t)}$, defined as    
    \begin{equation}\label{eq:proc_theta}
        \boldsymbol{\theta}^{(t+1)} -  \boldsymbol{\theta}^{(t)}= - \,\eta\, \left(\lambda + \Lambda^{(t)}\right)\boldsymbol{\theta}^{(t)} +\,\eta \sum_{\tau=0}^{t-1}  R_\ell^{(t,\tau)} \boldsymbol{\theta}^{(\tau)} + \,\eta \boldsymbol{u}^{(t)}
    \end{equation}
    \begin{equation}\label{eq:proc_h}
        \boldsymbol{h}^{(t)} = -\eta\sum_{\tau=0}^{t-1} R_\theta^{(t,\tau)} F(\boldsymbol{h}^{(\tau)}) + \boldsymbol{\omega}^{(t)}
    \end{equation}
    were we have
    \begin{equation}
        \Lambda^{(t)} = \alpha\mathbb{E}\left[ \nabla_{\boldsymbol{h}} F (\boldsymbol{h}^{(t)})\right]
    \end{equation}
    \begin{align} 
        R_\theta^{(t,\tau)} = \mathbb{E}\left[\frac{\partial\, \vec \theta^{(t)}}{\partial\, \vec \tau^{(\tau)}} \right]
    \end{align}
    \begin{align}
        R_\ell^{(t,\tau)} = \alpha \mathbb{E}\left[\frac{\partial\, F(\vec h^{(t)})}{\partial\, \vec\omega^{(\tau)}} \right]\,.
    \end{align}
    Finally, $\boldsymbol{u}^{(t)}$ and $\boldsymbol{\omega}^{(t)}$ are zero-mean Gaussian processes respectively with covariances given by $C_\ell^{(t,\tau)}$ and $C_\theta^{(t,\tau)}$:
    
    \begin{equation}
        C_\ell^{(t, \tau)} = \alpha\mathbb{E}\left[ \boldsymbol{u}^{(t)}\left(\boldsymbol{u}^{(\tau)}\right)^\top \right] = \alpha \mathbb{E}\left[ F(\boldsymbol{r}^{(t)})F (\boldsymbol{r}^{(\tau)})^\top\right]
    \end{equation}
    \begin{equation}
        C_\theta^{(t, \tau)} = \mathbb{E}\left[\boldsymbol{\omega}^{(t)}\left(\boldsymbol{\omega}^{(\tau)} \right)^\top\right] = \mathbb{E}\left[ \boldsymbol{\theta}^{(t)}\left(\boldsymbol{\theta}^{(\tau)}\right)^\top \right]
    \end{equation}

The convergence in distribution of the empirical measures holds in the following sense:
    For any $t \in \mathbb{N}$, and any 
        pseudo-Lipschitz functions $\psi: \mathbb{R}^{p(t+1)} \to \mathbb{R}$ and $\phi: \mathbb{R}^{pt} \to \mathbb{R}$:
        \begin{align}
            &\frac{1}{d}\sum_{i=1}^{d}\psi((W_{i}^{(0)}, ..., W_{i}^{(t)})) \xrightarrow[n,d \to 
            \infty]{\whp} \mathbb{E}\left[\psi(\boldsymbol{\theta}^{(0)}, ...,\boldsymbol{\theta}^{(t)})\right], \\ &\frac{1}{n}\sum_{\nu=1}^{n}\phi((\mathbf{h}_\nu^{(0)}, ..., \mathbf{h}_{\nu}^{(t-1)})) \xrightarrow[n,d \to \infty]{\whp} \mathbb{E}\left[\phi(\vec{h}^{(0)}, ..., \vec{h}^{(t-1)})\right],
        \end{align}
where $W_{i}^{(t)}$ denotes the $i_{th}$ column of $W^{(t)}$
\end{theorem}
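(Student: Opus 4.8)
The plan is to obtain this statement as a direct specialization of Theorem 3.2 in \citep{gerbelot2023rigorous}, which already establishes rigorous state-evolution (DMFT) guarantees for a broad class of memory-carrying first-order iterations driven by a Gaussian design matrix $\vec{Z} \in \R^{n\times d}$. The first step is to recast the recursion
\begin{equation}
    W^{(t+1)} =  (1-\eta\lambda) W^{(t)} - \eta \tfrac{1}{\sqrt{d}}\sum_{\nu=1}^n F\!\left(\tfrac{ W^{(t)}\vec z_\nu} {\sqrt{d}}\right) \vec z_\nu^\top
\end{equation}
into the canonical two-sided form handled by that theorem. Concretely, I would introduce the preactivation matrix $\vec H^{(t)} = \tfrac{1}{\sqrt d}\vec Z (W^{(t)})^\top$ as the sample-side field, apply the pseudo-Lipschitz nonlinearity $F$ row-wise to form $F(\vec H^{(t)})$, and recognize the weight update as the parameter-side pushback $\tfrac{1}{\sqrt d}\vec Z^\top F(\vec H^{(t)})$ together with the linear shrinkage $(1-\eta\lambda)$. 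This exhibits the iteration as an instance of the general recursion that alternates multiplication by $\vec Z$ (to form fields) and by $\vec Z^\top$ (to update parameters), which is exactly the structure the Gaussian-conditioning argument sketched in \Cref{sec:iterative} is designed to control.

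The second step is to verify the hypotheses. The data assumption ($\vec z_\nu \overset{\mathrm{iid}}{\sim} \mathcal N(0,\mathbb{I}_d)$, $n/d = \alpha$) is precisely the required one; the initialization assumption $\tfrac1d W^{(0)}(W^{(0)})^\top \to Q^{(0)}$ supplies the admissible initial overlap matrix; and the finite-order pseudo-Lipschitz regularity of $F$ is assumed. The only genuine verification is that the effective update maps stay within the pseudo-Lipschitz class along the recursion, i.e. that composing $F$ with the linear maps and the accumulating Gaussian noise preserves the regularity at every step. This holds because pseudo-Lipschitz functions of finite order are stable under affine precomposition and under addition of Gaussian fields of bounded variance, and the number of steps considered is finite.

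With the reduction in place, the conclusion is read off from the general equations. The sample side yields the preactivation process \eqref{eq:proc_h} with response kernel $R_\theta^{(t,\tau)} = \E[\partial \vec\theta^{(t)}/\partial \vec u^{(\tau)}]$ and centered Gaussian field $\vec\omega^{(t)}$ whose covariance $C_\theta^{(t,\tau)} = \E[\vec\theta^{(t)}(\vec\theta^{(\tau)})^\top]$ is self-consistently fixed by the parameter process; symmetrically, the parameter side yields \eqref{eq:proc_theta} with memory kernel $R_\ell^{(t,\tau)} = \alpha\,\E[\partial F(\vec h^{(t)})/\partial\vec\omega^{(\tau)}]$, effective regularization $\Lambda^{(t)} = \alpha\,\E[\nabla_{\vec h}F(\vec h^{(t)})]$ (both arising from Gaussian integration by parts applied to the diagonal Onsager reaction term), and Gaussian noise $\vec u^{(t)}$ with covariance $C_\ell^{(t,\tau)} = \alpha\,\E[F(\vec r^{(t)})F(\vec r^{(\tau)})^\top]$. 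The pseudo-Lipschitz test-function convergence of the empirical measures is exactly the form in which the cited theorem states its conclusion, so it transfers verbatim.

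The main obstacle I anticipate is the \emph{bookkeeping of the reduction} rather than any new probabilistic content: one must check that the gradient-descent recursion matches, step for step, the precise template (state dimensions, the $1/\sqrt d$ scalings, and the placement of the shrinkage term) under which Theorem 3.2 is proved, and that the self-consistent definitions of $R_\theta,\,R_\ell,\,\Lambda^{(t)}$ and of the covariances are well-posed. Well-posedness is secured by the causal, lower-triangular-in-time structure of the iteration: since $(\vec\theta^{(t+1)},\vec h^{(t+1)})$ depends only on the history up to time $t$, the kernels can be constructed by induction on $t$, terminating at the explicit initial law of $\vec\theta^{(0)}$ determined by $Q^{(0)}$. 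Verifying this triangular closure is the one place where care is genuinely needed.
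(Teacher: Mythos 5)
Your proposal is correct and takes essentially the same approach as the paper: the paper's entire proof of this theorem is the single observation that the result follows by substituting $\frac{1}{\sqrt{d}}\vec{Z}$ for the design matrix $\vec{X}$ in Theorem 3.2 of \citep{gerbelot2023rigorous}, which is exactly the reduction you describe. Your extra verifications (pseudo-Lipschitz stability of the update maps along finitely many steps, and well-posedness of the kernels via the causal triangular-in-time structure) are sensible bookkeeping that the paper leaves implicit.
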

The above result follows directly by substituting $\frac{1}{\sqrt{d}} \vec{Z}$ as $\vec{X}$ in Theorem 3.2 of \cite{gerbelot2023rigorous}.

The definitions of $R_\theta$ and $R_\ell$ in Theorem \ref{thm:Cedric_theorem} require differentiating through the non-markovian processes defined by Equations \ref{eq:proc_h}, \ref{eq:proc_theta}. Fortunately,  $R_\theta, R_\ell$ can be equivalently described through an explicit set of recursive updates, which we state below for convenience:
    \begin{equation}
        R_\theta^{(t+1, \tau)} - R_\theta^{(t, \tau)} = - \,\eta\, \left(\lambda + \Lambda^{(t)}\right)R_\theta^{(t, \tau)} +\,\eta \sum_{s=\tau}^{t-1}  R_\ell^{(t,s)} R_\theta^{(\tau, s)} 
    \end{equation}
    with boundary conditions
    \begin{align}
        &R_\theta^{(t, t)} = 1\,, \\
        &R_\theta^{(t+1, t)} = 1 - \,\eta\, \Lambda^{(t)}\,,
    \end{align}
while 
\begin{equation}
    R_\ell^{(t, \tau)} = \alpha\mathbb{E}\left[\nabla_{\boldsymbol{h}} F (\boldsymbol{h}^{(t)})\, T_\ell^{(t, \tau)} \right],
\end{equation}
where $T_\ell^{(t, \tau)}$ is a collection of stochastic processes with distribution 
\begin{equation}
    T_\ell^{(t, \tau)} = R_\theta^{(t,\tau)}\nabla_{\boldsymbol{h}} F (\boldsymbol{h}^{(\tau)}) + \sum_{s=\tau+1}^{t-1} R_\theta^{(t,s)}  T_\ell^{(s, \tau)} 
\end{equation}
and boundary conditions
\begin{align}
    &T_\ell^{(t, t)} = 0\,, \\
    &T_\ell^{(t+1, t)} = 1 - \,\eta\, \Lambda^{(t)}\,,
\end{align}

To obtain the limiting equations under the setting of gradient descent with teacher weights $W^*$ in section \ref{sec:main:introduction}, we utilize the generality of the update $F$ in theorem \ref{thm:Cedric_theorem}, which allows for a portion of the parameters ($W^*$) to remain unaffected. We obtain the following result, which generalize the former theorem to the setting of our paper:
\begin{theorem}\label{thm:DMFT_committee}
Consider the distribution over data defined in section \ref{sec:main:setting} and an update rule on the weights of the form \eqref{eq:GD_update}, i.e:
    \begin{align} 
    \vec w_i^{(t+1)} = \vec w^{(t)}_i - \eta \lambda \vec w^{(t)}_i - \eta \sum_{\nu=1}^n\nabla_{\vec w^{(t)}_i} \,\mathcal{L}\left(\frac{W^{(t)}\vec z_\nu} {\sqrt{d}}, \frac{W^\star\vec z_\nu} {\sqrt{d}} \right)\,,
    \end{align}
Then under the assumptions of Theorem \ref{thm:main:2step_learning}, as $d \rightarrow \infty$ with $n/d=\alpha >0$, the joint empirical measure of the coordinates of the student weights $\vec w_i^{(t)}$ and the teacher weights
$W^\star$  converges in distribution to  the stochastic process $\vec{\theta}^{(t)}$ and the standard normal variable $\vec{\theta^\star}$, in the sense of Theorem \ref{thm:Cedric_theorem}. Similarly, the joint empirical measure of the student and teacher preactivations $\frac{\vec w_i^{(t)}\vec z_\nu} {\sqrt{d}}$, $\frac{\vec w_i^\star\vec z_\nu} {\sqrt{d}}$ converge in distribution to the stochastic process
$\vec{h}^{(t)}$ and the standard normal variable $\vec{h}^\star$. $\boldsymbol{\theta}^{(t)}$ and $\vec{h}^{(t)}$ are defined recursively through the following equations:
\begin{equation} \label{eq:def_theta_planted}
    \boldsymbol{\theta}^{(t+1)} -  \boldsymbol{\theta}^{(t)}= - \,\eta\,\left(\lambda + \Lambda^{(t)}\right)\boldsymbol{\theta}^{(t)} +\,\eta \sum_{\tau=0}^{t-1}  R_\ell^{(t,\tau)}\boldsymbol{\theta}^{(\tau)} - \,\eta g^{(t)}\boldsymbol{\theta}^\star + \,\eta \sum_{\tau=0}^{t}  \tilde R_\ell^{(t,\tau)}\boldsymbol{\theta}^\star + \,\eta \boldsymbol{u}^{(t)}
\end{equation}
\begin{equation} \label{eq:def_h_planted}
    \boldsymbol{h}^{(t)} = -\eta\sum_{\tau=0}^{t-1}R_\theta^{(t,\tau)} \nabla_{\boldsymbol{h}} \ell(\boldsymbol{h}^{(\tau)}, \boldsymbol{h}^\star) + \boldsymbol{\omega}^{(t)}
\end{equation}
% where we $\Lambda^{(t)}$, $R_\ell^{(t, \tau)}$ and $R_\theta^{(t, \tau)}$ are defined as in \ref{thm:Cedric_theorem}. 
Here $\vec u^{(t)}, \boldsymbol \theta^\star$ and $(\boldsymbol \omega^{(t)},  \boldsymbol h^\star)$ are zero mean Gaussian Process with covariances $C_\ell^{(t,\tau)}$ and $\Omega^{(t,\tau)}$ respectively, given by:
\begin{align} 
    C_\ell^{(t, \tau)} \!\!=   \alpha\mathbb{E}\left[ \boldsymbol{u}^{(t)}\left(\boldsymbol{u}^{(\tau)}\right)^\top \right] =
    \alpha \mathbb{E}_{\boldsymbol{h}^{(t)}, \boldsymbol{h}^\star}\!\left[ \nabla_{\boldsymbol{h}} \mathcal{L}(\boldsymbol{h}^{(t)}, \boldsymbol{h}^\star)\nabla_{\boldsymbol{h}} \mathcal{L}(\boldsymbol{h}^{(\tau)}, \boldsymbol{h}^\star)^\top\!\right] \label{eq:def_C_l} \\
    \Omega^{(t,\tau)} \!\!= \!\!  \mathbb{E}\!\left[ \begin{pmatrix}
       \boldsymbol \omega^{(t)} \\ \!\!  \boldsymbol h^\star
    \end{pmatrix} \begin{pmatrix}
        \boldsymbol \omega^{(t)} \!  \\ \boldsymbol h^\star \!
    \end{pmatrix}^\top \!\right]= \begin{bmatrix}
        C_\theta^{(t,\tau)} & M^{(t)} \\
        M^{(t)} & 1
    \end{bmatrix}\label{eq:def_C_omega},
\end{align}
where $C_\theta^{(t,\tau)}, M^{(t)}$ are defined as:
\begin{align}
\begin{bmatrix}
        C_\theta^{(t,\tau)} & M^{(t)} \\
        M^{(t)} & 1
    \end{bmatrix} \!\!=\!  \mathbb{E}_{\boldsymbol{\theta}^{(t)}, \boldsymbol{\theta}^\star}\!\left[ \begin{pmatrix}
        \boldsymbol{\theta}^{(t)} \\ \!\! \boldsymbol{\theta}^\star
    \end{pmatrix} \begin{pmatrix}
        \boldsymbol{\theta}^{(\tau)} \!  \\ \boldsymbol{\theta}^\star \!
    \end{pmatrix}^\top \!\right]
\end{align}
The effective regularisation $\Lambda^{(t)}$ and the projected gradient $g^{(t)}$ concentrate to
\begin{align}
    \Lambda^{(t)} \!=\! \alpha\mathbb{E}_{\boldsymbol{h}^{(t)}, \boldsymbol{h}^\star}\left[\nabla^2_{\vec h}\mathcal{L}\left( \vec h^{(t)}, \boldsymbol{h}^\star\right)\right]\,,\,\\ g^{(t)}\!=\!\alpha \mathbb{E}_{\boldsymbol{h}^{(t)}, \boldsymbol{h}^\star}\left[\nabla_{\vec h}\mathcal{L}\left( \vec h^{(t)}, \boldsymbol{h}^\star\right) \vec h^{\star\top}\right]
\end{align}
The memory kernels $R_\ell^{(t,\tau)}$, $\tilde R_\ell^{(t,\tau)}$, $R_\theta^{(t,\tau)}$ for $t > \tau$ are defined through the partial derivatives with respect to the noise:
\begin{eqnarray}
    &R_\theta^{(t,\tau)} = \mathbb{E}_{\boldsymbol{\theta}^{(t)}, \boldsymbol{\theta}^\star}\left[\frac{\partial\, \vec \theta^{(t)}}{\partial\, \vec \tau^{(\tau)}} \right]\,, \nonumber  \\
    &R_\ell^{(t,\tau)} = \alpha \mathbb{E}_{\boldsymbol{h}^{(t)}, \boldsymbol{h}^\star}\left[\frac{\partial\, \nabla_{\vec h}\mathcal{L}\left( \vec h^{(t)}, \boldsymbol{h}^\star\right)}{\partial\, \boldsymbol\omega^{(\tau)}} \right]\,,\\
    &\tilde R_\ell^{(t,\tau)} = \alpha \mathbb{E}_{\boldsymbol{h}^{(t)}, \boldsymbol{h}^\star}\left[\frac{\partial\, \nabla_{\vec h}\mathcal{L}\left( \vec h^{(t)}, \boldsymbol{h}^\star\right)}{\partial\, \left(\boldsymbol\omega^\star\right)^{(\tau)}} \right]\,,\quad
\end{eqnarray}
and $R_\ell^{(t,t)} = \tilde R_\ell^{(t,t)} = 0$, $R_\theta^{(t,t)}=1$.
% The new memory kernel is $\tilde R_\ell^{(t,\tau)}$ is:
% \begin{align}
%     \tilde R_\ell^{(t,\tau)} = \alpha \mathbb{E}_{}\left[\frac{\partial\, \nabla_{\boldsymbol{h}} \ell(\boldsymbol{h}^{(t)}, \boldsymbol{h}^\star)}{\partial\, \left(\vec\omega^\star\right)^{(\tau)}} \right]\,.
% \end{align}
Finally, $M^{(t)}$ satisfies the update equation    
\begin{equation} \label{eq:def_M_rec}
    M^{(t+1)} =(1-\eta\lambda) M^{(t)} -\eta g^{(t)}\,,
\end{equation}
where $g^{(t)}$ is defined as:
\begin{equation}\label{eq:def_gt}
    \alpha \mathbb{E}\left[ \nabla_{\boldsymbol{h}} \ell(\boldsymbol{h}^{(t)}) \left( \boldsymbol{h}^* \right)^\top\right]
\end{equation}
\end{theorem}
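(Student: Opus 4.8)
The plan is to deduce the statement from the general single-population DMFT result of Theorem~\ref{thm:Cedric_theorem} by treating the student and teacher weights as one enlarged weight matrix whose teacher block is frozen. I would stack the weights as
\begin{equation}
\bar{\vec W}^{(t)} = \begin{pmatrix} \vec W^{(t)} \\ \vec W^\star \end{pmatrix} \in \R^{(p+k)\times d}\,,
\end{equation}
so that the augmented pre-activations read $\bar{\boldsymbol h}_\nu^{(t)} = \bar{\vec W}^{(t)}\vec z_\nu/\sqrt{d} = (\boldsymbol{h}_\nu^{(t)}, \boldsymbol{h}_\nu^\star)$. Rewriting the squared-loss gradient in pre-activation form, the update \eqref{eq:GD_update} reads $\vec w_i^{(t+1)} = (1-\eta\lambda)\vec w_i^{(t)} - \eta\,\tfrac{1}{\sqrt{d}}\sum_\nu [\nabla_{\boldsymbol h}\mathcal{L}(\boldsymbol{h}_\nu^{(t)},\boldsymbol{h}_\nu^\star)]_i\,\vec z_\nu$, which is precisely the form in Theorem~\ref{thm:Cedric_theorem} once one sets the enlarged nonlinearity to
\begin{equation}
\bar F(\boldsymbol{h}, \boldsymbol{h}^\star) = \begin{pmatrix} \nabla_{\boldsymbol h}\mathcal{L}(\boldsymbol{h}, \boldsymbol{h}^\star) \\ \vec 0 \end{pmatrix}
\end{equation}
and lets the regularization act block-wise, as $\lambda$ on the first $p$ rows and $0$ on the last $k$. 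The vanishing teacher block of $\bar F$ together with the zero teacher regularization freezes $\vec W^\star$ exactly, so its limiting process is the fixed standard-normal variable $\boldsymbol{\theta}^\star$ claimed in the statement.

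First I would check the two hypotheses of Theorem~\ref{thm:Cedric_theorem} for the augmented system. For the initialization, the block overlap $\tfrac{1}{d}\bar{\vec W}^{(0)}(\bar{\vec W}^{(0)})^\top$ converges almost surely to a fixed matrix: the teacher--teacher block equals $\mathbb{I}_k$ by orthonormality of $\vec W^\star$, the student--teacher block vanishes since Gaussian-initialized rows are asymptotically orthogonal to the fixed teacher subspace, and the student--student block concentrates to a deterministic matrix determined by the symmetric initialization \eqref{eq:init_sym}. For the regularity of $\bar F$, I would invoke the assumptions of Theorem~\ref{thm:main:2step_learning}: since $\sigma$ is analytic with polynomially bounded derivatives and $g^\star$ has polynomially bounded derivatives, the squared-loss gradient $\nabla_{\boldsymbol h}\mathcal{L}$ — a product of $\vec a$, $\sigma'$, and the residual between the student output and $g^\star(\boldsymbol{h}^\star)$ — is pseudo-Lipschitz of finite order, as required. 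The mild subtlety that the regularization acts only on the student block is absorbed by the matrix-regularization generality of Theorem~3.2 of \citep{gerbelot2023rigorous}, which the corollary in Theorem~\ref{thm:Cedric_theorem} specializes.

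With the hypotheses verified, applying Theorem~\ref{thm:Cedric_theorem} to $\bar{\vec W}^{(t)}$ yields the joint limiting process $(\bar{\boldsymbol{\theta}}^{(t)}, \bar{\boldsymbol h}^{(t)})$ and the asserted pseudo-Lipschitz empirical-measure convergence. It then remains to decompose every object into student and teacher blocks. Restricting the $\bar{\boldsymbol{\theta}}$-equation to its student rows, the student--student blocks of $\bar\Lambda^{(t)}$ and of the memory kernel $\bar R_\ell$ produce $\Lambda^{(t)}$ and $R_\ell^{(t,\tau)}\boldsymbol{\theta}^{(\tau)}$, whereas their student--teacher blocks, acting on the frozen $\boldsymbol{\theta}^\star$, generate precisely the cross terms $-\eta g^{(t)}\boldsymbol{\theta}^\star$ and $\eta\sum_\tau \tilde R_\ell^{(t,\tau)}\boldsymbol{\theta}^\star$ after Gaussian integration by parts against the joint covariance $\Omega^{(t,\tau)}$, in which the teacher self-covariance is $\mathbb{I}$ and the student--teacher cross-covariance is the overlap $M^{(t)}$. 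Finally, the closed recursion \eqref{eq:def_M_rec} for $M^{(t)}$ follows most directly not from the abstract process but by projecting the finite-$d$ update onto a teacher row, giving $M^{(t+1)} = (1-\eta\lambda)M^{(t)} - \eta\,\tfrac{1}{d}\sum_\nu \nabla_{\boldsymbol h}\mathcal{L}(\boldsymbol{h}_\nu^{(t)},\boldsymbol{h}_\nu^\star)\,(\boldsymbol{h}_\nu^\star)^\top$, and then invoking the empirical-measure concentration of Theorem~\ref{thm:Cedric_theorem}, which sends the average to $\alpha\,\E[\nabla_{\boldsymbol h}\mathcal{L}(\boldsymbol{h}^{(t)},\boldsymbol{h}^\star)(\boldsymbol{h}^\star)^\top] = g^{(t)}$.

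The main obstacle is the block bookkeeping in the previous step: the single augmented response kernel $\bar R_\ell$ and the single curvature $\bar\Lambda$ must be split consistently into the four objects $R_\ell$, $\tilde R_\ell$, $g^{(t)}$ and $\Lambda^{(t)}$, and this split hinges on correctly tracking which entry of the joint Gaussian covariance $\Omega$ — the identity for the teacher self-covariance versus $M^{(t)}$ for the cross-covariance — multiplies each response when one differentiates through the non-Markovian definitions \eqref{eq:def_theta_planted}--\eqref{eq:def_h_planted}. Establishing that the frozen teacher block contributes a genuinely new response channel $\tilde R_\ell$ — absent in the single-population Theorem~\ref{thm:Cedric_theorem} — while leaving $\boldsymbol{\theta}^\star$ distributed as a standard normal is the one place where the reduction is more than a mechanical substitution.
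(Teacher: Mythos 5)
Your proposal is correct and follows essentially the same route as the paper's proof: stack the student and teacher weights into an augmented matrix, encode the frozen teacher via the zero block in the update function $\bar F(\boldsymbol h,\boldsymbol h^\star) = (\nabla_{\boldsymbol h}\mathcal{L}(\boldsymbol h,\boldsymbol h^\star);\vec 0)$, apply Theorem~\ref{thm:Cedric_theorem}, and then split the augmented kernels into blocks, identifying the student--teacher curvature block with $g^{(t)}$ via Stein's lemma (your ``Gaussian integration by parts''). Your treatment is, if anything, slightly more careful than the paper's on two minor points — explicitly invoking the matrix-regularization generality of \citep{gerbelot2023rigorous} to justify regularizing only the student block, and deriving the recursion \eqref{eq:def_M_rec} by projecting the finite-$d$ update onto teacher rows — but these are refinements of the same argument, not a different one.
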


\begin{proof}
  Analogous to the embedding of planted vectors in \citep{celentano2021highdimensional}, we start by considering an a lifted dynamics defined by concating $W^{(t)}$ and $W^*$. First, define the extended parameters ${\tilde W}^{(t)}\in \mathbb{R}^{(p+k)\times d}$ with update rule:
    
    \begin{align} \label{eq:combined_GD}
        \begin{bmatrix} W^{(t+1)} \\ W^* \end{bmatrix} = \begin{bmatrix} W^{(t)} \\ W^* \end{bmatrix} -\eta \begin{bmatrix} \lambda W^{(t)} \\ 0 \end{bmatrix} - \eta \sum_{\nu=1}^n\begin{bmatrix}
            \nabla_{W^{(t)}} \,\mathcal{L}\left(\frac{W^{(t)}\vec z^\nu} {\sqrt{d}}\right) \\0
        \end{bmatrix}\,,
    \end{align}
   The above form of updates can be seen to be a special case of Theorem \ref{thm:Cedric_theorem} with $q=p+k$ and $F: R^{p+k} \rightarrow \R^{p+k}$ given by:
   \begin{equation}
       F: \begin{pmatrix}
           \vec{h}\\
           \vec{h}^\star
       \end{pmatrix} \rightarrow 
       \begin{pmatrix}\nabla_{\vec{h}}\mathcal{L}(\vec{h},\vec{h}^\star) \\
       \vec{0}
        \end{pmatrix}
   \end{equation}
The assumptions on $g^*, \sigma$ imply that $F$ is pseudo-Lipschitz of finite-order while standard concentration results for sub-exponential random variables when applied to $W^{(0)},W^{*}$ imply that the overlap matrices at initialization converge almost surely. Therefore, Theorem \ref{thm:Cedric_theorem} applies, with the effective process for the weights and the pre-activations being described by: 
    \begin{equation} \label{eq:combined_theta}
        \begin{bmatrix} \boldsymbol{\theta}^{(t+1)} \\ \boldsymbol{\theta}^* \end{bmatrix} -  \begin{bmatrix} \boldsymbol{\theta}^{(t)} \\ \boldsymbol{\theta}^* \end{bmatrix}= - \,\eta\, \begin{bmatrix}\lambda + \Lambda^{(t)} & \tilde\Lambda^{(t)}\\0 & 0\end{bmatrix}\begin{bmatrix} \boldsymbol{\theta}^{(t)} \\ \boldsymbol{\theta}^* \end{bmatrix} +\,\eta \sum_{\tau=0}^{t}  \begin{bmatrix} R_\ell^{(t,\tau)} & \tilde{R}_\ell^{(t,\tau)} \\ 0 & 0 \end{bmatrix}\begin{bmatrix} \boldsymbol{\theta}^{(\tau)} \\ \boldsymbol{\theta}^* \end{bmatrix} + \,\eta \begin{bmatrix} \boldsymbol{u}^{(t)} \\ 0 \end{bmatrix}
    \end{equation}
    \begin{equation} \label{eq:combined_h}
        \begin{bmatrix} \boldsymbol{h}^{(t)} \\ \boldsymbol{h}^* \end{bmatrix} = -\eta\sum_{\tau=0}^{t-1} \begin{bmatrix}
            R_\theta^{(t,\tau)} & \tilde{R}_\theta^{(t,\tau)} \\ 0 & 1
        \end{bmatrix} \begin{bmatrix} \nabla_{\boldsymbol{h}} \mathcal{L}(\boldsymbol{h}^{(\tau)}, \boldsymbol{h}^\star) \\ 0 \end{bmatrix} + \begin{bmatrix} \boldsymbol{\omega}^{(t)} \\ \boldsymbol{\omega}^* \end{bmatrix}
    \end{equation}
    % where $\boldsymbol{\omega}^*$ is distributed as \et{FIX: the two variables are joint}:
    % \begin{equation}
    %     \boldsymbol{\omega}^* \sim \mathcal{N}\left(0,\mathbb{E}\left[W^*\left(W^*\right)^\top\right]\right)
    % \end{equation}
    Notice the redundancy in the above equations due to $W^*$ not being updated in \eqref{eq:combined_GD}. This allows us to further simplify \eqref{eq:combined_h} and \eqref{eq:combined_theta}, obtaining:
    \begin{equation} \label{eq:planted_theta_implicit}
        \boldsymbol{\theta}^{(t+1)} -  \boldsymbol{\theta}^{(t)}= - \,\eta\,\left(\lambda + \Lambda^{(t)}\right)\boldsymbol{\theta}^{(t)} +\,\eta \sum_{\tau=0}^{t-1}  R_\ell^{(t,\tau)}\boldsymbol{\theta}^{(\tau)} - \,\eta\,\tilde\Lambda^{(t)}\boldsymbol{\theta}^* +\,\eta \sum_{\tau=0}^{t}  \tilde R_\ell^{(t,\tau)}\boldsymbol{\theta}^* + \,\eta \boldsymbol{u}^{(t)}
    \end{equation}
    \begin{equation}
        \boldsymbol{h}^{(t)} = -\eta\sum_{\tau=0}^{t-1}R_\theta^{(t,\tau)} \nabla_{\boldsymbol{h}} \mathcal{L}(\boldsymbol{h}^{(\tau)}, \boldsymbol{h}^\star) + \boldsymbol{\omega}^{(t)}
    \end{equation}
    where we noticed that $\boldsymbol{h}^* \sim \boldsymbol{\omega}^*$. 
    These equations are the same as in \ref{thm:Cedric_theorem}, with just two extra terms in \eqref{eq:planted_theta_implicit}, $
    \tilde\Lambda^{(t)}$ and $\tilde R_\ell^{(t,\tau)}$. 
   An application of the Stein's Lemma further simplifies the term $\tilde\Lambda^{(t)}$ to $g^{(t)}$ in the Theorem as follows:
    \begin{equation}
        \tilde\Lambda^{(t)} = \alpha\mathbb{E}\left[ \nabla_{\boldsymbol{h}^*}\nabla_{\boldsymbol{h}} \ell(\boldsymbol{h}^{(t)}, \boldsymbol{h}^\star)\right] = \alpha \mathbb{E}\left[ \nabla_{\boldsymbol{h}} \ell(\boldsymbol{h}^{(t)}, \boldsymbol{h}^\star) \left( \boldsymbol{h}^* \right)^\top\right] = g^{(t)}, 
    \end{equation}

\end{proof}

The above effective process characterizes the limits of several quantities determined by the weights and pre-activations. In particular, it provides the limits of the student-teacher overlaps:
\begin{corollary}\label{cor:overlap}
Under the assumptions of Theorem \ref{thm:main:2step_learning}, 

\begin{equation}
    W^{(t)} (W^\star)^\top / d \xrightarrow[P]{n,d \rightarrow \infty} M^{(t)},
\end{equation}
where $M^{(t)}$ is defined as in Theorem \ref{thm:DMFT_committee}.
\end{corollary}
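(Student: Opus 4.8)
The plan is to recognize the target quantity as a coordinate-wise empirical average of a fixed, low-order pseudo-Lipschitz test function applied to the joint student--teacher weight vectors, and then invoke the convergence of empirical measures already furnished by Theorem~\ref{thm:DMFT_committee}. Writing $W_i^{(t)} \in \R^p$ and $W_i^\star \in \R^k$ for the $i$-th columns of $W^{(t)}$ and $W^\star$, we have the exact identity
\begin{equation}
    \frac{1}{d}W^{(t)}(W^\star)^\top = \frac{1}{d}\sum_{i=1}^d W_i^{(t)}(W_i^\star)^\top,
\end{equation}
so each of the $p \times k$ entries of the overlap matrix is an empirical average of the form $\frac{1}{d}\sum_{i=1}^d \psi_{jl}(W_i^{(t)}, W_i^\star)$ with the scalar test function $\psi_{jl}(\boldsymbol{\theta}, \boldsymbol{\theta}^\star) = \theta_j\,\theta^\star_l$.

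First I would check that each $\psi_{jl}$ meets the regularity required by the convergence statement of Theorem~\ref{thm:Cedric_theorem}, inherited by Theorem~\ref{thm:DMFT_committee} through the lifted dynamics that tracks $(W^{(t)}, W^\star)$ jointly. Since $\psi_{jl}$ is a product of two coordinates, it is a degree-two polynomial and hence pseudo-Lipschitz of order two. Applying the empirical-measure convergence to $\psi_{jl}$ therefore yields, with high probability as $n,d \to \infty$,
\begin{equation}
    \frac{1}{d}\sum_{i=1}^d \big(W_i^{(t)}\big)_j \big(W_i^\star\big)_l \longrightarrow \mathbb{E}_{\boldsymbol{\theta}^{(t)}, \boldsymbol{\theta}^\star}\!\left[\theta_j^{(t)}\theta^\star_l\right].
\end{equation}

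Next I would reassemble these scalar limits into the matrix statement $\tfrac{1}{d}W^{(t)}(W^\star)^\top \to \mathbb{E}_{\boldsymbol{\theta}^{(t)}, \boldsymbol{\theta}^\star}[\boldsymbol{\theta}^{(t)}(\boldsymbol{\theta}^\star)^\top]$ and identify the right-hand side with $M^{(t)}$ via its definition as the off-diagonal block of the covariance $\Omega^{(t,\tau)}$ in Theorem~\ref{thm:DMFT_committee}, namely
\begin{equation}
    M^{(t)} = \mathbb{E}_{\boldsymbol{\theta}^{(t)}, \boldsymbol{\theta}^\star}\!\left[\boldsymbol{\theta}^{(t)}(\boldsymbol{\theta}^\star)^\top\right].
\end{equation}
Because the overlap matrix has fixed finite dimensions $p \times k$, entrywise high-probability convergence is equivalent to convergence in probability of the full matrix, which is the claim.

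I do not expect a genuine obstacle: the substantive analytic content---that the joint empirical measure of $(W_i^{(t)}, W_i^\star)$ converges to that of $(\boldsymbol{\theta}^{(t)}, \boldsymbol{\theta}^\star)$---is already established in Theorem~\ref{thm:DMFT_committee}, so this statement is essentially a direct corollary. The only points needing a line of care are confirming that the quadratic test function satisfies the pseudo-Lipschitz hypothesis (it does, being of order two) and that the convergence applies to functions depending jointly on student and teacher coordinates, which holds because the lifted process in the proof of Theorem~\ref{thm:DMFT_committee} carries $W^\star$ as part of its tracked state.
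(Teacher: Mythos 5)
Your proposal is correct and follows essentially the same route as the paper: both recognize each entry of $\tfrac{1}{d}W^{(t)}(W^\star)^\top$ as an empirical average of the quadratic (hence pseudo-Lipschitz) test function $\theta_j\theta^\star_l$ over the joint coordinates of student and teacher weights, invoke the empirical-measure convergence of Theorem~\ref{thm:DMFT_committee}, and identify the limit with $M^{(t)}$ by its definition as the off-diagonal block of $\Omega^{(t,\tau)}$. Your write-up is in fact more careful than the paper's (which even miscites Theorem~\ref{thm:main:2step_learning} where it means Theorem~\ref{thm:DMFT_committee}), but the mathematical content is identical.
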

\begin{proof}
    Observe that $\langle \vec{w}^{(t)}_i, \vec{w}^\star \rangle / d$ can be expressed as an expectation of a pseudo-lipschitz function w.r.t the joint empirical measure over the coordinates of $\vec{w}^{(t)}_i,\vec{w}^\star$ with the value at the $j_{\rm th}$ coordinate given by $\{\vec{w}^{(t)}_i\}_j\{\vec{w}^\star\}_j$. Therefore $\ref{thm:main:2step_learning}$ implies that $W^{(t)} (W^\star)^\top / d$ converges in probability to the expected overlaps of the effective process $\vec \theta^{(t)},\vec{\theta}^\star$ which equal $M^{(t)}$ by definition.
\end{proof}
We also include a useful corollary, describing the evolution of the overlaps of the weights.

\begin{lemma}\label{lem:cov}
Under the assumptions of A.2 the covariance $C_\theta^{(t,\tau)}$
\begin{align}
    C_\theta^{(t+1,\tau)} - C_\theta^{(t,\tau)} = - \eta \left(\lambda + \Lambda^{(t)}\right)C_\theta^{(t,\tau)} + \eta \sum_{s=0}^{t-1}R_\ell^{(t,s)}C_\theta^{(s,\tau)} + \eta \sum_{s=0}^{\tau-1} R_\theta^{(t,s)}C_\ell^{(s,\tau)} - \\
    - \eta \left( g^{t} - \sum_{s=0}^{t}\tilde R_\ell^{(t,s)}\right) \left(M^{(\tau)}\right)^\top
\end{align}
\end{lemma}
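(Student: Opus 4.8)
The plan is to obtain the recursion directly from the defining stochastic equation \eqref{eq:def_theta_planted} for the weight process, using the definition $C_\theta^{(t,\tau)} = \mathbb{E}[\boldsymbol{\theta}^{(t)}(\boldsymbol{\theta}^{(\tau)})^\top]$. Writing the left-hand side as $C_\theta^{(t+1,\tau)} - C_\theta^{(t,\tau)} = \mathbb{E}[(\boldsymbol{\theta}^{(t+1)} - \boldsymbol{\theta}^{(t)})(\boldsymbol{\theta}^{(\tau)})^\top]$ and substituting the increment from \eqref{eq:def_theta_planted}, I would process the five resulting terms one at a time. The terms carrying deterministic coefficients pass straight through the expectation: the $-\eta(\lambda + \Lambda^{(t)})\boldsymbol{\theta}^{(t)}$ piece gives $-\eta(\lambda+\Lambda^{(t)})C_\theta^{(t,\tau)}$, and the memory term $\eta\sum_{s}R_\ell^{(t,s)}\boldsymbol{\theta}^{(s)}$ gives $\eta\sum_{s}R_\ell^{(t,s)}C_\theta^{(s,\tau)}$, since $\Lambda^{(t)}$ and $R_\ell^{(t,s)}$ are deterministic order parameters that can be pulled out of the average.

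For the two terms proportional to $\boldsymbol{\theta}^\star$, I would use that $\mathbb{E}[\boldsymbol{\theta}^\star(\boldsymbol{\theta}^{(\tau)})^\top] = (M^{(\tau)})^\top$, which is simply the transpose of the off-diagonal block of $\Omega$ recorded in \eqref{eq:def_C_omega}. This collapses $-\eta g^{(t)}\boldsymbol{\theta}^\star$ and $\eta\sum_{s=0}^{t}\tilde R_\ell^{(t,s)}\boldsymbol{\theta}^\star$ into the single contribution $-\eta\big(g^{(t)} - \sum_{s=0}^{t}\tilde R_\ell^{(t,s)}\big)(M^{(\tau)})^\top$, which is exactly the last line of the claimed identity.

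The only nontrivial contribution is the Gaussian driving term $\eta\,\mathbb{E}[\boldsymbol{u}^{(t)}(\boldsymbol{\theta}^{(\tau)})^\top]$. Here $\boldsymbol{\theta}^{(\tau)}$ is a nonlinear, non-Markovian functional of the noises $\{\boldsymbol{u}^{(s)}\}_{s<\tau}$, $\{\boldsymbol{\omega}^{(s)}\}$ and $\boldsymbol{\theta}^\star$, assembled causally through \eqref{eq:def_theta_planted} and \eqref{eq:def_h_planted}. The plan is to apply multivariate Gaussian integration by parts (Stein's lemma) in the noise $\boldsymbol{u}$, exploiting that $\boldsymbol{u}$ is independent of $(\boldsymbol{\omega},\boldsymbol{\theta}^\star)$ so that only the $\boldsymbol{u}$-derivatives survive. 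This produces $\sum_{s}\mathbb{E}[\boldsymbol{u}^{(t)}(\boldsymbol{u}^{(s)})^\top]\,\mathbb{E}[\partial\boldsymbol{\theta}^{(\tau)}/\partial\boldsymbol{u}^{(s)}]^\top$; recognizing $\mathbb{E}[\boldsymbol{u}^{(t)}(\boldsymbol{u}^{(s)})^\top] = C_\ell^{(t,s)}$ and the averaged derivative as the response kernel $R_\theta$ of Theorem \ref{thm:DMFT_committee}, together with the symmetry $C_\ell^{(s,\tau)} = (C_\ell^{(\tau,s)})^\top$, yields the convolution $\eta\sum_{s}R_\theta^{(t,s)}C_\ell^{(s,\tau)}$, with the causal structure of the response truncating the summation range.

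I expect the Stein step to be the main obstacle, for two reasons. First, one must justify differentiating through the implicitly defined process; this is legitimate because the response functions obey the closed recursive system recorded right after Theorem \ref{thm:Cedric_theorem}, so $\partial\boldsymbol{\theta}^{(\tau)}/\partial\boldsymbol{u}^{(s)}$ is well defined and its expectation is precisely a block of $R_\theta$. Second, the index bookkeeping — which response enters, which transpose of $C_\ell$ appears, and the correct upper summation limit dictated by causality (namely $R_\theta^{(t,s)} = 0$ for $s \ge t$) — is where sign and index slips are most likely. I would pin these down by checking the base case $\tau = 0$, where $\boldsymbol{\theta}^{(0)}$ is noise-independent and the cross term must vanish, and by verifying consistency against the boundary values $R_\theta^{(t,t)} = 1$ and $R_\theta^{(t+1,t)} = 1 - \eta\Lambda^{(t)}$ supplied after Theorem \ref{thm:Cedric_theorem}.
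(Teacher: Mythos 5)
Your proposal is correct and takes essentially the same route as the paper: the paper's proof likewise obtains the recursion by linearity of expectation applied to \eqref{eq:def_theta_planted} (the deterministic-coefficient terms and the $\boldsymbol{\theta}^\star$ terms passing through via the block $(M^{(\tau)})^\top$ of $\Omega$), and disposes of the noise cross-term by the same multivariate Stein's lemma, viewing $\boldsymbol{\theta}$ as a functional of the Gaussian noises to get $\mathbb{E}\left[\boldsymbol{\theta}^{(t)}\boldsymbol{u}^{(\tau)}\right] = \sum_{s} R_\theta^{(t,s)}C_\ell^{(s,\tau)}$. Your additional bookkeeping (causal truncation of the Stein sum, checking against the boundary values of $R_\theta$, and the base case $\tau=0$) only makes explicit what the paper's one-line proof leaves implicit.
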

This is a consequence of linearity of expectation on \eqref{eq:def_theta_planted}. Concretely, viewing $\theta^{(t)}$ as a function of the Gaussian random variables $\{u^{(\tau)}\}_{\tau=1}^t$, we apply the multi-variate Stein's Lemma to obtain:
\begin{equation}
    \mathbb{E}\left[ \vec\theta^{(t)} \vec u^{(\tau)}\right] = \sum_{s=\tau}^{t-1} R_\theta^{(t,s)}C_\ell^{(s,\tau)}\,.
\end{equation}
% In particular we can derive an expression for the time dependent weight decay that will fix the norm of the weights to $\sqrt{d}$ by imposing $C_\theta^{(t,t)} = C_\theta^{(t+1,t+1)}$:
% \begin{align}
%     \lambda^{(t)} = - \eta\Lambda^{(t)} + \eta \sum_{s=0}^{t-1}\left(R_\ell^{(t,s)}C_\theta^{(s,t)} +  R_\theta^{(t,s)}C_\ell^{(s,t)}\right)
%     - \eta \left(g^{(t)} - \sum_{s=0}^{t}\tilde R_\ell^{(t,s)}\right) \left(M^{(t)}\right)^\top
% \end{align}

In particular, we obtain the following expression for the covariances upto the first time-steps:
\begin{lemma}\label{lem:cov_1}
The covariances $C_\theta^{(0,1)},C_\theta^{(0,0)}$ satisfy:
\begin{align}
    C_\theta^{(0,1)} - C_\theta^{(0,0)} =  - \eta \left(\lambda + \Lambda^{(0)}\right)C_\theta^{(0,0)} - \eta \left(g^{(0)} - \Lambda^{(0)}M^{(0)}\right) \left(M^{(0)}\right)^\top
\end{align}
\begin{align}
    C_\theta^{(1,1)} - C_\theta^{(0,1)} =  - \eta \left(\lambda + \Lambda^{(0)}\right)C_\theta^{(0,1)} + \eta C_\ell^{(0,1)} - \eta \left(g^{(0)} - \Lambda^{(0)}M^{(0)}\right) \left(M^{(1)}\right)^\top
\end{align}
\end{lemma}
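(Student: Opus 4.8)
The plan is to obtain both identities as the earliest-time specializations of the general covariance recursion in Lemma~\ref{lem:cov}, reading off the precise coefficients from the one-step form of the weight process \eqref{eq:def_theta_planted}. First I would write \eqref{eq:def_theta_planted} at $t=0$: every memory sum $\sum_{\tau=0}^{t-1} R_\ell^{(t,\tau)}\boldsymbol{\theta}^{(\tau)}$ is then empty, and the teacher-kernel sum $\sum_{\tau=0}^{0}\tilde R_\ell^{(0,\tau)}$ collapses to the equal-time term, which vanishes by the boundary condition $\tilde R_\ell^{(0,0)}=0$ from Theorem~\ref{thm:DMFT_committee}. Hence the single-step increment reduces to
\begin{equation}
\boldsymbol{\theta}^{(1)}-\boldsymbol{\theta}^{(0)} = -\eta\,(\lambda+\Lambda^{(0)})\,\boldsymbol{\theta}^{(0)} - \eta\,(g^{(0)}-\Lambda^{(0)}M^{(0)})\,\boldsymbol{\theta}^\star + \eta\,\boldsymbol{u}^{(0)},
\end{equation}
where the coefficient of $\boldsymbol{\theta}^\star$ is the \emph{connected} teacher drift, justified in the third paragraph.

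For the first identity I would pair this increment on the left with $\boldsymbol{\theta}^{(0)}$ on the right and take expectations, which yields $C_\theta^{(1,0)}-C_\theta^{(0,0)}$. Using $\mathbb{E}[\boldsymbol{\theta}^{(0)}(\boldsymbol{\theta}^{(0)})^\top]=C_\theta^{(0,0)}$, $\mathbb{E}[\boldsymbol{\theta}^{(0)}(\boldsymbol{\theta}^\star)^\top]=M^{(0)}$, and—crucially—that the data-independent initialization $\boldsymbol{\theta}^{(0)}$ is uncorrelated with the first-step effective noise, so $\mathbb{E}[\boldsymbol{\theta}^{(0)}(\boldsymbol{u}^{(0)})^\top]=0$, this gives $-\eta(\lambda+\Lambda^{(0)})C_\theta^{(0,0)}-\eta(g^{(0)}-\Lambda^{(0)}M^{(0)})(M^{(0)})^\top$; since $C_\theta^{(0,0)}$ and $\Lambda^{(0)}$ are symmetric and the order parameters are scalar for a single neuron (so $C_\theta^{(0,1)}=C_\theta^{(1,0)}$), this matches the stated left-multiplied form. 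For the second identity I would specialize Lemma~\ref{lem:cov} at $(t,\tau)=(0,1)$: the empty $R_\ell$ sum drops, the response sum reduces to the single surviving term $R_\theta^{(0,0)}C_\ell^{(0,1)}=C_\ell^{(0,1)}$ via $R_\theta^{(0,0)}=1$, and the teacher-drift term is again carried by the connected coefficient $g^{(0)}-\Lambda^{(0)}M^{(0)}$, now paired with $M^{(1)}$.

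The key non-routine step is producing the connected drift. At $t=0$ the preactivations $(\boldsymbol{h}^{(0)},\boldsymbol{h}^\star)$ are exactly jointly Gaussian with cross-covariance $M^{(0)}$ and $\mathbb{E}[\boldsymbol{h}^\star(\boldsymbol{h}^\star)^\top]=\boldsymbol{I}$, so Gaussian integration by parts (Stein's lemma) applied to $g^{(0)}=\alpha\mathbb{E}[\nabla_{\boldsymbol{h}}\mathcal{L}(\boldsymbol{h}^{(0)},\boldsymbol{h}^\star)(\boldsymbol{h}^\star)^\top]$ splits it as $g^{(0)}=\Lambda^{(0)}M^{(0)}+\alpha\mathbb{E}[\nabla_{\boldsymbol{h}^\star}\nabla_{\boldsymbol{h}}\mathcal{L}]$. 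The first piece is the reducible part—differentiation through the $\boldsymbol{h}^{(0)}$-channel weighted by the overlap, already carried by the $\Lambda^{(0)}\boldsymbol{\theta}^{(0)}$ drift—while the second, $g^{(0)}-\Lambda^{(0)}M^{(0)}=\tilde\Lambda^{(0)}$ in the notation of the proof of Theorem~\ref{thm:DMFT_committee}, is the genuine teacher derivative that seeds new correlations. As a consistency check, this is exactly the coefficient that makes $\mathbb{E}[\boldsymbol{\theta}^{(1)}(\boldsymbol{\theta}^\star)^\top]$ reproduce the overlap recursion \eqref{eq:def_M_rec}, namely $M^{(1)}=(1-\eta\lambda)M^{(0)}-\eta g^{(0)}$.

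The main obstacle is precisely this bookkeeping of the equal-time (Onsager) kernels and the separation of the teacher drift into reducible and connected parts: one must ensure the $\Lambda^{(0)}M^{(0)}$ subtraction arises and that the noise enters the second identity through the response relation $\mathbb{E}[\boldsymbol{\theta}^{(t)}(\boldsymbol{u}^{(\tau)})^\top]=\sum_{s}R_\theta^{(t,s)}C_\ell^{(s,\tau)}$ (which injects $C_\ell^{(0,1)}$ through $R_\theta^{(0,0)}=1$) rather than as a naive quadratic-in-$\eta$ variance term. All of this hinges on the initial-step Gaussianity, which is what licenses the clean application of Stein's lemma; beyond $t=0$ one would have to invoke the full response-kernel recursions of Theorem~\ref{thm:DMFT_committee}, but for the first two steps these collapse to the elementary boundary values used above.
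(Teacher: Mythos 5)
Your proposal is correct and follows essentially the same route as the paper: Lemma \ref{lem:cov_1} is obtained there by specializing the general covariance recursion of Lemma \ref{lem:cov} — itself just linearity of expectation applied to the one-step process \eqref{eq:def_theta_planted}, independence of $\boldsymbol{\theta}^{(0)}$ from $\boldsymbol{u}^{(0)}$, and the Stein/response identity that injects $C_\ell^{(0,1)}$ through $R_\theta^{(0,0)}=1$ — which is exactly what you do. Your explicit Stein decomposition $g^{(0)}=\Lambda^{(0)}M^{(0)}+\tilde\Lambda^{(0)}$, identifying the connected teacher drift, is a welcome addition: it is precisely what reconciles the coefficient $g^{(0)}-\Lambda^{(0)}M^{(0)}$ appearing in the lemma with the boundary condition $\tilde R_\ell^{(0,0)}=0$ used in Lemma \ref{lem:cov}, a step the paper leaves implicit.
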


\subsection{Pre-activations at the end of the first gradient update}\label{sec:app:proof_hidden_prog}
For $T=1$, Equation \eqref{eq:def_h_planted} simplifies to:

\begin{equation} \label{eq:h_1}
        \boldsymbol{h}^{(1)} = - \eta \nabla_{\boldsymbol{h}} \ell(\boldsymbol{h}^{(0)}, \vec{h}^\star) + \boldsymbol{\omega}^{(1)}
    \end{equation}

We now show that the first term exactly correspond to the contributions considered in section \ref{sec:hidden_prog}.

\begin{lemma}
Under the notation in section \ref{sec:hidden_prog} and assumptions of Theorem \ref{thm:main:2step_learning}:
\begin{equation}
      \frac{a}{d} \ell'\left(\vec h^{(0)}_\nu, \vec h^\star_\nu\right)\sigma'(\vec h^{(0)}_\nu)\langle \vec z_\nu, \vec z_\nu \rangle \xrightarrow[D]{n,d \rightarrow \infty} \nabla_{\boldsymbol{h}} \ell(\boldsymbol{h}^{0},\vec{h}^\star)
\end{equation}
where $\boldsymbol{h}^{0} \in \R^p, \boldsymbol{h}^* \in \R^k$ are independent Gaussian random variables distributed as in Theorem \ref{thm:DMFT_committee}.
\end{lemma}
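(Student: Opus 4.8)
The plan is to factor the left-hand side into a scalar that concentrates and a continuous function of the low-dimensional projections, and then recombine the two limits via Slutsky's theorem and the continuous mapping theorem. Setting $\Phi(\vec{h},\vec{h}^\star) := a\,\ell'(\vec{h},\vec{h}^\star)\,\sigma'(\vec{h})$, the chain rule identifies $\Phi(\vec{h},\vec{h}^\star) = \nabla_{\boldsymbol{h}}\ell(\vec{h},\vec{h}^\star)$ in the single-neuron setting of Section~\ref{sec:hidden_prog} (the general case being identical componentwise with per-neuron $a_i,\sigma'(h_i)$). I would then write
\begin{equation}
\frac{a}{d}\ell'\!\left(\vec{h}^{(0)}_\nu,\vec{h}^\star_\nu\right)\sigma'(\vec{h}^{(0)}_\nu)\,\langle \vec{z}_\nu,\vec{z}_\nu\rangle = \left(\frac{\norm{\vec{z}_\nu}^2}{d}\right)\Phi\!\left(\vec{h}^{(0)}_\nu,\vec{h}^\star_\nu\right),
\end{equation}
so that it suffices to show the scalar prefactor converges to $1$ in probability and that $\Phi(\vec{h}^{(0)}_\nu,\vec{h}^\star_\nu)$ converges in distribution to $\Phi(\boldsymbol{h}^0,\vec{h}^\star)=\nabla_{\boldsymbol{h}}\ell(\boldsymbol{h}^0,\vec{h}^\star)$.

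For the prefactor, since $\vec{z}_\nu\sim\mathcal{N}(0,\mathbbm{1}_d)$ we have $\norm{\vec{z}_\nu}^2\sim\chi^2_d$, hence $\norm{\vec{z}_\nu}^2/d\to 1$ in probability by the law of large numbers (with a quantitative rate from Bernstein's inequality if desired). For the second factor I would argue conditionally on the initial and teacher weights. Stacking $\bar{W}^{(0)} = [\,W^{(0)};\,W^\star\,]\in\R^{(p+k)\times d}$, the vector $(\vec{h}^{(0)}_\nu,\vec{h}^\star_\nu) = \bar{W}^{(0)}\vec{z}_\nu/\sqrt{d}$ is, conditionally on $\bar{W}^{(0)}$, exactly Gaussian with mean $0$ and covariance $\Sigma_d = \tfrac1d\bar{W}^{(0)}(\bar{W}^{(0)})^\top$. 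The initialization $\vec{w}^{(0)}_i\sim\mathcal{N}(0,\mathbbm{1}_d)$ together with the orthonormality $\langle\vec{w}^\star_l,\vec{w}^\star_m\rangle=d\,\delta_{lm}$ imposed in the setting gives, by concentration of inner products of independent Gaussian vectors, $\Sigma_d\to\mathbbm{1}_{p+k}$ almost surely. Passing to conditional characteristic functions, $\mathbb{E}[e^{\mathrm{i}\langle t,(\vec{h}^{(0)}_\nu,\vec{h}^\star_\nu)\rangle}\mid\bar{W}^{(0)}] = e^{-\frac12 t^\top\Sigma_d t}\to e^{-\frac12\norm{t}^2}$ a.s., and dominated convergence yields the unconditional limit, so $(\vec{h}^{(0)}_\nu,\vec{h}^\star_\nu)\xrightarrow{D}(\boldsymbol{h}^0,\vec{h}^\star)$ with $\boldsymbol{h}^0,\vec{h}^\star$ independent standard Gaussians, matching Theorem~\ref{thm:DMFT_committee}.

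To conclude, under the assumptions of Theorem~\ref{thm:main:2step_learning} the activation $\sigma$ is analytic and $g^\star$ has polynomially bounded derivatives, so $\ell'$ and $\sigma'$ are continuous; hence $\Phi$ is continuous and the continuous mapping theorem gives $\Phi(\vec{h}^{(0)}_\nu,\vec{h}^\star_\nu)\xrightarrow{D}\nabla_{\boldsymbol{h}}\ell(\boldsymbol{h}^0,\vec{h}^\star)$. Since the prefactor $\norm{\vec{z}_\nu}^2/d$ converges in probability to the constant $1$, Slutsky's theorem applies despite its dependence on $(\vec{h}^{(0)}_\nu,\vec{h}^\star_\nu)$ and delivers the stated convergence of the product. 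The only genuinely delicate point is the joint limit in the second step: one must verify that the cross-overlaps $\langle\vec{w}^{(0)}_i,\vec{w}^\star_m\rangle/d$ and $\langle\vec{w}^{(0)}_i,\vec{w}^{(0)}_j\rangle/d$ (for $i\neq j$) vanish so that the limit law factorizes into independent Gaussians, which is exactly the near-orthogonality of a random initialization to the teacher subspace used throughout Section~\ref{sec:hidden_prog}; the conditioning on $\bar{W}^{(0)}$ is what keeps the argument honest, since at finite $d$ the pair $(\vec{h}^{(0)}_\nu,\vec{h}^\star_\nu)$ is only \emph{conditionally} Gaussian. Note also that because only convergence in distribution is claimed, the polynomial growth of $\ell',\sigma'$ never forces a uniform-integrability argument.
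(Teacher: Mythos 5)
Your proposal follows essentially the same route as the paper's proof: the paper decomposes $\vec z_\nu$ into its projection onto the span of the initial student and teacher weights (which carries $\vec h^{(0)}_\nu,\vec h^\star_\nu$) plus an independent orthogonal part $\vec z'_\nu$, and reduces the claim to $\tfrac{1}{d}\langle \vec z'_\nu,\vec z'_\nu\rangle \to 1$ in probability; your factorization into $\bigl(\norm{\vec z_\nu}^2/d\bigr)\,\Phi\bigl(\vec h^{(0)}_\nu,\vec h^\star_\nu\bigr)$, combined with the $\chi^2$ law of large numbers, the continuous mapping theorem, and Slutsky, is the same idea written out in more detail, and your remark that Slutsky tolerates the dependence between the two factors (because one limit is a deterministic constant) is precisely the point the paper leaves implicit. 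In the single-neuron setting $p=1$ of Section \ref{sec:hidden_prog}, which is the setting the lemma invokes, your argument is complete and correct.

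One intermediate claim, however, is false for the general-$p$ reading of the statement (the lemma does write $\boldsymbol{h}^{0}\in\R^p$, and its conclusion is later used for general $p$): you assert $\Sigma_d=\tfrac1d \bar{W}^{(0)}(\bar{W}^{(0)})^\top\to\mathbbm{1}_{p+k}$ a.s., so that ``the limit law factorizes into independent Gaussians.'' Under the paper's symmetric initialization \eqref{eq:init_sym}, $\vec w^{(0)}_i=\vec w^{(0)}_{p-i+1}$ for $i\in[p/2]$, so the within-student block of $\Sigma_d$ has unit off-diagonal entries at the paired positions and the limit $\boldsymbol{h}^0$ is a \emph{degenerate} Gaussian with $h^0_i=h^0_{p-i+1}$; your cross-overlap check ``$\langle\vec w^{(0)}_i,\vec w^{(0)}_j\rangle/d\to 0$ for $i\neq j$'' fails exactly at $j=p-i+1$. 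This is why the lemma defers the law of $\boldsymbol{h}^0$ to Theorem \ref{thm:DMFT_committee} and asserts independence only \emph{between} $\boldsymbol{h}^0$ and $\boldsymbol{h}^\star$ (the paper itself notes that $C_\theta^{(0,0)}$ is diagonal ``except for the off-diagonal entries corresponding to pairing of neurons''), and this correlation structure matters downstream, in the proof of Theorem \ref{thm:main:2step_learning}. The fix is local: replace the claim $\Sigma_d\to\mathbbm{1}_{p+k}$ by $\Sigma_d\to\Omega^{(0,0)}$, i.e.\ the block matrix whose student block is $C_\theta^{(0,0)}$ (identity plus paired ones) and whose student--teacher cross block vanishes; the cross-overlaps $\langle\vec w^{(0)}_i,\vec w^\star_m\rangle/d\to 0$ do hold, and the rest of your characteristic-function, continuous-mapping, and Slutsky steps go through unchanged.
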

\begin{proof}
    We simply apply the conditioning by projection technique described in Section \ref{sec:iterative} to $\vec z_\nu$ by expressing it as:
    $\vec{z}_\nu = \vec h^{(0)}_\nu + \frac{d-1}{d}\vec{z}'_\nu$, where $\vec{z}'_\nu$ is independent of $\vec h^{(0)}_\nu$.
    The result then follows from convergence in 
    probability of $\frac{1}{d} \langle \vec{z}'_\nu, \vec{z}'_\nu \rangle$ to $1$.
\end{proof}

Next, we characterize $\boldsymbol{\omega}^{(1)}$. We consider two cases:
\begin{itemize}
    \item $M^{(1)}=0$: In this case, Corollary \ref{cor:overlap} implies that the first-layer does not develop any overlap with directions in $U^*$. Equation \eqref{eq:def_C_omega} then implies that $\boldsymbol{\omega}^{(1)}$ is uncorrelated with $\omega^*$.
    \item  $M^{(1)}\neq 0$: In this case the first-layer develops an overlap along $U^*$. By initialization, we have that $M^{(0)}=0$.
    Equation \eqref{eq:def_gt} implies that $M^{(1)}$ is given by:
    \begin{equation} \label{eq:M_fist_step}
       M^{(1)}= - \eta \alpha \mathbb{E}\left[ \nabla_{\boldsymbol{h}} \ell(\boldsymbol{h}^{(0)}) \left( \boldsymbol{h}^* \right)^\top\right]
      \end{equation}

Due to the choice of symmetric initialization (Equation \ref{eq:init_sym}), we have $f(\boldsymbol{h}^{(0)}) = 0$. Therefore, $
\nabla_{\boldsymbol{h}} \mathcal{L}(\boldsymbol{h}^{(0)}) = -ag^*(\vec{h}^*)\sigma'(\boldsymbol{h}^{(0)})$. We thus obtain
\begin{equation}
       M^{(1)}= \eta \alpha \vec{a}\Ea{g^*(\vec{h}^*) \boldsymbol{h}^{(0)}(\boldsymbol{h}^*)^\top} =  \eta \alpha \vec{a}\odot\Ea{\boldsymbol{h}^{(0)}}\Ea{g^*(\vec{h}^*) (\boldsymbol{h}^*)^\top},
\end{equation}
where $\odot$ denotes element-wise multiplication and we used the independence of $\vec{h}^*, \boldsymbol{h}^{(0)}$.
Therefore, the rows of $M^{(1)}$ gains a rank-one spike along $\Ea{g^*(\vec{h}^*)(\boldsymbol{h}^*)}$. This matches the corresponding results for single $\cO(d)$ batch gradient steps under the online-setting \citep{ba2022high,dandi2023twolayer}.

By Equation \eqref{eq:def_C_omega}, $\boldsymbol{\omega}^{(1)}$ can be expressed as:
\begin{equation}\label{eq:overlap_staircase}
    \boldsymbol{\omega}^{(1)} = \boldsymbol{\omega_\perp}^{(1)}+ M^{(1)}\boldsymbol{h}^*\,,
\end{equation}
where $\omega_\perp$ is independent of $\boldsymbol{h}^*$.
\end{itemize}

\subsection{Proof of Theorem \ref{thm:main:2step_learning}}\label{app:sec:main_thm_proof}
To illustrate the learning of directions solely due to the hidden progress explained in section \ref{sec:hidden_prog}, we first focus on the case where $M^{(1)}=0$ i.e when the parameters develop no overlap along the target subspace in the first step.

From Corollary \ref{cor:overlap}, and Slutsky's theorem, we have that:
\begin{equation}\label{eq:overlap_1}
     \frac{1}{d} W^{2} (W^\star)^\top -  \frac{1}{d} W^{1} (W^\star)^\top\xrightarrow[P]{n,d \rightarrow \infty}- \eta \alpha \mathbb{E}\left[ \nabla_{\boldsymbol{h}} \mathcal{L}(\boldsymbol{h}^{(1)},\vec{h}^\star) \left( \boldsymbol{h}^\star \right)^\top\right],
\end{equation}
where from Equation \eqref{eq:h_1}, $\vec{h}^{(1)}$ can be expressed as a combination of $\nabla_{\boldsymbol{h}} \mathcal{L}(\boldsymbol{h}^{(1)},\vec{h}^\star)$ and a Gaussian random variable $\boldsymbol{\omega}^{(1)}$ independent of 
$\vec{h}^\star$. Furthermore, Lemma \ref{lem:cov} implies that the regularization strength $\lambda$ and step-size $\eta$ can be set such that the entries of $\boldsymbol{\omega}^{(1)}$ have unit-variance.
Now, suppose $\vec{v}^\star = (W^\star)^\top \vec{u}^\star$ for some fixed vector $\vec{u}^\star \in \R^p$. First, consider the case when  $\vec{v}^\star$ lies in the subspace $P^*_\perp$ as defined in definition \ref{def:two_step_hard}.

By projecting Equation \eqref{eq:overlap_1} along $\vec{u}^\star$, we obtain that:
\begin{equation}\label{eq:overlap_2}
     \frac{1}{d} W^{2}\vec{v}^\star  -  \frac{1}{d} W^{1}\vec{v}^\star \xrightarrow[P]{n,d \rightarrow \infty}- \eta \alpha \mathbb{E}\left[ \nabla_{\boldsymbol{h}} \mathcal{L}(\boldsymbol{h}^{(1)},\vec{h}^*) \left( \boldsymbol{h}^\star \right)^\top\vec{u}^\star\right],
\end{equation}
For squared loss, we have $-\nabla_{\boldsymbol{h}_j} \mathcal{L}(\boldsymbol{h}^{(t)},\vec{h}^\star) = a_j(g^\star(\vec{h}^\star)-f(\boldsymbol{h}^{(t)}))\sigma'(\boldsymbol{h}^{(t)}_j)$.

Therefore, the overlap for the $j_{th}$ neuron can be expressed as : 
\begin{equation}\label{eq:overlap}
    \frac{1}{d} \langle \vec{w}^{(2)}_i, \vec{v}^\star \rangle -  \frac{1}{d} \langle \vec{w}^{(1)}_i, \vec{v}^\star \rangle \xrightarrow[P]{n,d \rightarrow \infty} \eta \alpha
    \mathbb{E}\left[a_jg^\star(\vec{h}^\star)\sigma'(\boldsymbol{h}^{(1)}_j)\left( \boldsymbol{h}^\star \right)^\top\vec{u}^\star\right]-  \eta \alpha\mathbb{E}\left[a_jf(\boldsymbol{h}^{(1)})\sigma'(\boldsymbol{h}^{(1)}_j)\left( \boldsymbol{h}^\star \right)^\top\vec{u}^\star\right].
\end{equation}
We focus on the first term in the RHS.
By assumption, $\frac{1}{d} \langle \vec{w}^{(0)}_i, \vec{v}^\star \rangle ,\frac{1}{d} \langle \vec{w}^{(1)}_i, \vec{v}^\star \rangle $ converge in probability to $0$. Therefore, using Equation \eqref{eq:h_1}, we obtain:
\begin{align}
   \frac{1}{d} \langle \vec{w}^{(2)}_i, \vec{v}^* \rangle &\xrightarrow[P]{n,d \rightarrow \infty}\eta \alpha
    \mathbb{E}\left[a_jg^\star(\vec{h}^\star)\sigma'(\boldsymbol{h}^{(1)}_j)\left( \boldsymbol{h}^\star \right)^\top\vec{u}^\star\right]\\ &= \eta \alpha
    \mathbb{E}\left[a_jg^\star(\vec{h}^\star)\sigma'(- \nabla_{\boldsymbol{h}} \mathcal{L}(\boldsymbol{h}^{(0)}) + \boldsymbol{\omega}^{(1)})\left( \boldsymbol{h}^\star \right)^\top\vec{u}^\star\right].
\end{align}
Recall that by the choice of initialization, $C_\theta^{(0,0)}$ are diagonal with entries $1$ except for the off-diagonal entries corresponding to pairing of neurons through the symmetric initialization. Furthermore, by initialization, $M^{(0)} = \vec{0}$ and by assumption $M^{(1)} = \vec{0}$.
 
From Lemma \ref{lem:cov}, and the definitions of  we have that by setting $\eta\gamma=1$, 
the covariance $C_\theta^{(0,1)},C_\theta^{(1,1)}$ simplify to:
\begin{align}
    C_\theta^{(0,1)} =  - \eta  \Lambda^{(0)}C_\theta^{(0,0)}
\end{align}
\begin{align}
    C_\theta^{(1,1)} =   \Lambda^{(0)}C_\theta^{(0,1)} + \eta C_\ell^{(0,1)}
\end{align}
By definition, $\Lambda^{(0)},C_\ell^{(0,1)}$ have diagonal entries proportional to $a_j,a_j^2$ respectively. Therefore, we can further set $\eta > 0$ such that the $j_{th}$ diagonal entry of $C^{1,1}_{\theta}$ equals $1+a^2_j$. By case $1$ in section \ref{sec:app:proof_hidden_prog}, we further have that $\vec{\omega}^{(1)}$ is independent of $\vec{h}^*$.

Substituting $\nabla_{\boldsymbol{h}} \mathcal{L}(\boldsymbol{h}^{(0)}) = -ag^*(\vec{h}^*)\sigma'(\boldsymbol{h}^{(0)})$, we obtain the precise condition on $\sigma$ for the $j_{th}$ neuron to learn direction $\vec{v}^\star$ the second timestep. The condition is given by:
\begin{equation}\label{eq:phi}
   \phi(a_j)= \Ea{g^\star(\vec{h^\star})\sigma'(\eta a_jg^\star(\vec{h^\star})\sigma'(h^0_j)+a_j\xi)\langle\vec{h^\star},\vec{u}^\star\rangle} \neq 0,
\end{equation}
where $\vec{h^\star}$ and $\xi$ are independent Gaussian random variables. 
Since $\vec{h}^\star$ matches in distribution $\frac{1}{\sqrt{d}}\vec{W}^\star \vec{z}$, the above condition can equivalently be expressed as the following condition on $f^\star$:
\begin{equation}\label{eq:phi_def}
       \phi(a_j)= \Eb{\vec{z}}{F_{\sigma, \vec{a}}(f^\star(\vec{z}))\langle \vec{v}^\star, \vec{z}\rangle} \neq 0,
    \end{equation}
where:
\begin{equation}\label{eq:F_def}
    F_{\sigma, \vec{a}}(x) = \Eb{\xi_1,\xi_2}{x \sigma'(\eta a_j\sigma'(u)x  + a_j\xi)}, 
\end{equation}
where u is a standard normal variable, corresponding to $h_j^{(0)}$. The above expectation $\phi$ is an analytic function of $a_j$. To show that it is identically non-zero, we consider the derivative w.r.t $a_j$ at $a_j=0$. We have, using the dominated-convergence theorem: 
\begin{align*}
     \phi'(0) &= \eta \Ea{ \sigma'(u)g^\star(\vec{h^\star})^2\sigma''(0)\langle\vec{h^\star},\vec{u}^\star\rangle}\\
     &= \eta \Ea{ g^\star(\vec{h^\star})^2\langle\vec{h^\star},\vec{u}^\star\rangle}\Ea{\sigma'(u)}\sigma''(0)\\
     & = \eta \Ea{ g^\star(\vec{h^\star})^2\langle\vec{h^\star},\vec{u}^\star\rangle}\nu_{1}(\sigma)\sigma''(0),
\end{align*}
where $\nu_{1}(\sigma)$ denotes the $1_{st}$ Hermite-coefficients of $\sigma$. 
Similarly, iterating $k$-times, we have:
\begin{equation}
    D_{a_j}^{k}\phi(\vec{a}) = \eta^{k+1} \Ea{(g^\star(\vec{h^*}))^{k+1}\langle\vec{h^\star},\vec{u}^\star\rangle} \nu^k_{1}(\sigma)\sigma^{k+1}(0),
\end{equation}
where $\sigma^{k+1}(\sigma)$ denotes the $k_{th}$ derivative of $\sigma$. 
Note that for $\phi(\vec{a})$ to not be identically zero, it is sufficient that $D_{a_j}^{k-1}$ is non-zero for some $k \in \N$.
Since by assumption, $\vec{v}^* \in P^*_\perp$, and since the monomials $1,x,x^2,\cdots$ span the space of polynomials, we have that there exists a $k \in \N$ such that:
$\Ea{(g^\star(\vec{h^*}))^{k}\langle\vec{h^\star},\vec{u}^\star\rangle} \neq 0$. The conditions on $\sigma$ further imply that $\nu^k_{1}(\sigma)\sigma^{k+1}(0) \neq 0, \  \forall k \in \mathbb{N}$. Therefore $\phi(a_j)$ is a not identically $0$.

% When $g^\star$ is a single-index polynomial along $z_1$, the condition $D_{a_j}^{k-1}(\phi(\vec{a}) \neq 0$ for $\vec{v}^\star=e_1$ simplifies to:
% \begin{equation}
%     \Ea{(g^\star(z_1)^{k}z_1} \neq 0.
% \end{equation}

% Using Stirling's approximation, we have that 

 Since $\phi(a_j)$ is an analytic function non-identically zero, and the law of $\vec a^{(0)} \sim \mathcal{N}(0,\frac{1}{p}{\mathbbm 1}_p)$ is absolutely continuous w.r.t the Lebesgue measure, we have that $\phi(a_j)$ is non-zero almost surely over the initialization. Now, the second term in Equation \ref{eq:overlap_2} is again an analytic function in $\vec{a}$, distinct from $\phi(a_j)$, and can therefore be almost surely absorbed into the non-zero overlap. This proves the first part of Theorem \ref{thm:main:2step_learning} for developing an overlap along a fixed direction in $P^*_\perp$ when $M^{(1)}=0$. We now proceed to show that the weights $W^{2}$ span $P^*_\perp$.

Let $r$ denote the dimension of the subspace $P^*_\perp$. Suppose that $v^*_1 = (W^\star)^\top \vec{u}_1^\star, v^*_2,\cdots = (W^\star)^\top \vec{u}_2^\star, \cdots, v^*_r = (W^\star)^\top \vec{u}_r^\star$ form an orthonormal basis of $P^*_\perp$. Let $V^* \in \R^{d \times r}$ matrix $M_u^* \in \R^{k \times r}$  denote matrices with columns$ \vec{v}_1^\star,\cdots,  \vec{v}_r^\star$ and $\vec{u}_1^\star,\cdots,  \vec{u}_r^\star$ respectively.

Analogous to Equation \eqref{eq:overlap}, we obtain:
\begin{equation}
     \frac{1}{d} W^{2}V^\star  -  \frac{1}{d} W^{1}V^\star \xrightarrow[P]{n,d \rightarrow \infty}- \eta \alpha \mathbb{E}\left[ \nabla_{\boldsymbol{h}} \mathcal{L}(\boldsymbol{h}^{(1)},\vec{h}^*) \left( \boldsymbol{h}^\star \right)^\top M_u^*\right],
\end{equation}

Following the derivation of Equation \eqref{eq:phi_def}, we obtain that the rows of the matrix $\mathbb{E}\left[\nabla_{\boldsymbol{h}} \mathcal{L}(\boldsymbol{h}^{(1)},\vec{h}^*) \left( \boldsymbol{h}^\star \right)^\top U^*\right]$ are independent for neurons $i,j$ for $j \neq p-i+1$ (due to the symmetric initialization in Equation \eqref{eq:init_sym}.
Furthermore each row of the matrix is absolutely continuous w.r.t the Lebesgue measure on $\R^r$. This implies that $\frac{1}{d} W^{2}V^\star  -  \frac{1}{d} W^{1}V^\star$ has full row-rank almost surely for large enough $p$.
 
Now, suppose that $\vec{v}^\star$ instead lies in the even-symmetric subspace $A^\star$. By induction and closure properties of analytic functions, we have that $\boldsymbol{h}^{(t)}$ can be expressed as:
\begin{equation}
   \boldsymbol{h}^{(t)} =  \mathcal{F}_t(\boldsymbol{h}^{(*)} , \omega_1,\boldsymbol{\omega}^{(1)},\cdots,\boldsymbol{\omega}^{(t)}),
\end{equation}
for an analytic mapping $\mathcal{F}_t$.
Now, similar to  Equation \eqref{eq:overlap}, we have that:
\begin{equation}
  \frac{1}{d} W^{t}\vec{v}^\star  -  \frac{1}{d} W^{t-1}\vec{v}^\star \xrightarrow[P]{n,d \rightarrow \infty}- \eta \alpha \mathbb{E}\left[ \nabla_{\boldsymbol{h}} \mathcal{L}(\boldsymbol{h}^{(t)},\vec{h}^*) \left( \boldsymbol{h}^\star \right)^\top\vec{u}^\star\right],
\end{equation}
Using Fubini's theorem, we may take expectation w.r.t to express each entry of $\mathbb{E}\left[ \nabla_{\boldsymbol{h}} \mathcal{L}(\boldsymbol{h}^{(t)},\vec{h}^*) \left( \boldsymbol{h}^\star \right)^\top\vec{u}^\star\right]$ as:
\begin{equation}
     \Eb{\vec{z}}{F_{t,\vec{a}}(f^\star(\vec{z}))\langle \vec{v}^\star, \vec{z}\rangle}\,,
\end{equation}
for some analytic $F_{t,\vec{a}}$
This ensures that the expectation in \eqref{eq:overlap_2} remains $0$ for all time $t$. This proves the second part of Theorem \ref{thm:main:2step_learning}.

\subsection{Effect of previously learned directions}\label{sec:app:staircase}

We now consider the case when $M^{(1)} \neq 0$, i.e when the first-layer develops an overlap along $U^*$. As shown in \ref{sec:hidden_prog}, the rows of $M^{(1)}$ lie along the same direction given by $\Ea{g^*(\vec{h}^*)(\boldsymbol{h}^*)}$. Without loss of generality, we assume that the direction $\Ea{g^*(\vec{h}^*)(\boldsymbol{h}^*)}$ corresponds to $\vec{e}_1$ in the input space $\R^d$ and that $W^*$ has rows along the standard basis $\vec{e}_1, \cdots, \vec{e}_k$. Note that $\vec{e}_1$ itself lies in $P^*_\perp$ by setting $F(x)=x$ in definition \ref{def:two_step_hard}.

From Equation \eqref{eq:overlap_staircase}, we obtain:
\begin{equation}
    \boldsymbol{\omega}_j^{(1)} = \boldsymbol{{\omega_{\perp}}_j}^{(1)}+\eta C a_j h^*_1\,
\end{equation}
where $C$ denotes a constant dependent on $g^*$. Since $\boldsymbol{\omega}^{(1)}$ is now correlated with $h^*_1$, the condition in Equation \eqref{eq:phi} is modified to:
\begin{equation}
   \phi(a_j)= \Ea{g^\star(\vec{h^\star})\sigma'(\eta a_j\sigma'(u)g^\star(\vec{h^\star})+\xi_1+a_j\xi_2+\eta C a_j h^*_1)\langle\vec{h^\star},\vec{u}^\star\rangle} \neq 0,
\end{equation}
Again, differentiating w.r.t $a_j$, we obtain:
\begin{equation}
    \phi'(0) = \eta \Ea{ g^\star(\vec{h^\star})^2\langle\vec{h^\star},\vec{u}^\star\rangle}\nu_1(\sigma)\nu_{2}(\sigma) + \eta \Ea{ g^\star(\vec{h^\star})h^*_1 \langle\vec{h^\star},\vec{u}^\star\rangle}
\end{equation}
Similar to section \ref{app:sec:main_thm_proof}, we have that $\vec{v}^* \in P^*_\perp$ is sufficient for the first term to be non-zero almost surely over $a_j$. If the second-term is non-zero, we have that $\vec{u}^\star$ is learned through the staircase mechanism, since it implies that $g^\star(\vec{h^\star})$ contains terms dependent on $h^*_1$  and linearly coupled with $\langle\vec{h^\star},\vec{u}^\star\rangle$. In either case, we obtain that $W^{(2)}$ almost surely obtains an overlap along $\vec{v}^*$. This concludes the proof of the first part of Theorem \ref{thm:main:2step_learning}.

More, generally, suppose that $\vec{e}_1,\cdots \vec{e}_m$ denote a basis of the directions in $U^*$ learned up to time $t$. Then, the modified condition for learned a new direction $\vec{v}^*$ at time $t+1$ is:
\begin{equation}\label{eq:f_hard}
        \Eb{\vec{z}}{F(f^\star(\vec{z}),z_1,\cdots z_m)\langle \vec{v}^\star, \vec{z}\rangle}\neq 0,
\end{equation}
for a polynomial $F:\R^{m+1} \rightarrow \R$. Therefore, new directions can be learned through a combination of the staircase and hidden-progress mechanism.

\subsection[Typical examples where 1]{Typical examples where $E_\perp^*=P^*_\perp = U^*$}
\label{sec:app:even_sym}

For several target functions of interest, the class $P^*_\perp$ can be shown to cover the entire target space $U^*$. We list some of them below:
\begin{itemize}
    \item Single-index odd polynomials with all non-negative/non-positive coefficients. This follows since $\Eb{\vec{z}}{(f^\star(\vec{z}))^k\langle \vec{v}^\star, \vec{z}\rangle}$ decomposes into sums of  non-negative/non-positive terms.
    \item Single-index odd Hermite polynomials. We prove this below in Lemma \ref{lem:odd_hermite}   
    \item Staircase function $f^*(\vec{z})=z_1+z_1z_2+z_1z_2z_3$. This follows directly by evaluating $\Eb{\vec{z}}{(f^\star(\vec{z}))^2z_i}$ for $i=2,3$.
\end{itemize}
 In general, for polynomial $f^\star$, the condition:
 \begin{equation}
     \Eb{\vec{z}}{(f^\star(\vec{z}))^k\langle \vec{v}^\star, \vec{z}\rangle} = 0, \forall k \in \N,
 \end{equation}
specifies an overdetermined system of infinite homogenous polynomial equations on the coefficients of $f^\star$. Therefore we expect the condition to fail almost surely for typical choices of $f^\star$. We leave an investigation of this using algebraic tools to future investigation.

\begin{lemma}\label{lem:odd_hermite}
For any odd Hermite-polynomial $H_{2k'+1}$ for $k' \in \N$,:
\begin{equation}
    \Eb{z}{(H_{2k'+1}(z))^3z}, \neq 0
\end{equation}
where $z \sim \mathcal{N}(0,1)$
\end{lemma}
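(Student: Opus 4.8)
The plan is to reduce the problem to a single orthogonality computation via Gaussian integration by parts. Throughout I take $H_n$ to be the probabilist's Hermite polynomials, normalized so that $\E[H_m(z)H_n(z)] = n!\,\delta_{mn}$ for $z\sim\mathcal N(0,1)$; these satisfy $H_n'(z) = nH_{n-1}(z)$ and $H_1(z)=z$. Writing $n = 2k'+1$ and applying Stein's lemma $\E[z\,f(z)] = \E[f'(z)]$ to $f(z)=H_n(z)^3$ gives
\begin{equation*}
\E_z\!\left[(H_n(z))^3 z\right] = \E_z\!\left[3 H_n(z)^2 H_n'(z)\right] = 3n\,\E_z\!\left[H_n(z)^2 H_{n-1}(z)\right].
\end{equation*}
Since $3n>0$, it suffices to show the remaining expectation is strictly positive.

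The key step is the Hermite linearization (product) formula
\begin{equation*}
H_n(z)^2 = \sum_{\ell=0}^{n}\binom{n}{\ell}^2 \ell!\, H_{2n-2\ell}(z),
\end{equation*}
in which every coefficient is manifestly positive. By orthogonality, $\E[H_n^2 H_{n-1}]$ equals $(n-1)!$ times the coefficient of $H_{n-1}$ in this expansion. The expansion contains only even-degree Hermite terms $H_{2n-2\ell}$; because $n$ is odd, $n-1$ is even, and $n-1=2n-2\ell$ is solved by the integer $\ell=(n+1)/2$, which lies in $\{0,\dots,n\}$. Hence $H_{n-1}$ genuinely appears, with the strictly positive coefficient $\binom{n}{(n+1)/2}^2\,((n+1)/2)!$, so that
\begin{equation*}
\E_z\!\left[H_n(z)^2 H_{n-1}(z)\right] = \binom{n}{(n+1)/2}^2\left(\tfrac{n+1}{2}\right)!\,(n-1)! > 0.
\end{equation*}
Combining with the previous display yields $\E_z[(H_n(z))^3 z] = 3n\binom{n}{(n+1)/2}^2((n+1)/2)!\,(n-1)! > 0$, which is in particular nonzero.

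There is no genuine obstacle here: the only care needed is fixing the Hermite normalization (the derivative and linearization identities change by explicit constants under the physicist's convention, but the sign and nonvanishing are unaffected) and verifying the integrability required for Stein's lemma, which is immediate since $f$ is a polynomial. It is worth emphasizing where oddness is essential: for even $n$, the index $n-1$ is odd and cannot appear in the purely even expansion of $H_n^2$, so the coefficient vanishes and the identity $\E[(H_n)^3 z]=0$ holds instead --- consistent with even single-index targets lying in $E^\star$ and being hard per Theorem~\ref{thm:main:2step_learning}.
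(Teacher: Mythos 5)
Your proof is correct and follows essentially the same route as the paper's: Stein's lemma, the identity $H_n' = nH_{n-1}$, and the Hermite linearization formula for $H_n^2$. In fact your version is slightly more complete, since the paper merely cites the linearization formula to assert $\E_z[(H_{2k'+1})^2 H_{2k'}] \neq 0$, whereas you extract the explicit coefficient $\binom{n}{(n+1)/2}^2\left(\tfrac{n+1}{2}\right)!\,(n-1)! > 0$ and pinpoint exactly where oddness of $n$ is used.
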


\begin{proof}
Using Stein's Lemma, we have:
\begin{equation}\label{eq:stein}
    \Eb{z}{(H_{2k'+1}(z))^3z} = 3\Eb{z}{(H_{2k'+1}(z))^2  \frac{d}{dz}H_{2k'+1}(z)}
\end{equation}
Next, we recall the following relation between Hermite polynomials and their derivatives:
\begin{equation}
    \frac{d}{dz}H_{n}(z) = n H_{n-1}(z), \forall n \in \N.
\end{equation}
Substituting in Equation \eqref{eq:stein}, we obtain:
\begin{equation}\label{eq:stein2}
    \Eb{z}{(H_{2k'+1}(z))^3z} = 3(2k'+1)\Eb{z}{(H_{2k'+1}(z))^2 H_{2k'}(z)(z)}.
\end{equation}
The above expectation can be obtained analytically using the linearization formulas for Hermite polynomials \citep{andrews2004special} to show that $ \Eb{z}{(H_{2k'+1}(z))^2 H_{2k'}(z)(z)} \neq 0$ for all $k' \in \N$.
\end{proof}

\subsection{Proof of Proposition \ref{prop:OE}}\label{app:sec:proof:OE}

Suppose that $\vec{v}^\star \in OE^\star$ i.e $\vec{v}^\star$ is orthogonally even-symmetric w.r.t $f^\star$ for some transformation $O_{\perp} \in O(\{v^\star\}_\perp)$. Let $z' = O_{\perp}R_\vec{v^\star}\vec{z}$. Then, by the invariance of the Gaussian measure under orthogonal transformations, we have:
 \begin{equation}
   \Eb{\vec{z}}{f^\star(\vec{z})\langle \vec{v}^\star, \vec{z}\rangle} = \Eb{\vec{z}'}{f^\star(\vec{z}')\langle \vec{v}^\star, \vec{z}'\rangle}.
 \end{equation}
 However, the expectation on the right can equivalently be expressed as:
 \begin{align*}
     \Eb{\vec{z}'}{f^\star(\vec{z}')\langle \vec{v}^\star, \vec{z}'\rangle} &=-\Eb{\vec{z}}{f^\star(O_{\perp}R_{\vec{v}^\star}\vec{z}))\langle \vec{v}^\star, \vec{z}\rangle}\\
     &= -\Eb{\vec{z}}{f^\star(\vec{z})\langle \vec{v}^\star, \vec{z}\rangle}
 \end{align*}
 where in the second equality we used $\langle \vec{v}^\star, \vec{z}'\rangle = - \langle \vec{v}^\star, \vec{z}\rangle$ and in the third the definition \ref{def:ortho_symmetric_subspace}.
Therefore, for any $\vec{v}^\star \in OE^\star$, we have:
\begin{equation}
    \Eb{\vec{z}}{f^\star(\vec{z})\langle \vec{v}^\star, \vec{z}\rangle} = 0
\end{equation}

Furthermore, it is straightforward to see that  $\vec{v}^\star$ remains orthogonally even-symmetric w.r.t the composition $F(f^\star(\cdot))$. Therefore, we have that $E^\star \subseteq OE^\star \subseteq A^\star \subseteq P^\star$.

\subsection{Illustration of non-even symmetric hard directions} \label{sec:app:hard_sym}
\begin{figure} 
    \centering
    \includegraphics{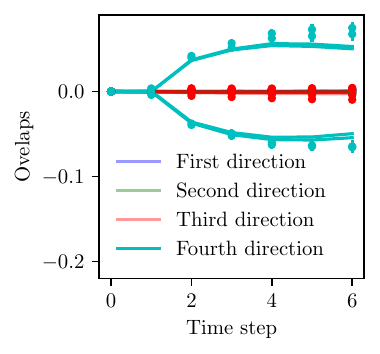} 
    \caption{
    An illustration of a hard, non-even target $f^\star(\vec z) \!=\! z_1 z_2 z_3 \!+\! \mathrm{He}_3(z_4)$ being learned by a student with $p=4$ hidden units. We can see that, \textit{even when reusing the batch}, the teacher can only learn the direction associated with $z_4$, while keeping a zero overlap otherwise. The continuous lines are from the DMFT numerical integration, the dots are simulations with $d\!=\!10000$. In the legend the overlap with the $n$-th direction is the projection of the student weights in the subspace associated with $z_n$. For this figure we have $\sigma \!=\! \rm relu$, $n=5d$, $\eta \!=\! 0.2$.}
    \label{fig:bad_example}
\end{figure}

We now show the existence of target functions where $E^* \neq OE^*$. Without loss of generality, we assume that the rows of $W^*$ lie along the standard Euclidean basis $\vec{e}_1, \vec{e}_2,\cdots, \vec{e}_k$
\begin{lemma}
    Suppose that $f^*(\vec{z}) = z_1z_2z_3$. Let $\vec{v}^* = \vec{e}_1+\vec{e}_2+\vec{e}_3$. Then $\vec{v}^* \notin E^*$ but $OE^*=U^*$.
\end{lemma}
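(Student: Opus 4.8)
The plan is to verify both assertions by writing down the relevant orthogonal maps explicitly and checking invariance of the cubic monomial $z_1z_2z_3$ by direct substitution. Since $U^\star=\mathrm{span}(\vec e_1,\vec e_2,\vec e_3)$ and every $\vec v^\star\in U^\star$ acts only on the first three coordinates, all Householder reflections $R_{\vec v^\star}$ and the orthogonal maps $O_\perp$ below fix $z_4,\dots,z_d$, so each verification reduces to a computation in $\R^3$.

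For $\vec v^\star\notin E^\star$ I would first show that $\vec v^\star=\vec e_1+\vec e_2+\vec e_3$ is not an even-symmetric direction, i.e. that $R_{\vec v^\star}$ is not a symmetry of $f^\star$. With $\norm{\vec v^\star}^2=3$ one has $R_{\vec v^\star}\vec e_1=\vec e_1-\tfrac{2}{3}(\vec e_1+\vec e_2+\vec e_3)=(\tfrac13,-\tfrac23,-\tfrac23,0,\dots)$, whence $f^\star(R_{\vec v^\star}\vec e_1)=\tfrac13\cdot(-\tfrac23)\cdot(-\tfrac23)=\tfrac{4}{27}\neq 0=f^\star(\vec e_1)$. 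To connect this to the subspace statement I would then classify the even-symmetric axes of $z_1z_2z_3$: any orthogonal symmetry preserves the zero set (the union of the three coordinate hyperplanes) and hence permutes the lines $\R\vec e_i$ up to sign, so the even-symmetric reflections are signed coordinate-swaps, whose axes of the permutation-only type $\vec e_i-\vec e_j$ lie in $\{\vec v^\star\}_\perp$. The robust conclusion of this part is that $\vec v^\star$ is not even-symmetric, so it is not one of the symmetry axes generating $E^\star$.

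For $OE^\star=U^\star$, note that $OE^\star\subseteq U^\star$ holds by definition, so it suffices to show $U^\star\subseteq OE^\star$, and for this I would show each coordinate axis $\vec e_r$ ($r\in\{1,2,3\}$) is orthogonally-even-symmetric. Taking $r=1$, let $O_\perp\in O(\{\vec e_1\}_\perp)$ be the reflection that sends $z_2\mapsto -z_2$ and fixes every other coordinate; it fixes $\vec e_1$ and is orthogonal on $\{\vec e_1\}_\perp$. Then $O_\perp R_{\vec e_1}$ sends $z_1\mapsto -z_1$, $z_2\mapsto -z_2$ and fixes $z_3$, so $f^\star(O_\perp R_{\vec e_1}\vec z)=(-z_1)(-z_2)z_3=z_1z_2z_3=f^\star(\vec z)$. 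Hence $\vec e_1\in OE^\star$, and by the same argument with the coordinates permuted $\vec e_2,\vec e_3\in OE^\star$; since these span $U^\star$, we conclude $OE^\star=U^\star$.

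The main subtlety lies in the first part. It is immediate that $\vec v^\star$ is not itself even-symmetric (the value $4/27\neq0$), and equally immediate that each coordinate axis is orthogonally-even-symmetric; together these already exhibit the phenomenon separating $E^\star$ from $OE^\star$, namely that the reflection along $\vec v^\star$ can only be repaired into a symmetry by additionally rotating the orthogonal complement. The delicate step is upgrading non-even-symmetry of $\vec v^\star$ to the subspace statement, which requires the complete list of even-symmetric axes from the signed-permutation symmetry group of $z_1z_2z_3$ together with a check on their span; this bookkeeping is the one place that must be handled with care. Dually, the point worth emphasizing for $OE^\star$ is that $\vec v^\star$ enters $OE^\star$ only as a spanned direction and not as an orthogonally-even-symmetric axis itself, since no signed permutation can send $\vec v^\star\mapsto-\vec v^\star$ without flipping all three coordinate signs and hence reversing the sign of the cubic.
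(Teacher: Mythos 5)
Your proof of the second assertion is correct and is exactly the paper's argument: for $r\in\{1,2,3\}$, composing $R_{\vec e_r}$ with the orthogonal map $O_\perp$ that flips one other coordinate yields a two-sign-flip symmetry of $z_1z_2z_3$, so each $\vec e_r$ is orthogonally-even-symmetric, and since these span $U^\star$ one gets $OE^\star=U^\star$. Likewise your $4/27$ computation is a correct (and more direct) verification of what the paper's proof also establishes for the first assertion, namely that $\vec v^\star=\vec e_1+\vec e_2+\vec e_3$ is not an even-symmetric \emph{direction}; the paper argues this instead from the oddness of $f^\star$.

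The genuine gap is the step you yourself flag as ``the one place that must be handled with care,'' and it cannot be repaired: by Definition~\ref{def:symmetric_subspace}, $E^\star$ is the \emph{span} of all even-symmetric directions, and completing your own classification of the even-symmetric axes shows that this span is all of $U^\star$, so the literal claim $\vec v^\star\notin E^\star$ is false. Concretely, the orthogonal symmetries of $z_1z_2z_3$ on $U^\star$ are the signed permutations of $(z_1,z_2,z_3)$ with sign product $+1$, and the reflections among them are not only the pure transpositions, with axes $\vec e_i-\vec e_j$, but also the two-sign-flip swaps $z_i\mapsto -z_j$, $z_j\mapsto -z_i$, $z_k$ fixed: one checks $R_{\vec e_1+\vec e_2}\vec z=(-z_2,-z_1,z_3,z_4,\dots)$, which leaves $z_1z_2z_3$ invariant, so $\vec e_1+\vec e_2$ (and likewise every $\vec e_i+\vec e_j$) is an even-symmetric direction in $U^\star$ that does \emph{not} lie in $\{\vec v^\star\}_\perp$. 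The six axes $\vec e_i\pm\vec e_j$ span $U^\star$, hence $E^\star=U^\star\ni\vec v^\star$ under the span definition. The only true, provable version of the first assertion is what you call the ``robust conclusion'': $\vec v^\star$ is not itself an even-symmetric direction. Note that the paper's own proof shares this defect: it rests on the intermediate claim that $E^\star=U^\star$ would force $f^\star(-\vec z)=f^\star(\vec z)$, which this very example refutes, since even-symmetric directions may span $U^\star$ without containing an orthogonal basis of such directions, so their reflections need not compose to $-\vec{I}$ on $U^\star$. The lemma is thus coherent only under an asymmetric reading (``axis'' for $E^\star$, ``span'' for $OE^\star$), an asymmetry you implicitly noticed when observing that $\vec v^\star$ is not an orthogonally-even-symmetric axis either.
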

\begin{proof}
    $\vec{v}^* \notin E^*$ follows directly by noting that $f^*(\vec{z})$ is even-symmetric along $\vec{e}_1-\vec{e}_2$ and $\vec{e}_1-\vec{e}_3$. We further have that a target $f^*$ satisfying $E^*=U^*$ must satisfy $f^*(-\vec{z}) = f^*(\vec{z}) \ \forall \vec{z} \in \R^d$. Therefore, since $f^*(-\vec{z}) = -f^*(\vec{z})$, $f^*$ cannot be even-symmetric along $\vec{e}_1+\vec{e}_2+\vec{e}_3$. 
    Next, we show that $OE^*=U^*$. Since $\vec{e}_1, \vec{e}_2, \vec{e}_3$ span $U^*$, and $f^*$ is symmetric w.r.t permutations of $z_1z_2z_3$, it suffices to show that condition in definition \ref{def:ortho_symmetric_subspace} holds for $\vec{v}^* = \vec{e}_1$. The orthogonal complement $\{\vec{v}^*\}_\perp$ is given by $\operatorname{span}(\vec{e}_2,\vec{e}_3)$. Therefore the transformation $O_2$ defined by $z_2 \rightarrow -z_2$ is a valid orthogonal transformation $O_\perp$ as per definition \ref{def:ortho_symmetric_subspace}. We have:

\begin{align*}
    f^*(O_2 R_{\vec{v}^*}\vec{z}) &= (-z_1)(-z_2)z_3\\
    &= z_1z_2z_3 = f^*(\vec{z}).
\end{align*}
This shows that $\vec{e}_1$ lies in $OE^*$. Similarly, we have by symmetry $\vec{e}_1 \in OE^*$

We present a numerical illustration of another such example in figure \ref{fig:bad_example}.
\end{proof}

One can in-fact construct a family of functions with a direction $\vec{v}^*$, for instance $\vec{v}^* = \vec{e_1}$ lying in $OE^*$ but in general not in $E^*$. To see this, let $f_1$ be a function $\R^d \rightarrow \R$, depending only on projections of $\vec{z}$ along $\{\vec{e}_1\}_\perp$ and let $O_\perp$ by an involutory orthogonal transformation on $\{\vec{e}_1\}_\perp$ i.e an orthogonal transformation satisfying $O^2_\perp = \mathbf{I}$ or equivalently $O_\perp = (O_\perp)^\top$. Now, let $f_2:\R \rightarrow \R$ be an odd function. Then, consider the function:
\begin{equation}
    f^*(\vec{z}) = (f_1(O_\perp \vec{z})-f_(\vec{z}))f_2(z_1).
\end{equation}

We observe that:
\begin{align*}
    f^*(O R_{\vec{e}_1}\vec{z}) &= (f_1(O^2_\perp\vec{z})-f_(O_\perp\vec{z}))f_2(-z_1)\\
    &= (f_1(O_\perp\vec{z})-f_(\vec{z}))f_2(z_1)\\
    &= f^*(\vec{z}),
\end{align*}
where we used that $O^2_\perp = \mathbf{I}$ and $f_2(-z_1)=-f_2(z_1)$. Therefore, for any such function $f^*(\vec{z})$, $\vec{e}_1 \in OE^*$. 

% The above argument can be applied to any $f^*(\vec{z})$

% \subsection{Merging the weights and preactivation stochastic process}

\subsection{Implications for generalization} \label{sec:app:gen_error}
Since the specific guarantees of such results depend on the choice of activation and target functions, we illustrate this for the case of single-index target functions with matching activations:
\begin{corollary}
Consider the setting of a single-index target and student network with matching activations i.e. $\sigma=g^\star$, such that $\sigma$ is a polynomial with finite degree, satisfying the following assumption, $\exists k \in \N$ such that:
\begin{equation}\label{eq:matching}
    \Ea{\sigma^k(z)z}\Ea{D^k\sigma(z)} \neq 0, 
\end{equation},
where $D^k$ denotes the $k_{th}$ derivative.
Let $\hat{\vec{w}}$ be the parameters obtained after two steps of gradient descent with batch size $\cO(d)$ using $\eta$ as in Theorem \ref{thm:main:2step_learning}. Then, almost surely over the initialization $a \sim \mathcal{N}(0,1)$, for any $\epsilon > 0$, there exists a step size $\eta'$ such that online SGD on 
squared loss reaches generalization error $<epsilon$ in time $\cO(d)$. 
\end{corollary}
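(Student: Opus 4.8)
The plan is to split the argument into a \emph{weak-recovery} phase, governed by Theorem~\ref{thm:main:2step_learning}, and a subsequent \emph{ballistic} (descent) phase, governed by the classical single-index online-SGD analysis of \citep{BenArous2021}; the two phases are glued together by the observation that a constant overlap is a valid warm start for the latter.

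First I would check that assumption~\eqref{eq:matching} places $\vec{w}^\star$ in $P^\star_\perp$ and triggers Theorem~\ref{thm:main:2step_learning}. In the single-index case the teacher pre-activation $\vec{h}^\star$ and the coordinate $\vec{u}^\star$ are scalars, so the learnability functional $\phi(a)$ of Eq.~\eqref{eq:phi} is a genuine function of one variable, and the Taylor-in-$a$ computation from the proof of Theorem~\ref{thm:main:2step_learning} shows that $D_a^k\phi(0)$ factorizes into a ``power-correlation'' term of the form $\Ea{\sigma^k(h^\star)\,h^\star}$ (which is exactly the criterion $\vec{w}^\star\in P^\star_\perp$ once $g^\star=\sigma$) times a purely $\sigma$-dependent prefactor. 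Because $\sigma$ is now a \emph{finite-degree} polynomial, the blanket condition $\sigma^{(k)}(0)\neq 0\ \forall k$ of Theorem~\ref{thm:main:2step_learning} must be replaced by a single-$k$ requirement: the two factors in~\eqref{eq:matching} are precisely what guarantees that $D_a^k\phi(0)\neq 0$ for the corresponding $k$, hence that $\phi\not\equiv 0$. Absolute continuity of the law of $a$ then gives $\phi(a)\neq 0$ almost surely, and Corollary~\ref{cor:overlap} upgrades this to $|\langle\hat{\vec{w}},\vec{w}^\star\rangle|/d=\Theta_d(1)$ with high probability.

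Next I would feed the warm start $\hat{\vec{w}}$, which now has overlap $m_0=\Theta_d(1)$, into the online-SGD phase. The crucial input from \citep{BenArous2021} is that the information-exponent barrier only controls the time to \emph{escape the equatorial/mediocrity region} of overlap $\cO(d^{-1/2})$; once the overlap is order one, the drift of the overlap process is $\Theta_d(1)$ and $m$ is driven to $1-\delta$, for any fixed $\delta>0$, in $\cO(d)$ online steps, independently of the information exponent. I would therefore restart online SGD from $\hat{\vec{w}}/\norm{\hat{\vec{w}}}$ on \emph{fresh} i.i.d.\ samples with an appropriately small step size $\eta'$, and invoke their ballistic-phase recovery theorem; the independence of the new samples from $\hat{\vec{w}}$ is what lets me treat it as a generic warm start of the required overlap, so that the (a priori uncontrolled) orthogonal component of $\hat{\vec{w}}$ plays no adversarial role. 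Finally, since $\sigma=g^\star$, the population MSE is a smooth explicit function of $m$ and of the scalar norm/second-layer scale, vanishing as $m\to1$ at the correct scale; the residual scalar degrees of freedom can be fit to accuracy $\epsilon$ using $\cO(d)$ additional fresh samples exactly as in \citep{damian2022neural,abbe2023sgd}, and summing the $\cO(d)$ samples across the two gradient steps, the ballistic phase, and the scalar fit keeps the total complexity $\cO(d)$.

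The main obstacle I anticipate is the interface between the two phases. Theorem~\ref{thm:main:2step_learning} and Corollary~\ref{cor:overlap} control $\hat{\vec{w}}$ only \emph{in probability} and only through its low-dimensional projection onto $\vec{w}^\star$; the online-SGD theorems, by contrast, are stated for a warm start with a well-controlled overlap \emph{and} a well-controlled norm. The delicate step is thus to certify that $\hat{\vec{w}}$ simultaneously concentrates to a constant overlap and a constant norm (both of which the DMFT equations~\eqref{eq:def_theta_planted}--\eqref{eq:def_h_planted} do provide, through $M^{(2)}$ and $C_\theta^{(2,2)}$), and that re-randomizing the data decouples $\hat{\vec{w}}$ from the subsequent trajectory. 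Once this decoupling is in place, the remaining work---checking that ``generalization error'' is well defined for the matching single-index model and that the scalar parameters are identifiable---is routine given the cited results.
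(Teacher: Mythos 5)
Your proposal follows essentially the same route as the paper's proof: recognize that condition \eqref{eq:matching} makes a derivative $D_a^{k}\phi(0)$ of the learnability functional $\phi$ from Equation \eqref{eq:phi} nonzero (so $\phi\not\equiv 0$ and weak recovery after two GD steps holds almost surely over $a$, via Theorem \ref{thm:main:2step_learning} and Corollary \ref{cor:overlap}), then invoke the ballistic-phase result of \citep{BenArous2021} (their Proposition 2.1) to drive the overlap from $\Theta_d(1)$ to $1-\epsilon'$ in $\cO(d)$ online steps with a suitably small step size $\eta'$, with the matching activation $\sigma=g^\star$ converting strong recovery into arbitrarily small generalization error. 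Your extra attention to the phase interface (norm concentration via $C_\theta^{(2,2)}$, decoupling through fresh samples, and the final scalar fit of second-layer parameters) is a refinement of, not a departure from, the paper's terser argument.
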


%\begin{remark}
We verify numerically that the above assumption holds in particular for all odd Hermite polynomials upto order $50$.
    % {\color{red}LZ: Do we want to say "odd" here or "non-even"?}
%\end{remark}
The corollary implies that such target functions can be learned with $\cO(d)$ sample complexity using gradient descent alone, without resorting to specialized algorithms and techniques such as spectral initialization.

\begin{proof}
    Let $\vec{w}^*$ denote the single-direction in the teacher subspace with $\norm{\vec{w}^*}=\sqrt{d}$.
    We note that Equation \eqref{eq:matching} is proportional to the ${k-1}_{th}$ derivative of $\phi(a_j)$ defined in Equation \eqref{eq:phi_def}. Therefore, the condition is sufficient to ensure that $\phi(a_j)$ is not identically zero and the student neuron almost surely develops an overlap along $\vec{w}^*$.
    The result then follows from
    Proposition 2.1 in \citep{BenArous2021}, which proves that upon weak recovery i.e a non-zero overlap the target direction $\vec{v}^*$, online SGD on a differentiable activation with polynomially bounded derivatives converges to strong recovery Concretely, for any starting non-zero overlap $\theta > 0$, for any $\epsilon' > 0$, there exists $C_{\epsilon',\theta}$ and small-enough step-size such that online SGD with time $C_{\epsilon',\theta} d$ achieves overlap $1-\epsilon'$ along $\vec{w}^*$
    Due to the matching activations, this suffices to obtain arbitrary generalization error.
    \end{proof}

\section{General Multi-Pass Schemes}\label{sec:app:repeat}

\begin{figure*}[t]
    \centering
    \includegraphics{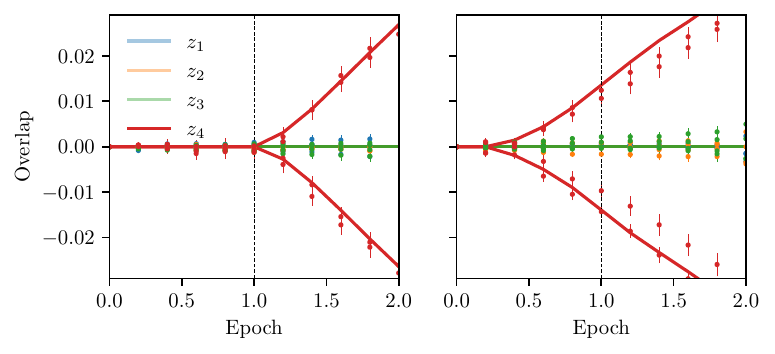}
    \caption{Comparison of theory and experiments for Gradient Descent on the target $z_1z_2z_3 + \mathrm{He}_3(z_4)$. Each gradient step uses a mini-batche of $n/5$ samples. On the {\bf left} we use the data sequentially, on the {\bf right} we sample the batch from the dataset with replacement. The continuous lines are from the DMFT numerical integration, the dots are simulations with $d\!=\!10000$ averaged over $32$ realisations. In the legend the overlap with the $n$-th direction is the projection of the student weights in the subspace associated with $z_n$. For this figure we have $\sigma \!=\! \rm relu$, $n=5d$, $\eta \!=\! 0.2$.}
    \label{fig:overlaps_batch}
\end{figure*}

\subsection{Sketch of Proof for Extending Theorem \ref{thm:main:2step_learning} to Cycling over Epochs}
Let $\vec{Z}^1,\cdots, \vec{Z}^n_e$ denote $n_e$ independent minibatches of size $n_b$ such that $\frac{n_b}{d} = \mathcal{O}(1)$ with $n_e$ being finite. The effective dynamics for a finite number of epochs can be obtained by noting that Theorem 3.2 in \cite{gerbelot2023rigorous} allows generalizing Theorem \ref{thm:Cedric_theorem} to dynamics of the form:
 \begin{align}
        W^{(t+1)} =  W^{(t)} - \eta \lambda W^{(t)} -  \sum_{i=1}^{n_e}\eta \frac{1}{\sqrt{d}}\sum_{\nu=1}^{n_b} F^t_i\left(\frac{ W^{(t)}\vec z^i_\nu} {\sqrt{d}}\right) (\vec z^i_\nu)^\top.
    \end{align}
The above form of the dynamics allows a different update to be utilized for data corresponding to different blocks  $\vec{Z}^1,\cdots, \vec{Z}^n_e$. In particular, setting $F^t_i$ to $0$ whenever $t\mod i \neq 0$  and $\nabla_{\boldsymbol{h}} \mathcal{L}$ otherwise, results in a cycling schedule over the mini-batches $\vec{Z}^1,\cdots, \vec{Z}^n_e$. Subsequently, one can show that the update from $\vec{Z}^i$ in the first-epoch leads to the hidden-progress effect on $M^{t}$ when the model re-uses $\vec{Z}^i$ in the second epoch.

We believe a similar result would hold for $n_b = \mathcal{O}(1)$ samples in the minibatch, as displayed in Figure \ref{fig:overlaps_minibatch}

\begin{figure}[t]
    \centering
    \includegraphics{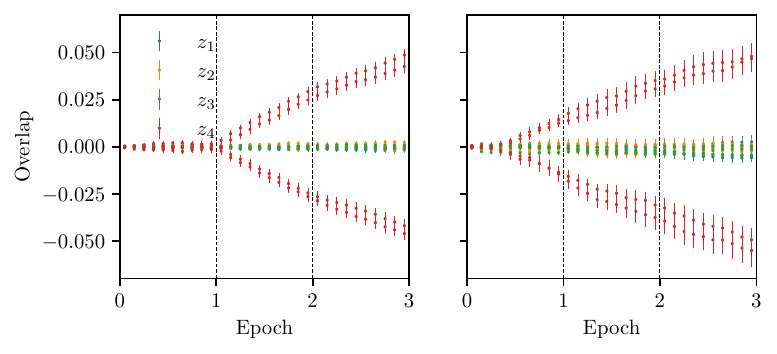}
    \caption{Experiments for Gradient Descent on the target $z_1z_2z_3 + \mathrm{He}_3(z_4)$. We use minibatches with $1$ sample each. On the {\bf left} we use the data sequentially, on the {\bf right} we sample the data point from the dataset with replacement. The dots are simulations with $d\!=\!10000$ averaged over $32$ realisations. In the legend the overlap with the $n$-th direction denotes the projection of the student weights in the subspace associated with $z_n$. For this figure we have $\sigma \!=\! \rm relu$, $n=5d$, $\eta \!=\! 0.2$.}
    \label{fig:overlaps_minibatch}
\end{figure}
\section{Details on the numerics}
\label{sec:app:hyperparams}

\subsection{DMFT equations with a single stochastic process}
In this section, we present a set of exact equations equivalent to the ones in the main text, but that depend on a single stochastic process.
% . These describe a set of low-dimension projections of the hidden layer weights $W^{(t)}$ in the teacher subspace $W^\star$, i.e. $M^{(t)} = W^{(t)} W^{\star\top} \in \mathbb{R}^{p \times k}$ 
% and their norm $Q^{(t)} = W^{(t)} W^{(t)\top} \in \mathbb{R}^{p \times p}$ 
% \textit{as well as} the preactivation of the weights of the teacher $\vec h^* = W^* \vec z / \sqrt{d} \in \mathbb{R}^k$ and the student $\vec h^{(t)} = W^{(t)} \vec z / \sqrt{d} \in \mathbb{R}^p$.
 It is possible to show that asymptotically in the proportional limit, i.e. for $d\to\infty$ and $n=\alpha d$, the pre-activations of the student are distributed as $\vec h^{(t)} = \vec r^{(t)} + M^{(t)}\vec h^*$, with the constraint:
\begin{align} \label{eq:r_update}
    \vec r^{(t+1)} = \vec r^{(t)} -\eta \left[ \left(\lambda+\Lambda^{(t)}\right) \vec r^{(t)} + \nabla_{\vec h^{(t)}}\ell\left( \vec h^{(t)}\right)-\sum_{\tau=0}^{t-1} R_\ell^{(t,\tau)}\vec r^{(t)}+\vec\zeta^{(t)}\right]\nonumber
\end{align}
Here $\vec\zeta^{(t)}$ is a zero mean Gaussian Process with covariance
\begin{align}
    \mathbb{E}_{\vec \zeta}\left[\vec\zeta^{(t)} \vec\zeta^{(\tau)\top}\right] = \alpha\mathbb{E}_{\vec r, \vec h^*}\left[\nabla_{\vec h^{(t)}}\ell\left( \vec h^{(t)}\right)\nabla_{\vec h^{(\tau)}}\ell\left( \vec h^{(\tau)}\right)^\top\right]\nonumber
\end{align}
and the effective regularisation $\Lambda^{(t)}$ concentrates to
\begin{align} %\label{eq:lambda_def}
    \Lambda^{(t)} = \alpha\mathbb{E}_{\vec h}\left[\nabla^2_{\vec h^{(t)}}\ell\left( \vec h^{(t)}\right)\right]\,.
\end{align}
The memory kernel $R_\ell^{(t,\tau)}$ is identically zero for $t\leq \tau$ while for $t > \tau$ it concentrates to
\begin{align} %\label{eq:R_def}
    R_\ell^{(t,\tau)} = \alpha \mathbb{E}_{\vec h}\left[\frac{\partial\, \nabla_{\vec h^{(t)}}\ell\left( \vec h^{(t)}\right)}{\partial\, \zeta^{(\tau)}} \right]
\end{align}
% with the process $T^{(t,\tau)}$ distributed as
% \begin{align} \label{eq:T_update}
%     T^{(t+1,\tau)} &= T^{(t,\tau)} - \eta\left(\lambda + \Lambda^{(t)}\right) T^{(t,\tau)}\\
%     &- \eta\left[  + \nabla^2_{\vec h^{(t)}}\ell\left( \vec h^{(t)}\right)T^{(t,\tau)}-\sum_{s=\tau}^{t-1} R^{(t,s)} T^{(s,\tau)}\right] \nonumber
% \end{align}
% and boundary condition $T^{(t,t)} = 0$.
Finally, the low dimensionaly projections of the weights $M^{(t)}$ will obey the relation
\begin{align} %\label{eq:M_update}
    M^{(t+1)} = M^{(t)} - \eta \alpha \mathbb{E}_{\vec h, \vec h^*}\left[\nabla_{\vec h^{(t)}}\ell\left( \vec h^{(t)}\right) \vec h^{*\top}\right]
\end{align}
% and the norm $Q^{(t)}$ will concentrate to
% \begin{equation} \label{eq:Q_def}
%     Q^{(t)} = \mathbb{E}_{\vec h}\left[\vec h^{(t)}\vec h^{(t) \top}\right]
% \end{equation}
% This set of equations differs from the ones in \citep{celentano2021highdimensional, gerbelot2023rigorous} as we can directly write the distribution of the pre-activations, in particular in relation to the teacher.
% Notice that these definitions are well-posed because of the causal structure of the gradient descent upgrades, and by extension of \eqref{eq:r_update}: the distribution of $\vec r^{(t+1)}$ is completely determined by $\{\vec r^{(\tau)}\}_{\tau \in [t]}$ and the auxiliary quantities in eqs.~(\ref{eq:lambda_def},~\ref{eq:R_def}). Iterating backwards we reach the initial condition $\vec r^{(0)}$, which is a simple function of the data distribution and the initial conditions of the weights. For  details we refer to App. \ref{sec:app:proofs}.
The procedure is explained in detail in appendix D of \citep{gerbelot2023rigorous}, and can be equivalently derived using non-rigorous field theory techniques \citep{ABUZ18}.

\subsection{Remark on the numerical integration of the DMFT equations}
DMFT is an invaluable tool in itself to probe the behaviour of gradient based algorithms. It trades the update equation over heavily coupled weights in \eqref{eq:GD_update} with the ones over completely decoupled preactivations \eqref{eq:r_update} which implies that a Monte Carlo estimation based on \eqref{eq:r_update} is going to be vastly more efficient and it's a trivially parallelisable computation.
% \et{cite something like opper 84}
Furthermore, equation \eqref{eq:r_update} is exact in limit of large $d$, which removes completely all finite size effects.
% \et{cite something, like one of the Zdeborova Biroli papers on phase retrieval}.
In practice, an implementation of the DMFT equations is extracting $n$ times using from the initial condition distribution of the practivations and iterating forward.
% \et{put eqref}. 
The Gaussian process is sampled by rotating white Gaussian noise by the LU factor of the covariance. Sampling the gaussian process is by far the costlier operation, as each time step $T$ has a complexity $\mathcal{O}(T^3)$, for a total $\mathcal{O}(T^4)$ complexity considering all the steps up to $T$.
Notice that this is a much more direct implementation than what is done in the literature \citep{Roy_2019, mignacco2020dynamical}, which usually starts with a guess for all the quantities and proceedes with a damped fixed point iteration until convergence, with an overall complexity $\mathcal{O}(mT^3)$, where $m$ is the number of fixed point iterations. While it could appear that simply iterating forward is suboptimal, it is a much more stable and reliable procedure: if you are using $n$ processes and you iterate forward, you are sure that at at each time step you have the best possible Monte Carlo estimate of your samples. 

\subsection{Details on the numerical simulations}
In all the figures the continuous lines are from the numerical integration of the DMFT equations while the dots are from a direct simulation of the gradient descent dynamics. The specific hyperparameters for each setting are near each figure.

For both we fixed the second layer weights to $\pm 1/\sqrt{p}$, as for the cases under consideration this is an equivalent choice to of Gaussian second layer weights $\mathcal{N}(0,\frac{1}{p}{\mathbbm 1}_p)$. For the DMFT integration we used a minimum of $10^6$ Monte Carlo samples in order to have accurate lined. The error bars are too small to be visualised. 
The direct simulation of the gradient descent dynamics was performed either using PyTorch or a direct implementation in Numpy. In all plots we used a minimum size $d=5000$ for the input dimension, and averaged over at least $32$ independent instances of the dynamics.

In Figure \ref{fig:main:multiple_index} we plot the overlap matrix $M^{(t)}$ projected on two different directions: the parallel to the subspace that is learned in the first step and one direction in the orthogonal of this space.
The projection operator is computed by performing explicitly the integrals in \eqref{eq:M_fist_step}

The code is made available through the following Github repository: \href{https://github.com/IdePHICS/benefit-reusing-batch}{https://github.com/IdePHICS/benefit-reusing-batch}.

\end{document}